\documentclass{article} 
\PassOptionsToPackage{table}{xcolor}
\usepackage{iclr2027_conference,times}

\usepackage{amsmath,amsfonts,bm}

\def\eqref#1{equation~\ref{#1}}

\def\1{\bm{1}}

\DeclareMathAlphabet{\mathsfit}{\encodingdefault}{\sfdefault}{m}{sl}
\SetMathAlphabet{\mathsfit}{bold}{\encodingdefault}{\sfdefault}{bx}{n}

\newcommand{\E}{\mathbb{E}}

\newcommand{\R}{\mathbb{R}}

\usepackage{array}

\usepackage{hyperref}
\usepackage{cleveref}
\usepackage{url}
\usepackage{enumitem}

\usepackage{titlesec}

\usepackage{booktabs}
\usepackage{comment}
\usepackage{siunitx}
\usepackage{graphicx}
\usepackage{subcaption}
\usepackage{amsthm}
\usepackage{thm-restate}

\definecolor{cdcorange}{HTML}{E8710A}
\definecolor{lerpteal}{HTML}{0E8A99}
\definecolor{epclosed}{HTML}{2E7D32}
\definecolor{epopen}{HTML}{6A1B9A}
\definecolor{mdheld}{HTML}{2B2F36}
\definecolor{mdref}{HTML}{999999}
\definecolor{heroorange}{HTML}{E8710A}     
\definecolor{corescaf}{HTML}{6F7681} 
\definecolor{nmpblue}{HTML}{529CE8}   
\definecolor{lidgreen}{HTML}{21A863}  

\definecolor{transitioncolor}{HTML}{E8710A}
\definecolor{gaussiancolor}{HTML}{0E8A99}
\definecolor{knncolor}{HTML}{4D4D4D}
\definecolor{knnrefcolor}{HTML}{9A9A9A}
\colorlet{winrow}{heroorange!12!white}

\definecolor{knn}{HTML}{4D4D4D}

\definecolor{heroorange}{HTML}{E87524}

\colorlet{winrow}{heroorange!12!white}

\usepackage{caption}
\usepackage{wrapfig}

\usepackage{amsmath}

\usepackage{amssymb}

\usepackage{tikz-cd}
\usepackage{tabularx}
\usetikzlibrary{fit, backgrounds}   

\newtheorem{globalcounter}{Theorem}[section]

\newtheorem{proposition}[globalcounter]{Proposition}
\newtheorem{lemma}[globalcounter]{Lemma}
\newtheorem{definition}[globalcounter]{Definition}

\titlespacing*{\paragraph}{0pt}{0.1em}{0.4em}

\usepackage[most]{tcolorbox}

\colorlet{principlebg}{heroorange!4!white}
\colorlet{principleac}{heroorange}

\colorlet{takeawaybg}{gaussiancolor!4!white}
\colorlet{takeawayac}{gaussiancolor}

\newtcolorbox{principle}[1][]{%
  enhanced,
  breakable,
  sharp corners,
  frame hidden,
  colback=principlebg,
  boxrule=0pt,
  borderline west={0.8pt}{0pt}{principleac},
  left=6pt,
  right=5pt,
  top=3pt,
  bottom=3pt,
  before skip=6pt,
  after skip=6pt,
  boxsep=0pt,
  #1
}

\newtcolorbox{takeaway}[1][]{%
  enhanced,
  breakable,
  sharp corners,
  frame hidden,
  colback=takeawaybg,
  boxrule=0pt,
  borderline west={0.8pt}{0pt}{takeawayac},
  left=6pt,
  right=5pt,
  top=3pt,
  bottom=3pt,
  before skip=6pt,
  after skip=6pt,
  boxsep=0pt,
  #1
}

\newcounter{ManuCounter}

\newcounter{IoloCounter}

\newcounter{MichaelCounter}

\colorlet{reflink}{heroorange!80!black}

\hypersetup{
    colorlinks=true,
    linkcolor=reflink,
    citecolor=reflink,
    urlcolor=reflink
}

\title{Intrinsic Geometry Matching: learning\\ observation-equivariant process geometry}

\author{Manuel Lecha  \\
AITHYRA\\
\texttt{mlecha@aithyra.ac.at} \\
\And
Michael Bronstein \\
University of Oxford, \\
AITHYRA \\
\And
Iolo Jones \\
University of Oxford, \\
AITHYRA \\
\texttt{iolo.jones@cs.ox.ac.uk} \\
}

\iclrfinalcopy

\begin{document}

\maketitle
\fancyhead{}
\renewcommand{\headrulewidth}{0pt}

\begin{abstract}
Real data are often arbitrary, indirect observations of an unknown underlying process, and those observations can distort its geometry.
Existing geometric methods equip data with the induced geometry of the ambient space, but this entangles the true geometry of the process with the distortion of the observation.
To address this, we introduce \textbf{Intrinsic Geometry Matching (IGM)}: a scalable approach to learn the intrinsic geometry of the underlying process.
Specifically, we learn the \textit{carré du champ} operator of the process, allowing us to recover its diffusion geometry.
Our method uses the fact that data often come with natural perturbations, like steps in a time series, to recover the intrinsic carré du champ while transforming naturally with changes in observation: it is \textbf{observation-equivariant}.
Furthermore, since this equivariance relates the geometry of different spaces, we can train a model using additional sources of data, even when that data lies in another space.
We evaluate IGM on molecular dynamics, image, and video data and find better performance and scaling than existing neural and graph-based geometric tools.
\end{abstract}

\section{Introduction}
\label{sec:intro}

Real data are often arbitrary, indirect observations of an unknown underlying process.
In biology, cells, tissues, genomes, and proteins evolve and adapt, but can only be observed through noisy and often destructive experiments. 
Climate and weather systems have complex, chaotic structure, which is only partially captured by a sparse network of sensors and satellites.
These processes are governed by physical laws and constraints, so the data we collect have \textit{geometry}.
This geometry can tell us about the structure of the underlying system, but the geometry we observe is distorted by the nature of the observation.
For example, the distance between two satellite images as vectors is not the same as the physical `distance' between the weather systems they observe.
As the frontier of scientific AI moves from representing states to representing processes, the corresponding geometric challenge is to disentangle the true \textbf{process geometry} from the arbitrary \textbf{observation geometry}.

Scientific modelling is fundamentally concerned with systems under change: trajectories, perturbations, constraints, interventions, and controls. 
The mathematical language of change is \textbf{calculus}, which works with notions of derivative. 
On nonlinear manifolds, this is generalised by \textbf{Riemannian geometry}, whose basic primitive is the Riemannian metric. 
This picture provides a useful geometric intuition for machine learning as the \textit{Manifold Hypothesis}, which posits that high-dimensional data actually lie on a low-dimensional manifold \citep{tenenbaum2000global, roweis2000nonlinear, bengio2012representation}.
In reality, it is mostly just a metaphor: many complex scientific systems exhibit varying dimensionality, branches, singularities and discrete–continuous structure, so a smooth manifold of fixed dimension is too rigid an operational model.

\begin{figure}
    \centering
    \includegraphics[width=1\linewidth]{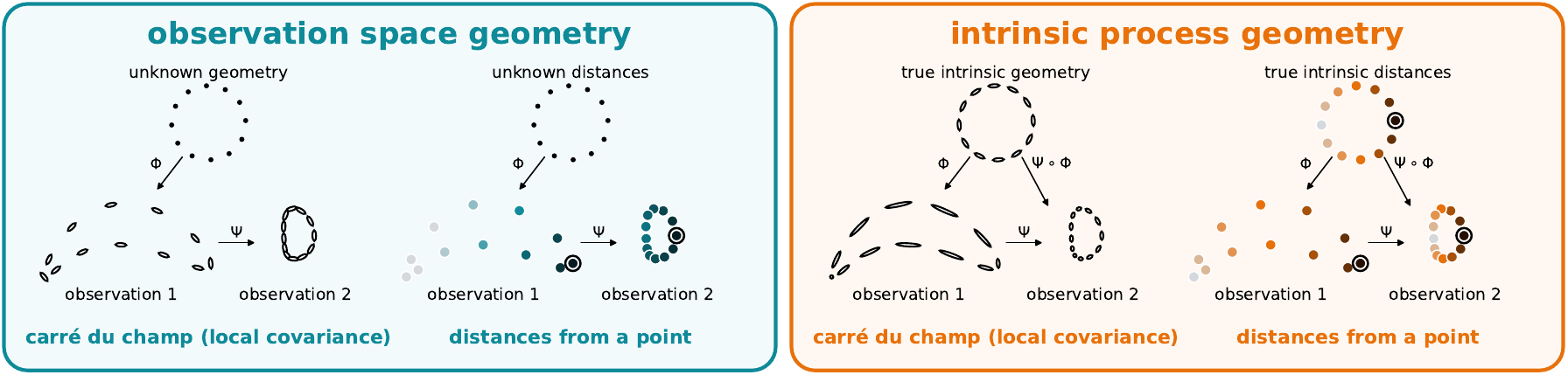}
    \caption{
    \textbf{Observation equivariance.}
    Standard geometric tools give data the \textcolor{gaussiancolor}{\textbf{geometry of the observation space (left)}}, so depend on the observation map.
    This affects the carré du champ, which measures local covariance and describes the data geometry, like the distances.
    These quantities cannot be reconciled across different observations.
    We learn the \textcolor{transitioncolor}{\textbf{intrinsic geometry of the underlying process (right)}}.
    The carré du champ measures intrinsic covariance and is observation-equivariant: it transforms naturally when we change observation.
    Distances are intrinsic and consistent.}
    \label{fig:illustration_fig_1}
    \vspace{-1.2em}
\end{figure}

To properly model this heterogeneous structure, we need a geometric theory that is both more flexible and computable from data.
These requirements point to \textbf{diffusion geometry} \citep{jones2024diffusiongeometry, bakry2013analysis}, which extends Riemannian constructions to irregular manifold-like structures via the calculus of Markov diffusion operators.
Since it is defined in terms of a process, it provides a natural model for the \textit{process geometry} underlying observed data.
Its metric-like analogue is the carré du champ (CDC) operator \citep{bakry1985diffusions, kunita1969absolute}, which measures the local directions of variability within the space, and can be used to recover its calculus and geometry.
Crucially, the CDC transforms naturally under observation: when an underlying, latent process is observed by a map into $\R^D$, the CDC pushes forward to $\R^D$ by conjugation with the derivative of the map.
We call this relationship \textbf{observation equivariance}: the geometry is still intrinsic, and the observation map can be exchanged for another in a predictable way (\cref{fig:illustration_fig_1}, right).
If we could find a way to learn this natural CDC from data, we would have successfully disentangled the process geometry from the observation geometry via a clean algebraic factorisation.

Standard geometric methods treat the data as an unordered point cloud with no additional structure, and equip it with the geometry of the ambient space it is observed in \citep{lebanon2012learning, hauberg2012geometric, jones2026computingdiffusiongeometry}.
On a manifold, this means using the induced ambient Riemannian metric, so geometric measurements, like geodesic distances, are observation-dependent (\cref{fig:illustration_fig_1}, left).

While this problem can be partially addressed with graph-based methods when the Jacobian of the observation map is known or can be estimated from dense local samples \citep{singer2008non, singer2009detecting,talmon2013empirical}, this information is rare in practice.
Graph methods also suffer from drawbacks like poor scalability and generalisation, and the curse of dimensionality, which can be overcome with deep learning.

Our approach overcomes these challenges by exploiting the fact that many real data sets have additional semantic structure in the form of natural \textit{perturbations} or \textit{augmentations}, which define their diffusion geometry.
For example, the steps in a time series represent a \textit{genuine sample from the underlying process}.
We show that the true CDC, and hence the true geometry, can be inferred from these perturbations, and provide a scalable algorithm for training a neural network to learn it.

\paragraph{Main contributions:}
\begin{enumerate}[label={C\arabic*.}, nosep]
    \item When data come from an unknown, latent stochastic process, we describe the intrinsic geometry in terms of the carré du champ operator (CDC).
    This CDC is \textit{intrinsic to the process} and \textit{observation-equivariant}.
    \item  We present \textbf{Intrinsic Geometry Matching (IGM)}: a scalable loss for learning the intrinsic CDC from data perturbations, and show that it is the unique observation-invariant loss.
    \item We exploit observation equivariance to train IGM from data in different spaces.
    \item We evaluate IGM on a range of high-dimensional datasets, demonstrating better performance, scaling properties, and training stability than existing methods.
\end{enumerate}

\section{Background}
\label{sec:background}

\subsection{Theories of differential geometry}

Let $\mathcal M$ be a smooth $d$-dimensional manifold. 
A \textbf{Riemannian metric} $g$ assigns, at each point $x\in\mathcal M$, an inner product $g_x$ on the tangent space $T_x\mathcal M$.
Its inverse $g_x^{-1}$ is an inner product on the cotangent space $T^*_x\mathcal M$ called the \textbf{cometric}.
These allow calculus objects to be defined on manifolds with curvature.
If $\Psi: \mathcal{M} \to \mathcal{N}$ is a smooth map with derivative $D_x\Psi$ at $x\in\mathcal{M}$, we can use $D_x\Psi$ to transport the metric and cometric back and forth between $\mathcal{M}$ and $\mathcal{N}$.
The metric $g_\mathcal{N}$ \emph{pulls back} to $(D_x\Psi)^\top g_\mathcal{N}(\Psi(x))\,(D_x\Psi)$, and the cometric $g_\mathcal{M}^{-1}$ \emph{pushes forward} to $(D_x\Psi) g^{-1}(x)\,(D_x\Psi)^\top$. 

\textbf{Diffusion geometry} extends Riemannian geometry to non-manifolds using the Bakry--\'Emery theory of Markov diffusion operators \citep{jones2024diffusiongeometry, bakry1985diffusions}. 
On a manifold $\mathcal{M}$, the geometry and calculus are captured by the metric, which satisfies
$
2g(\nabla f, \nabla h) = \Delta(fh) - f\Delta h - h \Delta f
$,
where $\Delta$ is the Laplace-Beltrami operator.
Since $\Delta$ is the generator of Brownian motion on $\mathcal{M}$, this identity relates the geometry to a diffusion process.
Diffusion geometry exploits this relationship to define geometry and calculus in more general settings, replacing the Brownian motion with a general Markov diffusion.
Let $(X_\tau)_{t\geq0}$ be a Markov process on a measure space $\mathcal X$.
If $f:\mathcal{X}\to\R$, its \textit{transition semigroup} is the operator $f \mapsto P_t f(x) := \mathbb E[f(X_\tau)\mid X_0=x]$, and its \textit{infinitesimal generator} is $\mathcal L f(x) := \lim_{t\downarrow0} t^{-1}(P_t f(x)-f(x))$. 
We define the {\bf \emph{carr\'e du champ} (CDC) operator} 
\begin{align}
\Gamma_{\mathcal L}(f,h)(x) 
\label{eq:cdc-definition}
&:= \tfrac12\left( \mathcal L(fh)(x) -f(x)\mathcal Lh(x) -h(x)\mathcal Lf(x) \right) \\
\label{eq:cdc-cov-definition}
&= \lim_{\tau\to0}\tfrac{1}{2\tau} \mathbb E[(f(X_\tau) - f(X_0))(h(X_\tau) - h(X_0))\mid X_0=x],
\end{align}
so that, when $\mathcal{L} = \Delta$ on a manifold, $\Gamma_\Delta(f,h) = g(\nabla f, \nabla h)$ \citep{kunita1969absolute,bakry1985diffusions}.
In local coordinates $x_i$, the matrix $[\Gamma_\mathcal{L}(x_i,x_j)]_{ij} = [g(\mathrm{d}x_i,\mathrm{d}x_j)]_{ij}$ is exactly the \textit{cometric}.
For diffusion processes it satisfies the \textit{diffusion property} (chain rule): if $\Phi$ is differentiable, then
$\Gamma_{\mathcal L}\!\left(\Phi(f_1,\ldots,f_k),h\right) = \sum_{i=1}^k \partial_i\Phi(f_1,\ldots,f_k)\, \Gamma_{\mathcal L}(f_i,h)$  \citep{bakry2013analysis}.

\subsection{Forward and backward Markov processes}

\label{subsec:fbtrf}

The methods we discuss here all use a notion of local kernel or perturbation, which can be unified in the language of \textit{Markov processes}.
Let $\mu$ be a probability measure on a state space $\mathcal X$, and let $P^+(x,\mathrm{d}y)$ be a \textbf{Markov kernel}: a probability measure for perturbations $y$ of a sample $x$.
We call $P^+$ the \textbf{forward process}.
If $X\sim\mu$ and $Y \sim P^+(X,\cdot)$, this defines a joint distribution $\pi(\mathrm{d}x,\mathrm{d}y)$ and marginal $\nu(\mathrm{d}y)$ for $Y$.
Together, these define the \textbf{backward process} $P^-(y, \mathrm{d}x)$ by
\(
\mu(\mathrm{d}x)\,P^+(x,\mathrm{d}y) = \pi(\mathrm{d}x,\mathrm{d}y) = \nu(\mathrm{d}y)\,P^-(y,\mathrm{d}x),
\)
which represents the conditional distribution of $X$ given $Y=y$, so $P^-$ is the $\mu$-dependent \textit{posterior} given the \textit{prior} $P^+$.

\subsection{Computing diffusion geometry}

Classical geometric methods use \textbf{graphs} and \textbf{kernels} to measure the affinity between points.
For example, if $k_\tau$ is a symmetric, positive kernel with bandwidth $\tau$, then the weighted graph Laplacian of \textit{Laplacian eigenmaps} \citep{belkin2003laplacian} and \textit{diffusion maps} \citep{coifman2006} is given by
\(
\Delta_\tau(f)(y)
= \tau^{-1} \sum_{i=1}^N
p_\tau(x_i,y)(f(y) - f(x_i))
\)
where $p_\tau(x,y) = k_\tau(x,y)/\sum_{j=1}^N k_\tau(x_j, y)$.
On a manifold with measure $\mu = \rho \mathrm{d}\mathrm{vol}$, $\Delta_\tau(f)$ converges to $\Delta(f) - g(\nabla 
\log\rho, \nabla f)$ as $\tau \to 0$ \citep{coifman2006}.
Laplacian estimates can be used to estimate the CDC: if $\Delta_\tau(f)$ is plugged into \eqref{eq:cdc-definition}, we obtain the covariance-like formula
\begin{equation}
\label{eq:poseterior_cdc_formula}
\Gamma_\tau(f,h)(y) 
= \frac{1}{2\tau} \sum_{i=1}^N p_\tau(x_i,y)(f(y) - f(x_i)) (h(y) - h(x_i)),
\end{equation}
with convergence $\Gamma_\tau(f,h) \to \Gamma(f,h)$ as $\tau \to 0$.
This has been used for kernel-based computation of diffusion geometry \citep{jones2024diffusiongeometry, jones2026manifold, berry2020spectral}.
For a more accurate and robust estimator, we can estimate the CDC as the true covariance
\(
\Gamma_\tau(f,h)(y) 
=  (2\tau)^{-1} \text{Cov}[ f(X), h(X) : X \sim  p_\tau(\cdot, y)]
\)
\citep{bamberger2026carre,jones2026computingdiffusiongeometry}.
However, these kernel-based approaches have all the drawbacks of kernel machine learning: poor scalability, the curse of dimensionality, and poor generalisation. 
This has motivated a range of new geometric methods based on \textbf{deep learning}.
Neural networks can be trained to estimate kernel-based expressions by recognising that kernel functions, like the Gaussian, are \textit{likelihood functions}, whose closed-form expressions can be replaced by Monte Carlo averages over samples.
By finding an appropriate sample-wise loss, we can estimate the same objects without ever evaluating the kernel.

This idea is central to diffusion models, where \textit{score matching} \citep{song2020score, song2019generative} is a kernel-free estimate of \textit{Laplacian eigenmaps}.
In this special case of Gaussian noise, we can recover the geometry for free, since the covariance-based CDC formula above is just the Jacobian of the score matching denoiser \citep{manor2024posterior, kharitenko2026landingscoreriemannianoptimization}, and is also related to the second-order score \citep{meng2021estimating}.
The first method to learn the CDC in full generality was \textit{Riemannian metric matching (RMM)} \citep{bamberger2026riemannianmetricmatchingscalable}, which can target either of the CDC estimates above, and exploits a low-rank parametrisation to allow scaling to high-dimensional spaces. \looseness=-1

\section{Intrinsic geometry of Observed Processes}

\label{sec:process-geometry}

Geometric methods involve a kernel or prior, and make estimates involving the posterior.
We ask: if the prior is more naturally adapted to the data, is the learned geometry more meaningful?
In the extreme: if the prior is genuinely \textit{intrinsic}, like sampling steps in a time series, can we learn the true intrinsic geometry?
We now define what `intrinsic geometry' means for data.

\label{subsec:naturality}

We will use the language of diffusion geometry \citep{jones2024diffusiongeometry}, and model the underlying process geometry as a diffusion Markov triple $(\mathcal X, \mu, \Gamma)$, where $\mathcal X$ is a measure space with measure $\mu$, and $\Gamma$ is the carré du champ (CDC) operator \citep{bakry2013analysis}. 
We make observations in $\R^D$ via some map $\Phi : \mathcal X \to \R^D$, so the data we see are samples $X = \Phi(\alpha)$ for $\alpha \sim \mu$, and we hope to recover the underlying process geometry.
However, the observation map $\Phi$ is typically arbitrary: it is not an \textit{isometric embedding} into $\R^D$, and can distort the structure.

\paragraph{Recovering the intrinsic CDC in observation space.}
Most geometric methods treat the observations as the \textit{only information available}, and equip it with the \textbf{geometry of the ambient space} $\R^D$.
For manifold data, this means giving the observed manifold the \textit{induced} Riemannian metric (the dot product in $\R^D$).
This geometry depends on $\Phi$, and is only correct when $\Phi$ is isometric.
Alternatively, there is a \textbf{natural geometry} available (\cref{fig:illustration_fig_1}).
In the manifold case, this is given by the pushforward cometric $(D_x\Phi) g^{-1}_x (D_x\Phi)^\top$.
Given a Markov triple, we can define the \textit{pushforward CDC} on $\Phi_\sharp \mu$.
If $\Phi$ is injective and if $f,h : \R^D \to \R$ and $x = \Phi(\alpha)$, then
\begin{equation}
\label{eq:pushforward_cdc_def_injective}
\Phi_\sharp \Gamma(f,h) (x) 
:= \Gamma(\Phi^*f, \Phi^*h)(\alpha),
\end{equation}
where $\Phi^*f = f \circ \Phi$ denotes the pullback.
If $\Phi$ is not injective, so  there are multiple $x$ for which $y = \Phi(x)$, we can extend the definition to a \textit{posterior average} over $x$, i.e.
\begin{equation}
\label{eq:pushforward_cdc_def_non_injective}
\Phi_\sharp \Gamma(f,h) (x) := \mathbb{E}\big[\Gamma(\Phi^*f, \Phi^*h)(\alpha) \mid \Phi(\alpha) = x \big]
\end{equation}
\citep[Sec. 1.15.4]{bakry2013analysis}.
Even though we remain in the observation space $\R^D$, with no knowledge of $\mathcal X$ or $\Phi$, we can use $\Phi_\sharp \Gamma$ to still work with the true geometry.
For example, if geodesics (shortest paths) are computed in $\R^D$ using $\Phi_\sharp \Gamma$, they will be the pushforward of the true geodesics on $\mathcal{X}$.
This definition is compositional: if $\Psi : \R^D \to \R^{D'}$ is differentiable, then
\(
(\Psi\circ\Phi)_\sharp \Gamma
= \Psi_\sharp\big[\Phi_\sharp \Gamma \big].
\)

\paragraph{The CDC of coordinate functions is sufficient and observation-equivariant.}
At a point $y \in \R^D$, let $G_\Phi(y)$ be the $D \times D$ matrix of the CDC of the coordinate functions $x_i$, so
\begin{equation}
\label{eq:G_matrix_def}
G_\Phi(x) 
= [\Phi_\sharp \Gamma(x_i, x_j)(x)]_{ij} 
= \mathbb{E}\big[\Gamma(\Phi_i, \Phi_j)(\alpha) \mid \Phi(\alpha) = x \big]_{ij}.
\end{equation}
Notice that $G_\Phi$ is \textit{sufficient to describe the whole CDC}, and hence all the geometry: if $f,h : \R^D \to \R$, then $\Phi_\sharp\Gamma(f,h)(x) = (D_xf) G_\Phi(x) (D_xh)^\top$.
If the observation map $\Phi$ is replaced by the composition $\Psi \circ \Phi$ with an injection $\Psi$, and $y = \Psi(x)$ then the chain rule for $\Gamma$ gives
\begin{equation}
\label{eq:conjugation_pushfoward_cdc}
G_{\Psi\circ\Phi}(y) 
= (D_x\Psi) G_\Phi(x) (D_x\Psi)^\top,
\end{equation}
a property we call \textbf{observation equivariance}: the CDC transforms naturally under changes in observation.
This is desirable because it means that the geometry is measuring \textit{only something intrinsic to the system}, independent of how it was observed.
This defines the following organising principle.

\begin{principle}
\textbf{Principle 1: intrinsic geometry is observation-equivariant.}
Observations are usually non-canonical, so the learned geometry should be \textit{observation-equivariant}.
Equivalently, the CDC should be the pushforward of the \textit{intrinsic} CDC to the observation space.
\end{principle}

\paragraph{Approximating $G_\Phi$ with the Markov process.}

This ideal CDC $G_\Phi$ is defined as the infinitesimal limit of a Markov process as time $\tau \to 0$ (\eqref{eq:cdc-cov-definition}), and our method will approximate this limit at small $\tau > 0$.
Let $(\alpha_\tau)_{\tau\ge0}$ be a Markov process on $\mathcal{X}$ with CDC $\Gamma$, so we observe the \textbf{pushforward process} $X_\tau = \Phi(\alpha_\tau)$ with CDC $\Phi_\sharp \Gamma$.
We define a covariance-style estimator
\begin{equation}
\label{eq:G_matrix_process}
G_{\tau}(x)
= \tfrac{1}{2\tau}\mathbb{E}
\big[(X_\tau^i-X_0^i)(X_\tau^j-X_0^j)
\mid X_0 = x \big]_{ij},
\end{equation}
so $\lim_{\tau\downarrow0} G_{\tau} = G_\Phi$.
We can also estimate $G_\Phi$ as the standard mean-centred covariance
\begin{equation}
\label{eq:G_matrix_cov_process}
G_{\tau,\mathrm{Cov}}(x)
= \tfrac{1}{2\tau}\mathrm{Cov}
\big[X_\tau^i,X_\tau^j
\mid X_0 = x \big]_{ij}
= \tfrac{1}{2\tau}\mathbb{E}
\big[(X_\tau^i-\mathbb{E}[X_\tau]^i)(X_\tau^j-\mathbb{E}[X_\tau]^j)
\mid X_0 = x \big]_{ij}.
\end{equation}
If the underlying diffusion process has a \textit{drift} $b$, then the mean $\mathbb{E}[X_\tau | X_0 = x]$ learns $X_0 + \tau b$, and centring removes the drift bias from the estimate and reduces the variance of the CBMM loss.
By approximating either $G_\tau$ or $G_{\tau,\mathrm{Cov}}$, we can estimate $G_\Phi$ and recover the full CDC $\Phi_\sharp \Gamma$.

\section{Intrinsic Geometry Matching}

\label{sec:igm}

\subsection{Conditional training with Bregman divergences}
\label{subsec:igm-cond-training}


We now describe a procedure for training a neural network $G^\theta_\tau$ to learn $G_\tau$ or $G_{\tau,\mathrm{Cov}}$.
Suppose we have samples of data $X \sim \mu$ and perturbations $Y \sim P^\tau(X, \mathrm{d}y)$.
To train $G_\tau^\theta$, we need to pick a matrix loss function, and a natural choice is a \textit{Bregman divergence}.
\begin{definition} \label{def:bregman_div}
Let $\mathcal A \subseteq \mathbb S^D$ be convex and $\omega: \mathcal A \to\mathbb R$ be strictly convex and differentiable.
The \textbf{Bregman divergence} on $\mathcal{A}$ induced by $\omega$ is $\mathcal{D}_\omega(A,B) = \omega(A)-\omega(B) -\left\langle\nabla\omega(B),A-B\right\rangle$.
\end{definition}

For example, the Frobenius loss $\|A - B\|^2$ is the Bregman divergence induced by $\omega(X) = \|X\|^2$.
Ideally, we would pick $\mathcal{D}(\cdot, \cdot)$, and regress $G_\tau^\theta$ over $\tau \sim \mathcal{U}[0,\tau_\text{max}]$ and $X\sim\mu$ with
\[
\mathcal L_{\mathrm{BMM}}(\theta)  = \mathbb{E}_{\tau, X} \big[ \mathcal{D}(G_\tau(X), G_\tau^\theta(X)) \big],
\]
which we call the \textbf{Bregman metric matching (BMM)} loss.
Evaluating this loss is impossible as we do not know $G_\tau$ in advance, so we introduce the tractable alternative using $Y \sim P^\tau(X, \mathrm{d}y)$
\[
\mathcal L_{\mathrm{CBMM}}(\theta) = \mathbb{E}_{\tau, X, Y \mid X} \big[ \mathcal{D}(\tfrac{1}{2\tau}(Y-X)(Y-X)^\top, G_\tau^\theta(X)) \big],
\]
which we call the \textbf{conditional Bregman metric matching (CBMM)} loss.
The CBMM loss can be cheaply evaluated given only samples $X$ and $Y$, and is trivially parallelisable over those samples for distributed training.
Crucially, the simple and tractable $\mathcal L_{\mathrm{CBMM}}$ is equal to $\mathcal L_{\mathrm{BMM}}$ up to a constant independent of $\theta$, so $\nabla_\theta \mathcal L_{\mathrm{CBMM}} = \nabla_\theta \mathcal L_{\mathrm{BMM}}$ and training with either loss is equivalent.

\begin{proposition}[Conditional Bregman metric matching]
\label{prop:cbmm}
We have, for sufficiently regular $\mathcal{D}$,
\begin{equation}
\label{eq:bmm-cbmm-equivalence}
\mathcal L_{\mathrm{CBMM}}(\theta) = \mathcal L_{\mathrm{BMM}}(\theta) + \mathbb E\!\left[\mathcal{D}(\tfrac{1}{2\tau}(Y-X)(Y-X)^\top,G_\tau(X))\right]
\end{equation}
if and only if $\mathcal{D}$ is a Bregman divergence \citep[follows from Theorems 1 \& 4]{banerjee2005optimality}.
\end{proposition}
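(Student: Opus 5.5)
Both directions come down to one fact. Conditional on $(\tau,X)$, the random matrix $Z := \tfrac{1}{2\tau}(Y-X)(Y-X)^\top$ has conditional mean exactly $G_\tau(X)$, by \eqref{eq:G_matrix_process}. For the centred variant $G_{\tau,\mathrm{Cov}}$, we would instead take $Z$ built from $Y-\mathbb E[Y\mid X]$. The proposition is then the statement that, among divergences, Bregman divergences are exactly those for which the conditional mean is the minimiser and the excess loss decomposes additively. For the ``if'' direction we compute directly. For the ``only if'' direction we reduce to the characterisation of \citet{banerjee2005optimality}.

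\textbf{If.} Let $\mathcal D=\mathcal D_\omega$. Fix $(\tau,X)$ and write $B=G^\theta_\tau(X)$ and $M=G_\tau(X)=\mathbb E[Z\mid \tau,X]$. Expanding \cref{def:bregman_div} gives the three-point identity
\[
\mathcal D_\omega(Z,B)=\mathcal D_\omega(Z,M)+\mathcal D_\omega(M,B)+\langle\nabla\omega(M)-\nabla\omega(B),\,Z-M\rangle ,
\]
which is a purely algebraic cancellation of the $\omega$ terms. Now take the conditional expectation given $(\tau,X)$. The matrices $M$ and $B$ are $(\tau,X)$-measurable, so the inner-product term becomes $\langle\nabla\omega(M)-\nabla\omega(B),\,\mathbb E[Z\mid\tau,X]-M\rangle=0$. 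Taking the outer expectation over $\tau\sim\mathcal U[0,\tau_{\max}]$ and $X\sim\mu$ then yields \eqref{eq:bmm-cbmm-equivalence}. The regularity we need is integrability: $\mathbb E\,\mathcal D_\omega(Z,M)<\infty$ and $\mathbb E\|Z\|<\infty$, so that linearity of conditional expectation applies. We also need $Z$ and $M$ to lie in $\mathcal A$. This holds, for instance, if $\mathcal A$ is the closed PSD cone, since $Z$ is rank-one PSD and $M$ is its average. For divergences like log-det, where $\omega$ is only differentiable on the open cone, some care is needed, and we state this as the regularity assumption.

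\textbf{Only if.} Suppose a divergence $\mathcal D$ (nonnegative, vanishing exactly on the diagonal, and regular in the sense of \citet{banerjee2005optimality}, e.g.\ $C^1$ in its second argument) satisfies \eqref{eq:bmm-cbmm-equivalence} for every admissible data distribution. Since $\mathcal L_{\mathrm{BMM}}\ge 0$, with equality iff $G^\theta_\tau=G_\tau$, the identity shows that for any law of $Z$ given $(\tau,X)$ the conditional mean minimises $B\mapsto\mathbb E\,\mathcal D(Z,B)$. We must check that the class of realisable conditional laws of $Z$ is rich enough. Choosing $P^\tau(x,\cdot)$ freely at a single point and a single $\tau$ realises any finitely supported distribution of $Y-X$. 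Hence $Z$ ranges over arbitrary finite mixtures of rank-one PSD matrices, with arbitrary mean in the PSD cone. Theorem~4 of \citet{banerjee2005optimality} then gives that a loss for which the mean is always the unique minimiser must be a Bregman divergence $\mathcal D_\omega$ for some strictly convex $\omega$. Theorem~1 supplies the converse that we reproved above.

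\textbf{Main obstacle.} The hard step is making the ``only if'' direction rigorous. Banerjee et al.\ state their theorem on an open convex domain of $\mathbb R^d$ with arbitrary distributions supported there. Our $Z$ lives on the boundary of the PSD cone, since it is rank one. We would first identify $\mathbb S^D$ with $\mathbb R^{D(D+1)/2}$. The cleanest route is to note that their proof only uses two-point distributions $\{A_1,A_2\}$, perturbed around a mean. We would show that the Bregman form is forced on the convex hull of attainable $Z$'s, which is all of the PSD cone, and then extend by the assumed continuity to $\mathcal A$. If this extension fails, we would weaken the claim to ``iff, among divergences defined on the PSD cone,'' which is what the parenthetical citation suggests.
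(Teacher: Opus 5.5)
Your ``if'' direction is correct. The three-point identity and the vanishing of the cross term under $\mathbb E[\,\cdot\mid\tau,X]$ are exactly Theorem~1 of \citet{banerjee2005optimality}. The paper gives nothing beyond its citation of Theorems~1 and~4, so your overall route is the intended one.

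The gap is in the ``only if'' direction. You pass from \eqref{eq:bmm-cbmm-equivalence} to the weaker statement that $B\mapsto\mathbb E[\mathcal D(Z,B)\mid\tau,X]$ is uniquely minimised at $G_\tau(X)$. But $Z=\tfrac{1}{2\tau}(Y-X)(Y-X)^\top$ always has rank at most one, so this weaker statement only constrains $\mathcal D(A,\cdot)$ for rank-one $A$. Your repair (force the Bregman form on the convex hull of attainable $Z$'s, then extend by continuity) conflates the two arguments. The PSD cone is where the \emph{second} argument, the mean, ranges. The first argument stays on the rank-$\le 1$ set, which has dimension $D$ inside $\mathbb S^D$ of dimension $D(D+1)/2$, so continuity cannot reach $\mathcal D(A,\cdot)$ for $\operatorname{rank}A\ge 2$.

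In fact the minimiser property does not characterise Bregman divergences here. Let $\rho(A)=\tfrac12[(\operatorname{tr}A)^2-\operatorname{tr}(A^2)]$; on the PSD cone it is nonnegative and vanishes exactly at rank $\le 1$. Set $\mathcal D'(A,B)=(1+\rho(A))\|A-B\|^2$. This is a smooth divergence with $\mathbb E\,\mathcal D'(Z,B)=\mathbb E\|Z-B\|^2$ for every admissible $Z$, so the conditional mean is always its unique minimiser. Yet $\mathcal D'(A,2I)-\mathcal D'(A,I)=(1+\rho(A))(3D-2\operatorname{tr}A)$ is not affine in $A$, so $\mathcal D'$ is not Bregman. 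Correspondingly, $\mathcal D'$ violates \eqref{eq:bmm-cbmm-equivalence} whenever $G_\tau(X)$ has rank $\ge 2$ and $G^\theta_\tau(X)\neq G_\tau(X)$.

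The missing idea is to keep the identity itself. Its $\mathcal L_{\mathrm{BMM}}$ term evaluates $\mathcal D$ at the full-rank first argument $G_\tau(X)$, which is exactly the information you threw away. One workable route has two steps.

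(a) Adapt the Banerjee argument to rank-one-valued laws to get $\mathcal D(Z,B)=c(Z)-\omega(B)-\langle\nabla\omega(B),Z-B\rangle$ for rank-one $Z$ and $B\succ 0$. For interior $M$, the first-order condition $\mathbb E[\nabla_B\mathcal D(Z,M)]=0$ holds for all rank-one-valued laws with mean $M$. Every zero-mass, zero-first-moment signed measure on the rank-one set is a multiple of a difference of two such laws, so $\nabla_B\mathcal D(Z,M)$ is affine in $Z$ there. The conditions for this to be a gradient field in $B$ are then affine in $Z$ and hold on a cone spanning $\mathbb S^D$, hence everywhere.

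(b) For any $M\succ 0$, take a rank-one-valued $Z$ with mean $M$, e.g.\ $Z=D\lambda_iu_iu_i^\top$ with probability $1/D$ from $M=\sum_i\lambda_iu_iu_i^\top$. Then read off $\mathcal D(M,B)=\mathbb E[\mathcal D(Z,B)]-\mathbb E[\mathcal D(Z,M)]=\mathcal D_\omega(M,B)$, where the unknown $c(Z)$ cancels.

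Note also that your plan to use only two-point laws fails in step (a). Two rank-one atoms have a mean of rank at most $2$, so for $D\ge 3$ they never reach an interior point $M\succ 0$; you need laws with at least $D$ atoms.
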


This shows that we can learn $G_\tau$ using \textit{only samples from $P^\tau$}, without prior knowledge of a kernel function, so can apply our method to `black box' data-driven perturbations.
When $P^\tau$ represents intrinsic transitions of an unknown process, we can still recover the intrinsic geometry.

\begin{principle}
\textbf{Principle 2: learn the intrinsic geometry from data-driven perturbations.}
Data-driven perturbations (like steps in a time series) represent \textit{oracle access to the intrinsic process}, and can be used to learn the intrinsic geometry despite the arbitrary observation.
\end{principle}

\paragraph{Mean-centred covariance.}

As stated, CBMM learns the uncentred covariance $G_\tau$.
To instead learn the mean-centred covariance $G_{\tau, \mathrm{Cov}}$, we need a model $m_\tau^\phi(x)$ for the mean $\mathbb{E}[X_\tau \mid X_0 = x \big]$, which can be trained with
$
\mathcal L(\phi) = \mathbb{E}_{\tau, X, Y \mid X} \| Y - m_\tau^\phi(X) \|^2.
$
We then define the conditional BMM loss
\[
\mathcal L_{\mathrm{CBMM}}^{\mathrm{Cov}}(\theta) = \mathbb{E}_{\tau, X, Y \mid X} \big[ \mathcal{D}(\tfrac{1}{2\tau}(Y-m_\tau^\phi(X))(Y-m_\tau^\phi(X))^\top, G_\tau^\theta(X)) \big],
\]
which, by the same argument as above, is equivalent to $\mathbb{E}_{\tau, X} [ \mathcal{D}(G_{\tau,\mathrm{Cov}}(X), G_\tau^\theta(X))]$.

\paragraph{Learning the backward process CDC.}
We have defined the BMM and CBMM losses as estimators for the forward process CDC $G_\tau$, which is natural when $P^\tau$ gives samples from an underlying Markov process.
However, most methods for computational geometry and generative modelling use an uninformative forward kernel $P^\tau$ (e.g. Gaussian), and the posterior $P^{-\tau}$ models the structure.
We can learn the backward CDC $G_{-\tau}$ by simply replacing $X \sim \mu$ with $Y \sim \nu$ (\cref{subsec:fbtrf}) in $L_{\mathrm{BMM}}$, and $G_\tau(X)$ with $G_\tau(Y)$ in $L_{\mathrm{CBMM}}$.
In this case we recover the RMM loss of \cite{bamberger2026riemannianmetricmatchingscalable} when $Y\mid X=x\sim\mathcal{N}(x,\tau I)$ and $\mathcal{D}(\cdot,\cdot)$ is Frobenius.

\subsection{LogDet is the natural Bregman divergence for the CDC}
\label{subsec:logdet_naturality}

We can use $L_{\mathrm{CBMM}}$ with any Bregman divergence $\mathcal{D}$ to learn $G_\tau$.
Since $\mathcal{D}$ is defined on the image of $\Phi$, which we presume to be arbitrary, we can ask whether there is a $\mathcal{D}$ that is itself observation-invariant.
If $J$ is a $D \times D$ matrix representing a linear change in observation, then $J$ transforms the CDC by conjugation, so an ideal $\mathcal{D}$ would satisfy $\mathcal{D}(JAJ^\top,JBJ^\top)=\mathcal{D}(A,B)$.
If $\mathcal{D}(A,B) = \|A-B\|^2$, this holds only when $J$ is orthogonal.
For general $J$, this condition specifies $\mathcal{D}$ uniquely.

\begin{wrapfigure}{r}{0.3\linewidth}
    \vspace{-1em}
    \centering
    \includegraphics[width=1\linewidth]{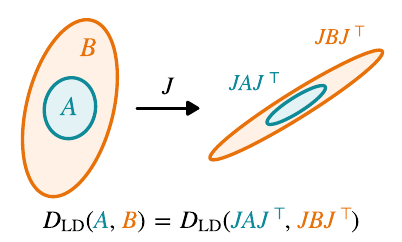}
    \vspace{-2em}
    \caption{Observation invariance of the LogDet divergence.}
    \label{fig:identifiability}
    \vspace{-1em}
\end{wrapfigure}

\begin{restatable}[Naturality of the LogDet divergence]{proposition}{logdetnaturality}
\label{prop:ld-naturality}
Let $\mathcal{D}$ be a Bregman divergence on the space of $D \times D$ positive definite matrices.
Then the following are equivalent:
\begin{enumerate}[nosep]
    \item $\mathcal{D}(JAJ^\top,JBJ^\top)=\mathcal{D}(A,B)$ for all
    $A,B\succ0$ and $J\in\mathrm{GL}(D)$;
    \item $\mathcal{D}(A,B) = \operatorname{tr}(B^{-1}A) - \log\det(B^{-1}A) - D$, up to scale.
\end{enumerate}
\end{restatable}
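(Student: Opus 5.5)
The direction (2)$\Rightarrow$(1) is a direct computation. For $A,B\succ 0$ and $J\in\mathrm{GL}(D)$ we have
\[
(JBJ^\top)^{-1}(JAJ^\top)=J^{-\top}B^{-1}AJ^\top,
\]
which is similar to $B^{-1}A$. It therefore has the same trace and the same determinant, so $\operatorname{tr}(B^{-1}A)-\log\det(B^{-1}A)-D$ is unchanged. We also need to check that this $\mathcal{D}$ is a Bregman divergence. It is the one induced by $\omega(X)=-\log\det X$, which is strictly convex on the positive definite cone with $\nabla\omega(B)=-B^{-1}$. Substituting into Definition~\ref{def:bregman_div} gives $-\log\det A+\log\det B+\operatorname{tr}(B^{-1}(A-B))$, which is exactly the expression in (2).

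For (1)$\Rightarrow$(2), we reduce everything to the generator $\omega$. Write $\mathcal{D}=\mathcal{D}_\omega$. The generator $\omega$ is determined by $\mathcal{D}$ only up to an affine function, because $\mathcal{D}_\omega=\mathcal{D}_{\omega'}$ if and only if $\omega-\omega'$ is affine. Now fix $J$ and set $\omega_J(A):=\omega(JAJ^\top)$. A short check shows $\mathcal{D}_{\omega_J}(A,B)=\mathcal{D}_\omega(JAJ^\top,JBJ^\top)$, since the linear map $A\mapsto JAJ^\top$ passes through the gradient term. Invariance then says $\mathcal{D}_{\omega_J}=\mathcal{D}_\omega$, so $\omega(JAJ^\top)=\omega(A)+\langle C_J,A\rangle+c_J$ for some symmetric $C_J$ and scalar $c_J$. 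Differentiating twice, the Hessian quadratic form $h_A(H,H):=\nabla^2\omega(A)[H,H]$ satisfies
\[
h_{JAJ^\top}(JHJ^\top,JHJ^\top)=h_A(H,H)\quad\text{for all }J\in\mathrm{GL}(D).
\]
This step assumes $\omega\in C^2$, which is the ``sufficiently regular'' proviso.

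Since $\mathrm{GL}(D)$ acts transitively on the positive definite matrices, $h$ is fixed by its value at $A=I$. At the identity the stabiliser is $O(D)$, so $h_I$ is an $O(D)$-invariant quadratic form on symmetric matrices. By the classical invariant theory of symmetric matrices under conjugation, such forms are exactly $h_I(H,H)=a\operatorname{tr}(H^2)+b(\operatorname{tr}H)^2$. Transporting back with $J=A^{1/2}$ gives
\[
h_A(H,H)=a\operatorname{tr}(A^{-1}HA^{-1}H)+b\operatorname{tr}(A^{-1}H)^2 .
\]
The first term is the Hessian of $-a\log\det$. The second is the Hessian of $\tfrac{b}{2}(\log\det A)^2$, because the derivative of $\log\det A$ in direction $H$ is $\operatorname{tr}(A^{-1}H)$ and $\log\det$ contributes its own Hessian times $\log\det A$. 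So $\omega$ is determined up to affine terms by $h$, and it remains to show $b=0$.

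The main obstacle is ruling out the $b$ term. The second-derivative invariance is necessary but not sufficient, because a Hessian-invariant $\omega$ only has an invariant divergence if the affine corrections $C_J,c_J$ really are affine. I would test $\omega_b(A)=\tfrac{b}{2}(\log\det A)^2$ directly. Take $J=sI$, under which $\log\det$ shifts by the constant $2D\log s$. Then $\omega_b(JAJ^\top)-\omega_b(A)$ is a nonzero multiple of $\log\det A$ plus a constant, and $\log\det A$ is not affine in $A$. This contradicts the affine relation for $\omega_J$ unless $b=0$. Hence $\omega=-a\log\det+\text{affine}$, and strict convexity forces $a>0$, so $\mathcal{D}$ is $a$ times the LogDet divergence in (2).
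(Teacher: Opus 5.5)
Everything up to
$h_A(H,H)=a\operatorname{tr}(A^{-1}HA^{-1}H)+b\operatorname{tr}(A^{-1}H)^2$ is correct. Getting Hessian invariance from ``$\omega(J\cdot J^\top)-\omega$ is affine'' is a valid alternative to the paper's expansion of $\mathcal{D}(B+\epsilon H,B)$.

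\textbf{The gap is the elimination of $b$.} The $b$-term is not the Hessian of $\tfrac b2(\log\det A)^2$. By the product rule you yourself invoke,
\[
\nabla^2\bigl[\tfrac b2(\log\det A)^2\bigr][H,H]=b\operatorname{tr}(A^{-1}H)^2-b\,(\log\det A)\operatorname{tr}(A^{-1}HA^{-1}H).
\]
So $-a\log\det+\tfrac b2(\log\det)^2$ does not have the Hessian $h_A$ you derived, and it is not a candidate generator. Your $J=sI$ test only detects the extra term $-b(\log\det A)\operatorname{tr}(A^{-1}HA^{-1}H)$. That term is not congruence invariant, so the test says nothing about $\omega$.

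A sanity check exposes the problem. For $D=1$, the LogDet generator $\omega=-\log x$ has Hessian $h^2/x^2$, which you may write with $a=0$, $b=1$. Your argument would then derive a contradiction from the (true) scale invariance of the one-dimensional LogDet divergence.

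Your diagnosis of the obstacle is also off. For $C^2$ generators on the convex cone, $\omega(J\cdot J^\top)-\omega$ is affine if and only if the two Hessians agree. So Hessian invariance is equivalent to (1), not merely necessary for it.

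\textbf{The missing idea is integrability.} For $D\ge 2$ and $b\neq 0$, no function has Hessian $h_A$. The paper imposes this through symmetry of the third derivative. Set $\psi=\omega+a\log\det$, so that $\nabla^2\psi(A)[H,K]=b\operatorname{tr}(A^{-1}H)\operatorname{tr}(A^{-1}K)$. This is smooth in $A$, so $\psi\in C^3$, and
\[
\nabla^3\psi(I)[L,H,K]=-b\bigl[\operatorname{tr}(LH)\operatorname{tr}K+\operatorname{tr}H\operatorname{tr}(LK)\bigr]
\]
must be symmetric in $(L,H)$. Taking $H=I$ and $K=L\neq 0$ traceless gives $bD\operatorname{tr}(L^2)=0$, hence $b=0$.

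Equivalently, restricted to diagonal matrices you would need $\partial_i\partial_j\psi=b/(\lambda_i\lambda_j)$. But for $i\neq j$, $\partial_j\partial_i^2\psi=0$ while $\partial_i^2\partial_j\psi=-b/(\lambda_i^2\lambda_j)$, so again $b=0$.

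For $D=1$ the two invariant terms coincide, so nothing needs eliminating: one gets $\omega''=c/x^2$ directly. With this step replaced, your conclusion $\omega=-a\log\det+\text{affine}$ with $a>0$ follows.
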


This divergence is called the \textbf{LogDet} $\mathcal{D}_{\mathrm{LD}}$, and is a popular loss for matrix regression \citep{davis2007information, kulis2009low,james1961estimation}.
An apparent problem is that $\mathcal L_{\mathrm{CBMM}}$ uses rank-1 targets which are not positive definite.
However, up to a constant independent of $B$, $\mathcal{D}_{\mathrm{LD}}(A,B) = \operatorname{tr}(B^{-1}A) + \log\det(B)$ is well-defined, so \cref{prop:cbmm} still applies.
If $\Psi:\R^D \to \R^D$ is an arbitrary injective and differentiable map, representing a non-linear change of observation, then $\Phi$ acts on the CDC by conjugation with the $D\times D$ matrix $D\Psi$.
The conjugation-invariance of LogDet then gives asymptotic \textbf{non-linear observation-invariance} to $\Psi$ in the limit $\tau\to0$.

\begin{restatable}
{corollary}{nonlinldnat}
\label{cor:ld-nonlinear-naturality}
Let $\Psi:\R^D\to\R^D$ be a local diffeomorphism at $x$.
Then, for any $G^\theta(x)\succ0$,
\[
\mathcal{D}_{\mathrm{LD}}\!\left( G_{\Psi\circ\Phi}(\Psi(x)),(D_x\Psi)G^\theta(x)(D_x\Psi)^\top \right) = \mathcal{D}_{\mathrm{LD}}\!\left( G_\Phi(x),G^\theta(x) \right).
\]
\end{restatable}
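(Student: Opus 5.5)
The plan is to reduce the corollary to the linear invariance in \cref{prop:ld-naturality} by rewriting the first argument with observation equivariance. First, apply \eqref{eq:conjugation_pushfoward_cdc} with $y=\Psi(x)$ and $J:=D_x\Psi$. This gives
\[
G_{\Psi\circ\Phi}(\Psi(x)) = J\,G_\Phi(x)\,J^\top .
\]
The equivariance identity was stated for an injective $\Psi$. Here $\Psi$ is only a local diffeomorphism at $x$, so I will justify the identity locally. By the inverse function theorem, $\Psi$ restricts to a diffeomorphism of some neighbourhood $U\ni x$ onto its image. The CDC is a local operator, because of the diffusion property and the chain rule, so only the germ of $\Psi$ at $x$ enters $\Gamma(\Psi_i\circ\Phi,\Psi_j\circ\Phi)$.

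Under the posterior-average definition \eqref{eq:pushforward_cdc_def_non_injective}, other preimages of $\Psi(x)$ could in principle contribute to the conditional expectation. I will therefore assume, as the paper implicitly does, that the fibre over $\Psi(x)$ meets the support only through $U$. Alternatively, I will interpret the statement locally on $U$. With that in place, the chain rule for $\Gamma$ gives $\Gamma(\Psi_i\circ\Phi,\Psi_j\circ\Phi)=\sum_{k,l}\partial_k\Psi_i\,\partial_l\Psi_j\,\Gamma(\Phi_k,\Phi_l)$. Taking the conditional expectation given $\Phi(\alpha)=x$ yields $JG_\Phi(x)J^\top$, since $J$ is constant on the fibre.

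Second, since $\Psi$ is a local diffeomorphism, $J\in\mathrm{GL}(D)$. The left-hand side then becomes $\mathcal{D}_{\mathrm{LD}}(JG_\Phi(x)J^\top, JG^\theta(x)J^\top)$. Applying direction (2)$\Rightarrow$(1) of \cref{prop:ld-naturality} gives the right-hand side.

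I would also verify the invariance directly, since $G_\Phi(x)$ need not be positive definite, for example when the data lie on a low-dimensional manifold. That possibility is not covered by the hypothesis $A,B\succ0$ of the proposition. For this reason I will use the extended form $\operatorname{tr}(B^{-1}A)-\log\det(B^{-1}A)-D$, read as $\operatorname{tr}(B^{-1}A)+\log\det B$ plus an $A$-dependent constant, as the paper does for rank-one targets.

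The direct check is routine. We have $(JBJ^\top)^{-1}JAJ^\top=J^{-\top}B^{-1}AJ^\top$, which is similar to $B^{-1}A$, so its trace and determinant are unchanged. Hence each term of $\mathcal{D}_{\mathrm{LD}}$ is invariant, and the identity holds for any $A\succeq0$ for which the divergence is defined, including singular $A$ if the $-\log\det A$ term is dropped consistently on both sides.

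The main obstacle is the first step: making the equivariance identity rigorous when $\Psi$ is only locally injective. I expect this to require the locality of the CDC together with an assumption that the fibre of $\Psi\circ\Phi$ over $\Psi(x)$ is contained in that of $\Phi$ over $x$. Once this is in place, the rest is the similarity-invariance computation above.
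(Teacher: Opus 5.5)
Your argument is correct and matches the paper's proof: you rewrite the first argument as $JG_\Phi(x)J^\top$ with $J=D_x\Psi\in\mathrm{GL}(D)$ by observation equivariance, then apply the congruence invariance from the converse direction of \cref{prop:ld-naturality}; your care about other preimages of $\Psi(x)$ under the posterior-average definition addresses a real subtlety that the paper skips, since it cites the injective case. One side remark is wrong: for singular $G_\Phi(x)$ you cannot drop $-\log\det A$ on both sides and keep the equality, because $\log\det(JBJ^\top)=\log\det B+2\log|\det J|$, so the truncated score $\operatorname{tr}(B^{-1}A)+\log\det B$ is invariant only up to this $B$-independent shift (harmless for training), whereas the full divergence is $+\infty$ on both sides and the identity then holds trivially.
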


The proofs are in App.~\ref{app:proofs}.
These results show that LogDet is the \textit{only observation-invariant loss that permits sample-wise training}.
We therefore call LogDet-CBMM \textbf{intrinsic geometry matching (IGM)}: both the loss and its target depend only on the process and not on its representation in $\R^D$.

\begin{principle}
\textbf{Principle 3: LogDet is the natural loss for learning geometry.}
It is the only loss that permits sample-wise training and is observation-invariant, so \textit{the training itself} is intrinsic.
\end{principle}

\paragraph{LogDet is just Gaussian NLL.} 

If $m^{\phi}_\tau(X)$ models the conditional mean $\mathbb{E}[Y\mid X,\tau]$, then, neglecting terms independent of $\phi$ and $\theta$, we can write the mean-centred CBMM loss as
\[
\mathcal L_{\mathrm{CBMM}}(\phi,\theta) = \mathbb{E}_{\tau, X, Y \mid X} \big[ \tfrac{1}{2\tau}(Y - m^{\phi}_\tau(X))^\top G^\theta_\tau(X)^{-1}(Y - m^{\phi}_\tau(X)) + \log \det G^\theta_\tau(X)
 \big].
\]
Up to adding the Gaussian normalisation constant and dividing by 2, this is the \textbf{negative log-likelihood (NLL)} of $Y$ under $\mathcal{N}(m^{\phi}_\tau(X),2\tau G^\theta_\tau(X))$\footnote{Without a mean model, the Gaussian is $\mathcal{N}(X,2\tau G^\theta_\tau(X))$.}.
It is minimised by the true mean and covariance \citep{dawid1999coherent}, so can be used to jointly train the parameters $\phi$ and $\theta$.

\paragraph{Low-rank parametrisation.} 
The $D\times D$ matrix $G^\theta_\tau$ is too large to materialise in memory for large dimension $D$.
If we parametrise $G^\theta = M^\theta(M^\theta)^\top + \epsilon^\theta I_D$, where $M^\theta$ is a $D \times r$ low-rank factor and $\epsilon^\theta$ is a learned scalar, then \textbf{LogDet can be evaluated using only the $r \times r$ matrix $(M^\theta)^\top M^\theta$}, allowing easy scaling to high $D$ (App. \ref{app:low-rank}).
We may prefer a low-rank model and set $\epsilon^\theta = 0$, but this breaks the strict positive definiteness required for LogDet: in this case we use the regularised loss $\mathcal{D}_{\mathrm{LD}}^\lambda(A,B) = \mathcal{D}_{\mathrm{LD}}(A + \lambda I, B + \lambda I)$ for a small $\lambda$, which retains the correct minimiser.
For a $d$-dimensional manifold, it suffices to take $r \ge 2d-1$ \citep[Thm. 4.4]{bamberger2026riemannianmetricmatchingscalable}.

\subsection{Training IGM with data in different spaces}
\label{subsec:diffspaces}

Observation equivariance explicitly relates the geometry of different spaces.
We can exploit this to train a model in one space with data from another, potentially unlocking opportunities for scaling data in applications.
There are two settings for this, based on the information available.

\paragraph{Pushing forward the CDC.}
Since the CDC pushes forward naturally, this direction is easy.
Suppose we have data $X$ and perturbations $Y \mid X$ in $\R^D$, and $\Psi:\R^D\to\R^{D'}$, which could represent an encoder into a latent space.
Then we can apply standard IGM to the pushfoward data $\Psi(X)$ and perturbations $\Psi(Y) \mid \Psi(X)$, which is exactly our standard setup of \cref{sec:process-geometry}:
we regress $G^\theta_\tau$ against rank-1 targets $\Psi(Y)-\Psi(X) \in \R^{D'}$.
If $X = \Phi(\alpha)$, for unknown latent $\alpha$, then we approximate the composed CDC $G_{\Psi\circ\Phi}$.
Given access to the derivative $D\Psi$, we can also train directly on the conjugated perturbation, giving
the rank-1 targets $(D_X\Psi)(Y-X) \in \R^{D'}$.

\paragraph{Recovering the original CDC.}
The harder case is for observed perturbations in $\R^{D'}$, but we want to learn the CDC on $\R^D$.
There are several possibilities, but here we assume we have $X \in \R^D$, the map $\Psi$ and its differential $D\Psi$, and perturbations $Z \mid \Psi(X)$ in $\R^{D'}$.
We can then formulate the inverse problem for $G_
\tau$ by matching the pushed forward CDC against the perturbations in $\R^{D'}$:
\begin{equation}
\label{eq:intrinsic-observation-cbmm}
\mathbb{E}_{\tau, X, Z \mid \Psi(X)} \big[ \mathcal{D}(\tfrac{1}{2\tau}(Z-\Psi(X))(Z-\Psi(X))^\top, (D_X\Psi)G_\tau^\theta(X)(D_X\Psi)^\top) \big].
\end{equation}
It may be that $\Psi$ collapses too much information to fully recover $G_\tau$, but may still provide useful additional training when perturbations in $\R^D$ are limited.
We could also combine multiple partial data sources to fully determine $G_\tau$: we discuss the formal identifiability criteria in App.~\ref{app:identifiability}.

\section{Experiments}


\subsection{Recovering process geometry}
\label{subsec:synthetic}

We test four defining properties of IGM in a controlled setting with known process geometry and observation map:
\hypertarget{i}{\textcolor{reflink}{\textbf{(i)}}} recovering the true process geometry under non-isometric observation;
\hypertarget{ii}{\textcolor{reflink}{\textbf{(ii)}}} the advantage of LogDet over Frobenius loss;
\hypertarget{iii}{\textcolor{reflink}{\textbf{(iii)}}} scaling laws, in terms of intrinsic and observation dimensions; and
\hypertarget{iv}{\textcolor{reflink}{\textbf{(iv)}}} recovering intrinsic geometry from an ensemble of partial, rank-deficient views.

We simulate a latent process as Brownian motion on the flat torus $\mathbb T^d$, with cometric $I_d$, observed through a curved, non-isometric embedding into $\R^D$ (\cref{fig:synthetic}\textcolor{reflink}{\textbf{(a)}}).
We train different CDCs on $\R^D$ with different methods: IGM uses the pushforward method with the known differential (\cref{subsec:diffspaces}), FGM (Frobenius geometry matching) denotes the same method but with Frobenius regression not LogDet, and LD-RMM (LogDet Riemannian metric matching)  uses ambient Gaussian perturbations with LogDet\footnote{LogDet-RMM modifies the RMM of \cite{bamberger2026riemannianmetricmatchingscalable} by changing the loss from Frobenius to LogDet and jointly training a mean and covariance model (which is not possible with Frobenius).}.
Finally, $k$NN estimates a centred, kernel-weighted covariance of clean observations \citep{jones2026computingdiffusiongeometry}.
Further details are in App.~\ref{app:synthetic}. \looseness =-1

\vspace{-20pt}
\newsavebox{\tblbox}
\newlength{\tblheight}
\savebox{\tblbox}{%
\small
\renewcommand{\arraystretch}{1.2}
\setlength{\tabcolsep}{1.5pt}
\begin{tabular}{@{}l *{4}{c}@{}}
\toprule
$d =$ & 2 & 8 & 16 & 32 \\
\midrule
\multicolumn{5}{@{}l}{\emph{Latent-cometric error $\mathcal{E}_\Gamma$}} \\
\addlinespace[1pt]
FGM
& $0.006$
& $0.286$
& $0.343$
& $0.390$ \\
\rowcolor{winrow}
\textbf{IGM}
& $\mathbf{0.002}$
& $\mathbf{0.005}$
& $\mathbf{0.012}$
& $\mathbf{0.029}$ \\
\addlinespace[3pt]

\multicolumn{5}{@{}l}{\emph{Tangent-projector error $\mathcal{E}_P$}} \\
\addlinespace[1pt]
FGM
& $1{\times}10^{-5}$
& $0.092$
& $0.140$
& $0.263$ \\
\rowcolor{winrow}
\textbf{IGM}
& $\mathbf{2{\times}10^{-7}}$
& $\mathbf{2{\times}10^{-6}}$
& $\mathbf{1{\times}10^{-5}}$
& $\mathbf{4{\times}10^{-5}}$ \\
\bottomrule
\end{tabular}%
}
\setlength{\tblheight}{\dimexpr\ht\tblbox+\dp\tblbox\relax}
\setlength{\tblheight}{\dimexpr\ht\tblbox+\dp\tblbox\relax }
\begin{figure}[h]
\centering

\begin{subfigure}[c]{0.33\textwidth}
\centering
\begin{minipage}[c][\tblheight][c]{0.46\linewidth}
    \centering
    \includegraphics[
        width=\linewidth,
        height=\tblheight,
        keepaspectratio
    ]{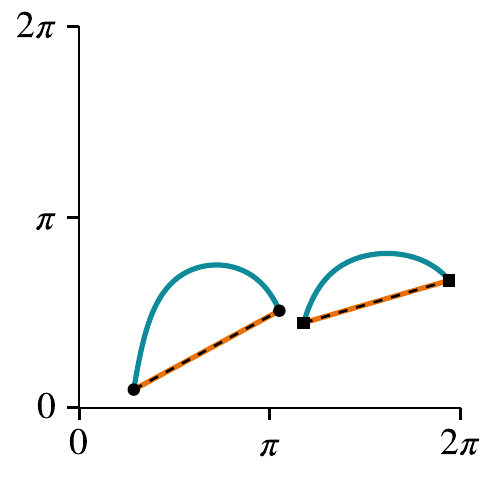}
\end{minipage}
\hfill
\begin{minipage}[c][\tblheight][c]{0.50\linewidth}
    \centering
    \includegraphics[
        width=\linewidth,
        height=\tblheight,
        keepaspectratio
    ]{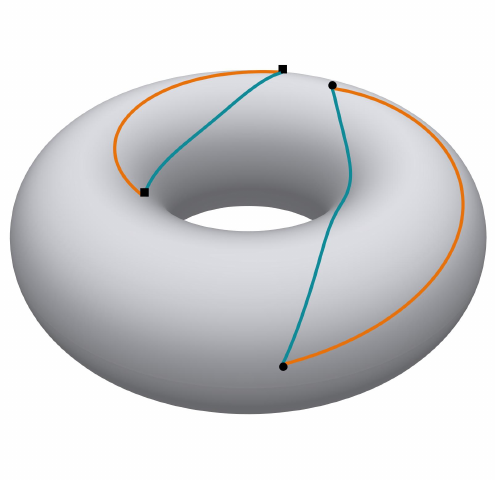}
\end{minipage}
\vspace{-20pt}
\caption*{\hypertarget{i}{\textcolor{reflink}{\textbf{(a)}}}}
\end{subfigure}
\hfill
%
\begin{subfigure}[c]{0.2\textwidth}
\centering
\begin{minipage}[c][\tblheight][c]{\linewidth}
\centering
\resizebox{\linewidth}{!}{\usebox{\tblbox}}
\end{minipage}
\vspace{-20pt}
\caption*{\textcolor{reflink}{\textbf{(b)}}}
\label{fig:synthetic-table}
\end{subfigure}
\hfill
\begin{subfigure}[c]{0.44\textwidth}
\centering
\begin{minipage}[c][\tblheight][c]{0.49\linewidth}
    \centering
    \includegraphics[
        width=\linewidth,
        height=\tblheight,
        keepaspectratio
    ]{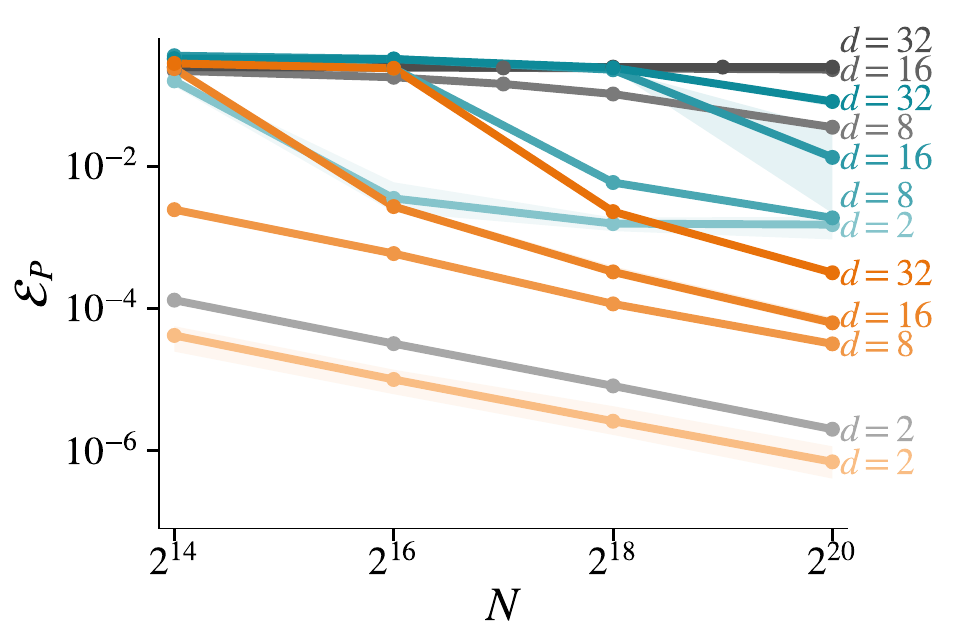}
\end{minipage}
\hfill
\begin{minipage}[c][\tblheight][c]{0.49\linewidth}
    \centering
    \includegraphics[
        width=\linewidth,
        height=\tblheight,
        keepaspectratio
    ]{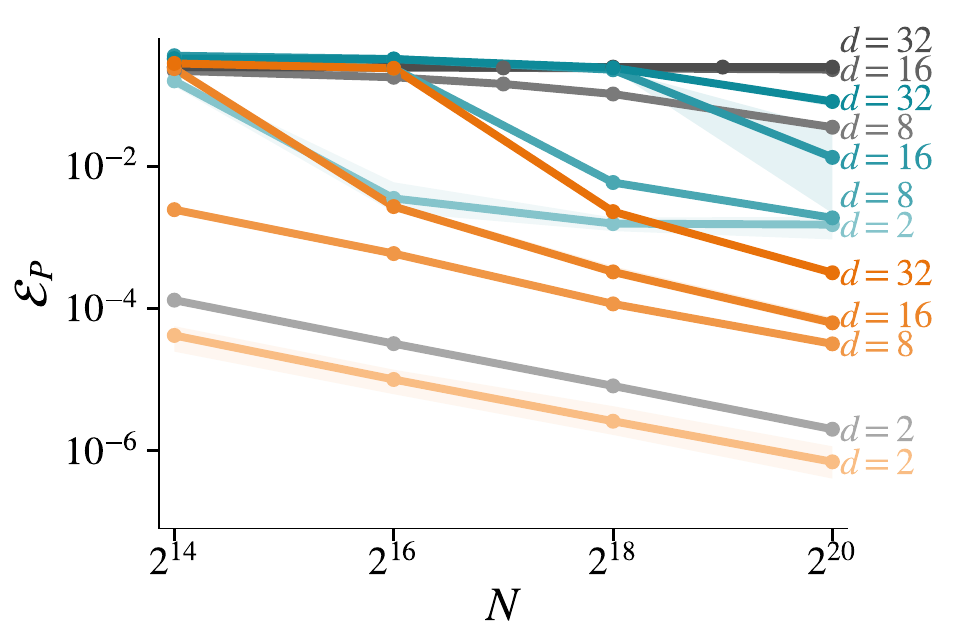}
\end{minipage}

\vspace{-20pt}
\caption*{\hypertarget{iii}{\textcolor{reflink}{\textbf{(c)}}}}
\end{subfigure}
\hfill
%

\vspace{-0.5em}
\caption{\textbf{IGM recovers process geometry with intrinsic-dimensional scaling.}
\textcolor{transitioncolor}{\textbf{Orange}}: IGM;
\textcolor{gaussiancolor}{\textbf{teal}}: LD-RMM;
\textcolor{knn}{\textbf{grey}}: $k$NN.
\textcolor{reflink}{\textbf{(a)}}
Geodesics in latent coordinates (left) and on the observed torus (right): IGM recovers the true geometry, while LD-RMM induces the observation geometry; markers ($\bullet$, $\blacksquare$) identify endpoint pairs.
\textcolor{reflink}{\textbf{(b)}}
Latent-cometric and tangent-projector errors across intrinsic dimension.
\textcolor{reflink}{\textbf{(c)}}
Tangent-projector error versus sample size, varying intrinsic (left) and observation dimension (right).
\looseness=-1}
\label{fig:synthetic}
\end{figure}
\vspace{-1.em}

\paragraph{\textcolor{reflink}{\textbf{(i)}}}
We compute \textbf{geodesics} in latent coordinates for each CDC by minimising the path energy (\cref{fig:synthetic}\textcolor{reflink}{\textbf{(a)}}).
IGM recovers the true, straight line geodesics (the shortest paths for the process), whereas LD-RMM finds the shortest paths in observation space.

\paragraph{\textcolor{reflink}{\textbf{(ii)}}}
To measure \textbf{intrinsic recovery}, we push forward the CDC to the torus through the inverse of the observation, and compare the pointwise Frobenius error to the true cometric $\mathcal E_\Gamma$.
We compare LogDet and Frobenius regression, for fixed $N=10^5$ points (\cref{fig:synthetic}\textcolor{reflink}{\textbf{(b)}}, top).
LogDet has a clear advantage, which becomes more pronounced as $d$ increases.

\paragraph{\textcolor{reflink}{\textbf{(iii)}}}
We compare qualitatively different CDCs using the fact that they encode the same tangent space: the top-$d$ eigenvectors at $x \in \mathcal{M} \subset \R^D$ should form an orthogonal projection onto $T_x\mathcal{M}$.
We measure \textbf{tangent recovery} by $\mathcal E_P$, the squared Frobenius distance to the true rank-$d$ tangent projector.
Training models on increasingly more data lets us compare each method's sample efficiency and how it depends on dimension.
In \cref{fig:synthetic}\textcolor{reflink}{\textbf{(c)}} we vary intrinsic dimension $d$ while fixing ambient dimension  $D=48$ \textcolor{reflink}{(left)}, and vary $D$ at fixed $d=8$ \textcolor{reflink}{(right)}, and compare IGM, LD-RMM, and $k$NN.
The curves report three-seed means of $\mathcal E_P$, and bands show the range.
Both neural methods outperform the kernel, as in \cite{bamberger2026riemannianmetricmatchingscalable}.
However, while LD-RMM infers the tangents from ambient noise, IGM is trained only on intrinsic perturbations which are tangential, giving a low-dimensional signal.
This makes IGM much more sample efficient as dimension increases.
The effect is especially pronounced when we vary only $D$ \textcolor{reflink}{(right)}, where IGM is essentially unaffected.

\begin{wrapfigure}{r}{0.18\linewidth}
    \vspace{-1em}
    \centering
    \includegraphics[width=1\linewidth]{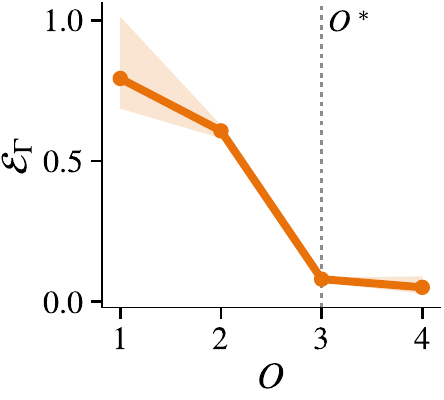}
    \vspace{-2em}
    \caption{Error vs number of views.}
    \label{fig:identifiability}
    \vspace{-2em}
\end{wrapfigure}

\paragraph{\textcolor{reflink}{\textbf{(iv)}}}
Following \cref{subsec:diffspaces}, we evaluate whether multiple, individually rank-deficient observations can be combined to fully recover the intrinsic CDC.
We use \textit{winding characters} $\psi_{\mathbf m}(x)=(\cos\langle\mathbf m,x\rangle,\sin\langle\mathbf m,x\rangle)$, which are observations $\mathbb{T}^2\to\R^2$, and each revealing only the rank-one quadratic form $\mathbf m^\top\Gamma\mathbf m$.
As we discuss in App. \ref{app:identifiability}, the true CDC is identifiable precisely when $\{\mathbf m_o\mathbf m_o^\top\}_{o=1}^O$ spans $\operatorname{Sym}(\mathbb R^d)$.
In this case, the critical threshold is $O^\star=3$, and we observe this empirically in \cref{fig:identifiability}.

\subsection{Image geometry through augmentation}
\label{sec:mnist}

When natural process transitions are unavailable, we can simulate them with \emph{data augmentation}.
This encompasses Gaussian noise models, where the prior contains no geometric information, so we now investigate whether semantically meaningful augmentations $Y\sim P^\tau(X,\cdot)$ induce a better geometry.
We test whether
\textcolor{reflink}{\textbf{(i)}} IGM on principled augmentations has better downstream performance than LD-RMM,
\textcolor{reflink}{\textbf{(ii)}} this process geometry transfers across data distributions without retraining.
We set $P^\tau$ to be a smooth spatial deformation in the plane, known as \textit{metamorphosis} \citep{holm2008eulerpoincaretheorymetamorphosis,younes2010shapes,franois2021metamorphicimageregistrationusing}.
IGM learns the uncentred forward field $G_\tau$ from these transitions, whereas LD-RMM learns a mean-centred backward field from isotropic Gaussian perturbations.  
Both covariance estimators use the same U-Net backbone and rank-$100$ factorization with a learned isotropic floor. 
Full details are in App.~\ref{app:mnist}. 

\newlength{\vidfigheight}
\setlength{\vidfigheight}{0.145\textwidth}

\begin{figure}[h]
\vspace{-0.5em}
\centering
\begin{subfigure}[c]{\linewidth}
    \centering

    \makebox[\linewidth][c]{%
        \includegraphics[width=0.48\linewidth]
            {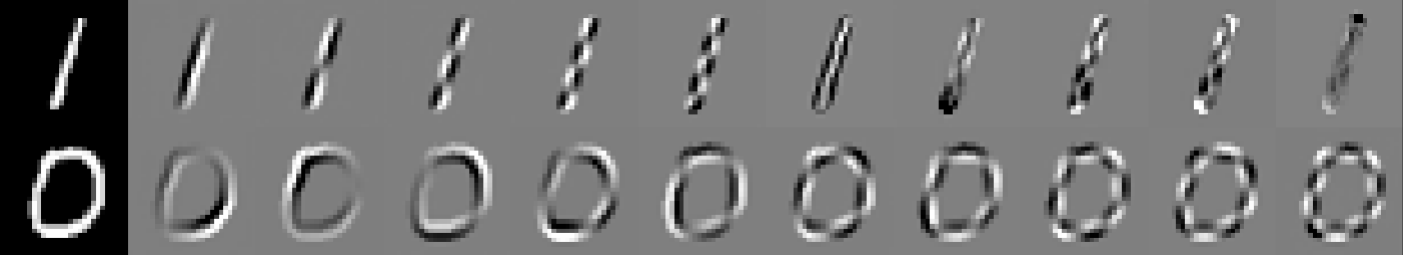}%
        \hspace{2pt}%
        \includegraphics[width=0.26\linewidth]
            {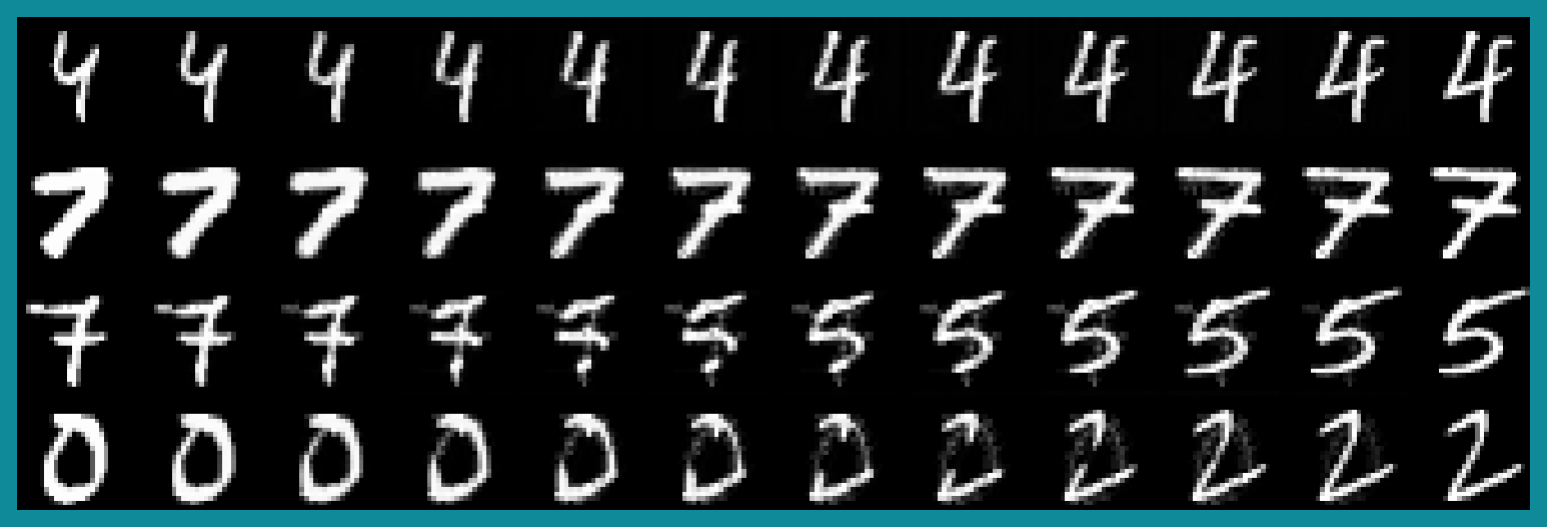}%
        \hspace{2pt}%
        \includegraphics[width=0.26\linewidth]
            {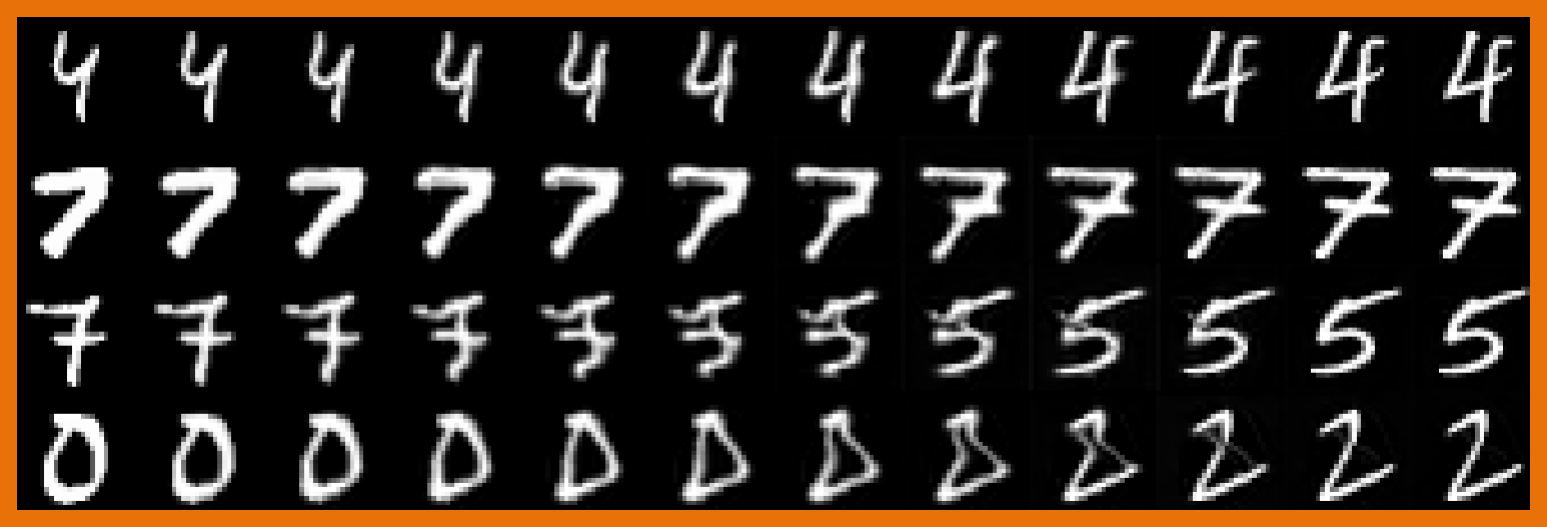}%
    }

    \makebox[\linewidth][c]{%
        \includegraphics[width=0.48\linewidth]
            {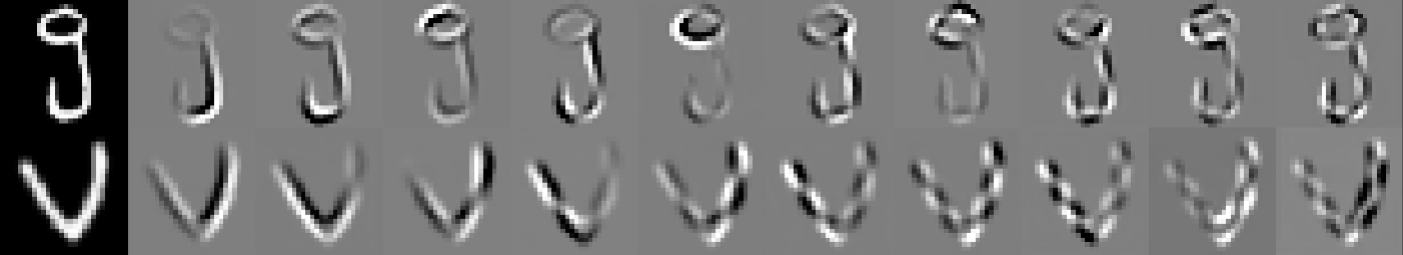}%
        \hspace{2pt}%
        \includegraphics[width=0.26\linewidth]
            {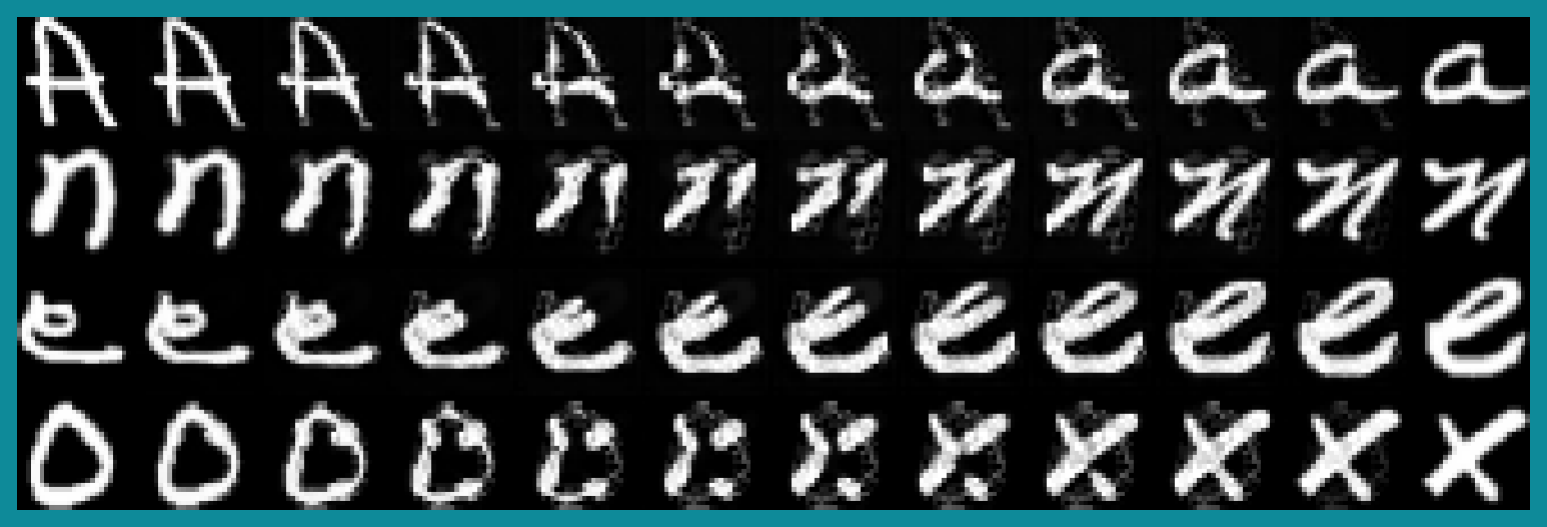}%
        \hspace{2pt}%
        \includegraphics[width=0.26\linewidth]
            {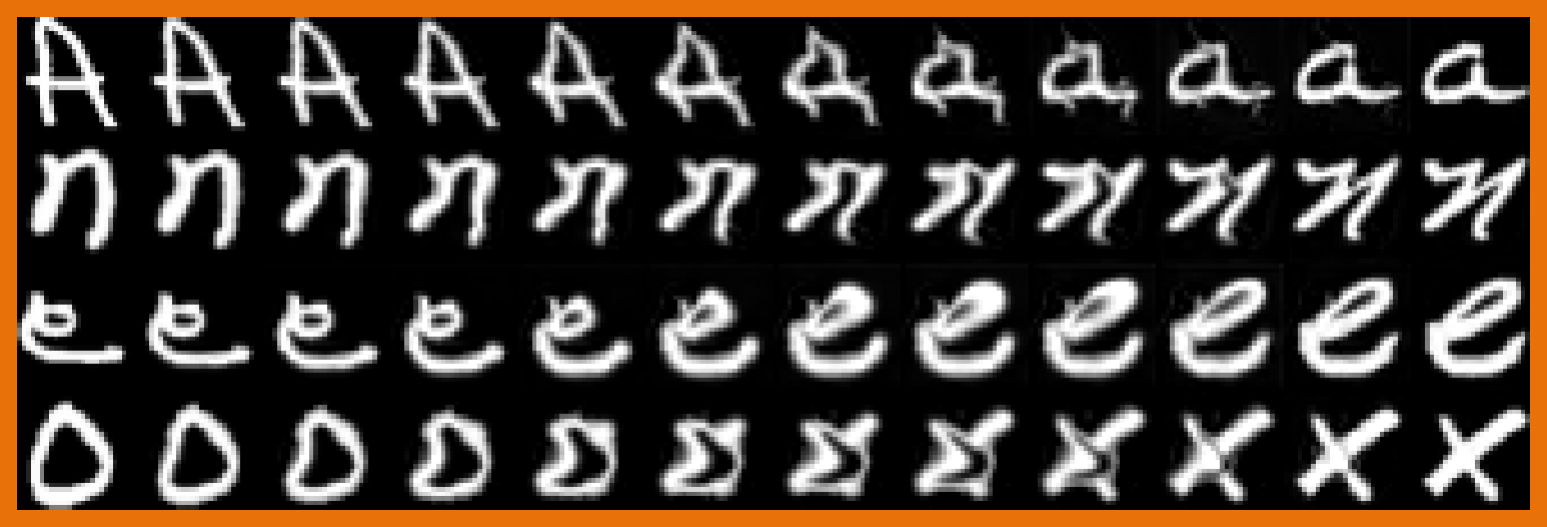}%
    }

    \makebox[\linewidth][c]{%
        \makebox[0.47\linewidth][c]{%
            \textcolor{reflink}{\textbf{(a)}}%
        }%
        \hspace{2pt}%
        \makebox[0.26\linewidth][c]{%
            \textcolor{reflink}{\textbf{(b)}}%
        }%
        \hspace{2pt}%
        \makebox[0.26\linewidth][c]{%
            \textcolor{reflink}{\textbf{(c)}}%
        }%
    }
\end{subfigure}
\hfill

\vspace{-0.7em}
\caption{\textbf{Process geometry from image transitions.}
\textcolor{reflink}{\textbf{(a)}} Leading CDC eigenvectors: metamorphic augmentations induce semantically meaningful tangent vectors.
\textcolor{reflink}{\textbf{(b)}}
Interpolating paths with LD-RMM (observation-space geometry).
\textcolor{reflink}{\textbf{(c)}}
Interpolating paths with IGM (intrinsic metamorphic geometry).
Models trained on MNIST (top row) are transferred to EMNIST (bottom row) without fine-tuning. Additional eigenvectors across scales and interpolation paths are shown in \cref{fig:emnist-scale,fig:additional-image-interpolations}, respectively.
}
\label{fig:image-geometry}
\end{figure}
\vspace{-0.8em}

\paragraph{\textcolor{reflink}{\textbf{(i)}}} 
We can visualise the learned CDC geometry from its leading eigenvectors, which generate the tangent space at each point.
These learned vectors are highly interpretable, and describe increasingly complex deformations.
We apply the CDC to compute interpolating paths between images using \textit{Riemannian optimisation}, taking gradient-descent steps from a source image $x_0$ down the target potential $\tfrac12\|x-x_1\|_2^2$.
This is a challenging problem in pixel space, and existing approaches typically use a per-step denoiser to improve path quality \citep{kharitenko2026landingscoreriemannianoptimization, kawasaki2026diffusion}.
We evaluate without these tricks, and find that IGM is sufficient for smooth, semantic transformations \cref{fig:image-geometry}\textcolor{reflink}{\textbf{(c)}} while LD-RMM produces irregular paths with pixel-level artifacts \cref{fig:image-geometry}\textcolor{reflink}{\textbf{(b)}}.

\paragraph{\textcolor{reflink}{\textbf{(ii)}}}
We perform the same evaluations on out-of-distribution EMNIST letters without fine-tuning.
The LD-RMM interpolating paths lose quality on this unseen data, while IGM has no visible degradation and retains meaningful tangent vectors.
This suggests that IGM geometry is \emph{portable}, and can robustly generalise beyond the training distribution.

\newpage
\subsection{Natural image geometry from videos}
\label{subsec:videos}

\begin{wrapfigure}{r}{0.4\linewidth}
    \vspace{-1em}
    \centering

    \begin{minipage}[t]{0.48\linewidth}
        \centering
        \includegraphics[width=\linewidth]
            {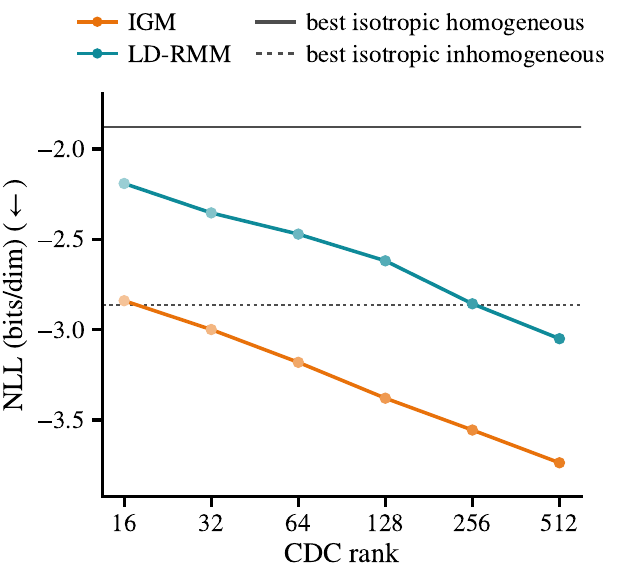}\par
        \vspace{0.15em}
        \makebox[\linewidth][l]{%
            \hspace*{0.58\linewidth}%
            \makebox[0pt][c]{\small\textcolor{reflink}{\textbf{(a)}}}%
        }
    \end{minipage}
    \hfill
    \begin{minipage}[t]{0.48\linewidth}
        \centering
        \includegraphics[width=\linewidth]
            {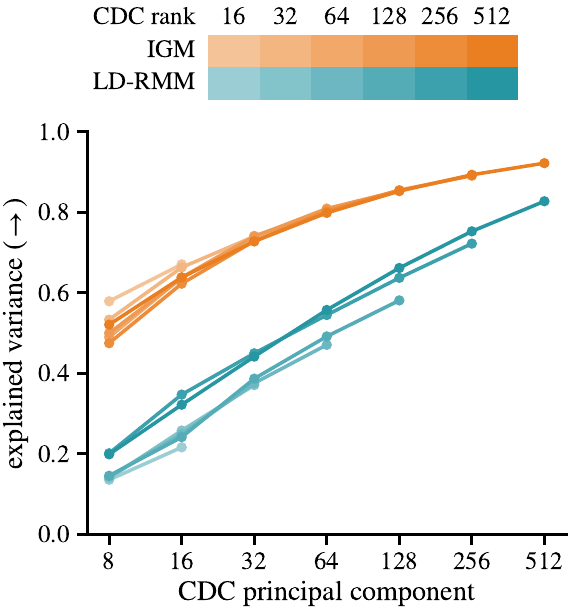}\par
        \vspace{0.15em}
        \makebox[\linewidth][l]{%
            \hspace*{0.58\linewidth}%
        \makebox[0pt][c]{\small\textcolor{reflink}{\textbf{(b)}}}%
        }
    \end{minipage}
    \vspace{-0.5em}
    \caption{\textbf{Natural image geometry from video time steps.}
    \textcolor{reflink}{\textbf{(a)}} NLL (bits/dim) of intrinsic IGM and observation-space LD-RMM, across model rank.
    \textcolor{reflink}{\textbf{(b)}} Explained variance of CDC eigenvectors.
    }
    \label{fig:video_results}
    \vspace{-1.5em}
\end{wrapfigure}

We do have access to \textit{natural image perturbations} from video, where the jumps between time steps represent an intrinsic sample from an underlying process.
We train IGM with a low-rank UNet backbone on $64 \times 64$ videos from CelebV-HQ \citep{zhu2022celebvhq}, and compare with LD-RMM.
We test whether \textcolor{reflink}{\textbf{(i)}} IGM with natural perturbations recovers the intrinsic process geometry, \textcolor{reflink}{\textbf{(ii)}} how efficiently IGM and LD-RMM learn the tangent spaces of the data.
Full details are in App. \ref{app:videos}, and we visualise the learned tangent vectors in \cref{fig:celebv_tangents_0.0} and \cref{fig:celebv_tangents_0.1}.

\paragraph{\textcolor{reflink}{\textbf{(i)}}}
We can measure \textbf{intrinsic geometry recovery} by interpreting the learned CDC as a Gaussian model and evaluating the negative log-likelihood (NLL) of held-out test transition pairs (\cref{fig:video_results}\textcolor{reflink}{\textbf{(a)}}).
Since IGM is trained directly on the process, it recovers the true geometry substantially more accurately at each rank.

\paragraph{\textcolor{reflink}{\textbf{(ii)}}}
As in the synthetic experiments, we can compare different CDCs by their \textbf{tangent space recovery}.
For a rank-$r$ model, at each dimension $k \le r$, we can compute the \textit{explained variance} of the first $k$ eigenvectors.
For test transition pairs $(x_t, x_{t + \tau})$, we can measure the fraction of the variance of $x_{t + \tau} - x_t$ retained after projection into the first $k$ CDC principal components, and plot the profile over $k$ in (\cref{fig:video_results}\textcolor{reflink}{\textbf{(b)}}).
IGM recovers the true tangent spaces in far fewer dimensions than LD-RMM.
We visualise the learned tangent vectors in \cref{fig:video_tangents} (see also \cref{fig:celebv_tangents_0.0} and \cref{fig:celebv_tangents_0.1}).

\begin{figure}[h]
    \centering
    \includegraphics[width=1\linewidth]{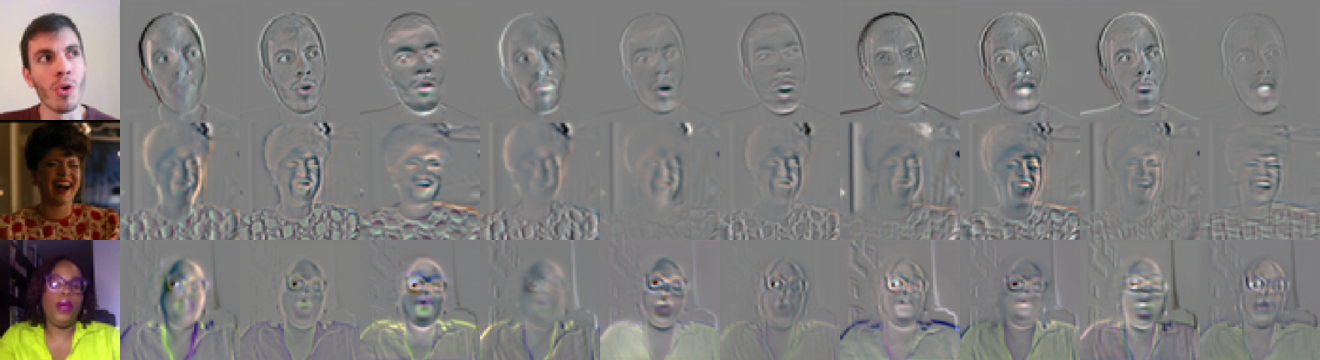}
    \caption{
    \textbf{Tangent vectors from video time steps.}
    Leading CDC eigenvectors reveal the tangent structure to the image `manifold', according to the \textit{intrinsic process} observed through videos.
    }
    \label{fig:video_tangents}
    \vspace{-0.5em}
\end{figure}

\subsection{Equipping latent representations with process geometry}
\label{subsec:molecular-dynamics}
Representation learning is usually focused on achieving a target latent distribution, but this is not always a good proxy for the geometry of the underlying process \citep{arvanitidis2017latent}.
IGM addresses this mismatch by learning the correct intrinsic geometry in the latent space. Using molecular dynamics in a frozen SLAE representation \citep{chen2025slae}, we test whether \textcolor{reflink}{\textbf{(i)}} IGM recovers the process geometry in latent coordinates, \textcolor{reflink}{\textbf{(ii)}} this geometry improves held-out transition paths, and \textcolor{reflink}{\textbf{(iii)}} observation equivariance transports this geometry to physically meaningful molecular motion.

\setlength{\columnsep}{12pt}
\begin{wrapfigure}{r}{0.42\textwidth}
    \vspace{-0.9em}
    \centering
    \includegraphics[width=0.48\linewidth]
        {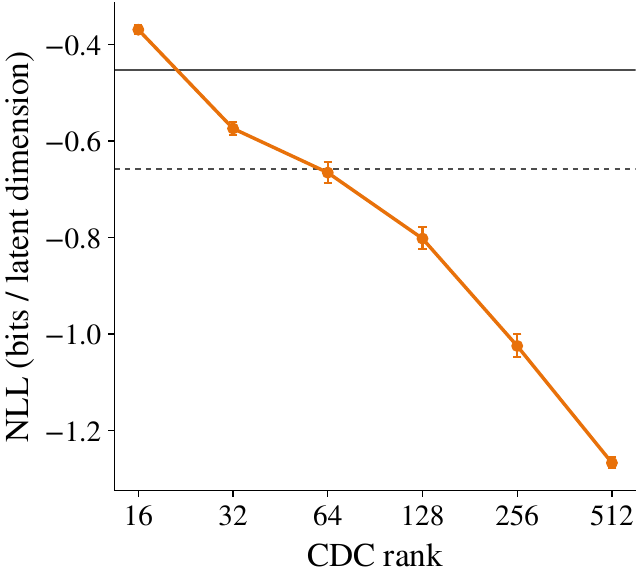}%
    \hfill
    \includegraphics[width=0.48\linewidth]
        {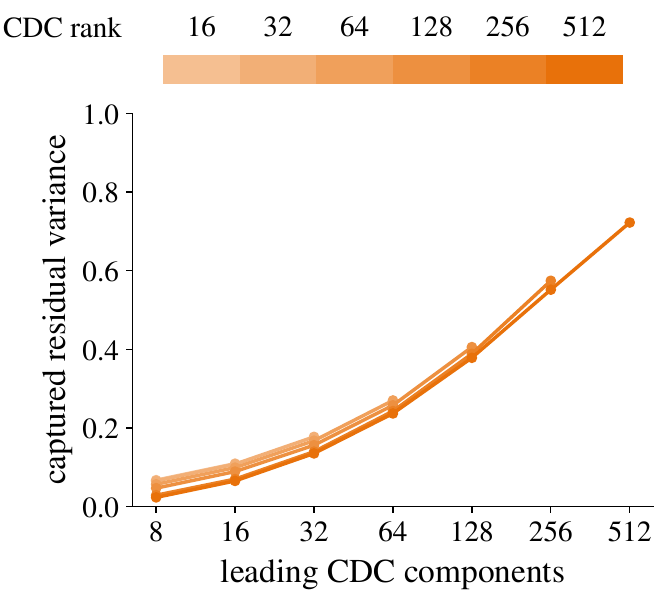}
    \caption{\textbf{Held-out transition fit.}
    NLL decreases with CDC rank (left), while leading
    directions capture increasing residual variability
    (right).}
    \label{fig:md-rank}
    \vspace{-2em}
\end{wrapfigure}

We use $200$ dynamic importance sampling trajectories of adenylate kinase \citep{perilla2011dims,beckstein2009adk} and a frozen SLAE encoder $\Phi$ \citep{chen2025slae} which maps each molecular structure $X_t$ to $214$ residue-wise embeddings with $128$ channels, so $Y_t=\Phi(X_t)\in\mathbb R^{214\times128}$.
Following \cref{subsec:diffspaces}, we apply IGM directly to the encoded transitions $(Y_t,Y_{t+\tau})$, learning the conditional mean $m_\tau^\phi$ and centred forward field $G_\tau^\theta$.
Full details are in App.~\ref{app:md}. \looseness=-1

\begin{figure}[ht]
\centering

\edef\mdPanelHeight{\the\dimexpr 0.25\linewidth\relax}
\edef\mdRowHeight{\the\dimexpr(\mdPanelHeight-4pt)/2\relax}

\newcommand{\mdthumb}[1]{%
    \includegraphics[
        width=0.19\linewidth,
        height=\mdRowHeight,
        keepaspectratio
    ]{#1}%
}

\captionsetup[subfigure]{
    labelformat=parens,
    labelfont={bf,color=reflink},
    labelsep=none,
    justification=centering,
    skip=4pt
}

\begin{subfigure}[b]{0.25\linewidth}
    \centering
    \begin{minipage}[c][\mdPanelHeight][c]{\linewidth}
        \centering
        \includegraphics[
            width=\linewidth,
            height=\mdPanelHeight,
            keepaspectratio
        ]{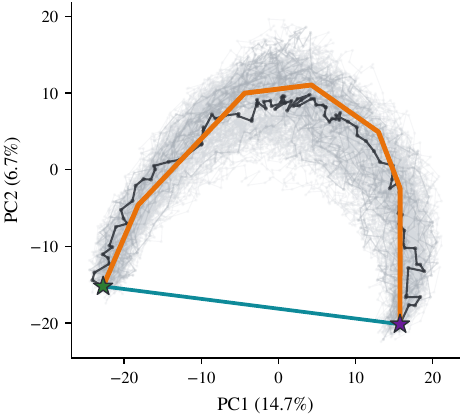}
    \end{minipage}
    \par
    \caption{}
    \label{fig:md-pca}
\end{subfigure}
\hfill
%
\begin{subfigure}[b]{0.72\linewidth}
    \centering
    \begin{minipage}[c][\mdPanelHeight][c]{\linewidth}
        \vbox to \mdPanelHeight{%
            \offinterlineskip

            \vbox to \mdRowHeight{%
                \vfil
                \hbox to \linewidth{%
                    \mdthumb{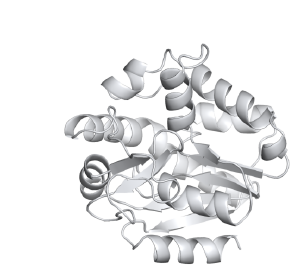}%
                    \hfil
                    \mdthumb{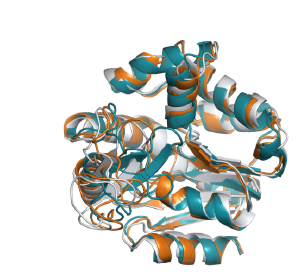}%
                    \hfil
                    \mdthumb{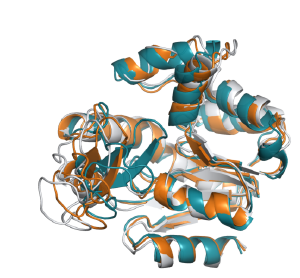}%
                    \hfil
                    \mdthumb{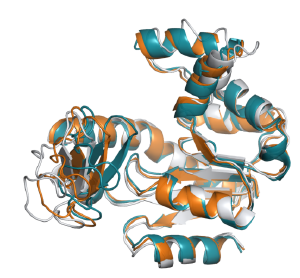}%
                    \hfil
                    \mdthumb{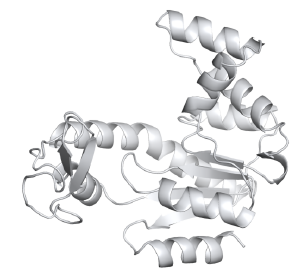}%
                }%
                \vfil
            }%

            \vskip 4pt

            \vbox to \mdRowHeight{%
                \vfil
                \hbox to \linewidth{%
                    \mdthumb{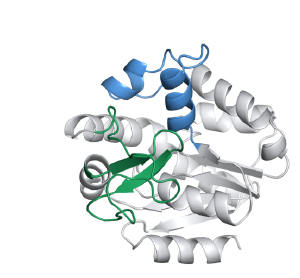}%
                    \hfil
                    \mdthumb{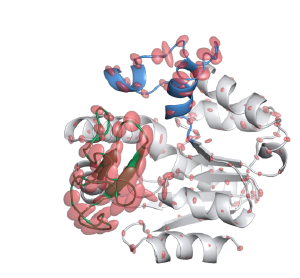}%
                    \hfil
                    \mdthumb{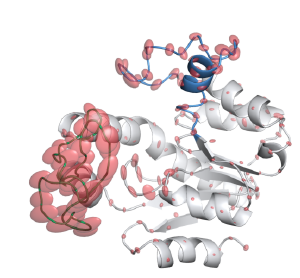}%
                    \hfil
                    \mdthumb{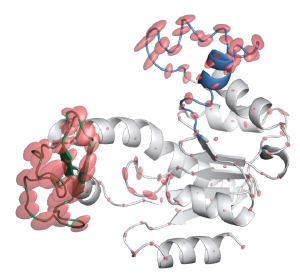}%
                    \hfil
                    \mdthumb{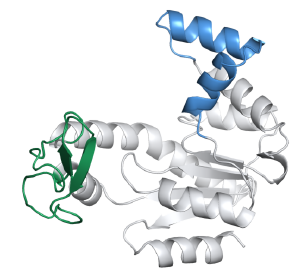}%
                }%
                \vfil
            }%
        }%
    \end{minipage}
    \par
    \caption{}
    \label{fig:md-structures}
\end{subfigure}

\caption{\textbf{Process geometry in molecular representations.}
\textcolor{reflink}{\textbf{(a)}} Held-out MD transition,
IGM bridge, and linear interpolation (LERP) between
fixed endpoints in latent PCA.
\textcolor{reflink}{\textbf{(b)}} Decoded paths (top)
and predicted per-residue mobility (bottom).
\textcolor{cdcorange}{\textbf{Orange}}: IGM;
\textcolor{lerpteal}{\textbf{teal}}: LERP;
\textcolor{mdheld}{\textbf{grey}}: held-out MD.
Results for all $20$ test trajectories appear in
\cref{fig:app-pca-panel}; additional decoded paths and
mobility ellipsoids appear in
\cref{fig:md-app-decoded-paths,fig:md-app-ellipsoids}.}
\label{fig:md}
\end{figure}

\paragraph{\textcolor{reflink}{\textbf{(i)}}}
As in \cref{subsec:videos}, we measure  \textbf{intrinsic geometry recovery} with NLL and explained variance of the CDC on test trajectories (\cref{fig:md}\textcolor{reflink}{\textbf{(a)}}).
NLL decreases consistently with rank and surpasses the best isotropic baseline, while increasing rank from $16$ to $512$ raises explained residual variance to $72.2\%$.
Together, these results reveal substantial anisotropic structure in the molecular transition process. \looseness=-1

\setlength{\columnsep}{2pt}
\begin{wrapfigure}{r}{0.2\linewidth}
    \vspace{-1em}
    \centering
    \scriptsize
    \setlength{\tabcolsep}{2pt}
    \renewcommand{\arraystretch}{0.9}
    \begin{tabular}{@{}lc@{}}
        \toprule
        \multicolumn{2}{@{}l}{\emph{Fr\'echet distance}} \\
        \midrule
        LERP & $2.00\pm0.14\,\text{\AA}$ \\
        \rowcolor{winrow}
        \textbf{IGM} & $\mathbf{1.40\pm0.16}\,\text{\AA}$ \\
        \bottomrule
    \end{tabular}
    \vspace{-1 em}
\end{wrapfigure}
\paragraph{\textcolor{reflink}{\textbf{(ii)}}}
We test whether the learned geometry improves \textbf{transition paths} between observed molecular states \citep{diepeveen2024riemannian}.
We construct an IGM bridge by conditioning the learned transition model on the observed endpoints, using its conditional mean and CDC along the predicted path (App.~\ref{app:md}).
Its conditional mean defines the IGM bridge.
\Cref{fig:md} shows one such transition in latent PCA \textcolor{reflink}{\textbf{(b)}} and as decoded structures \textcolor{reflink}{\textbf{(c}, top\textbf{)}}.
We compare each decoded path with the corresponding held-out MD trajectory using discrete Fr\'echet distance with C$\alpha$ Kabsch-RMSD as ground distance \citep{seyler2015psa}.
IGM improves on latent linear interpolation for all $20$ test trajectories, by $30.1\%$ on average.

\paragraph{\textcolor{reflink}{\textbf{(iii)}}}
We use observation equivariance to push forward the learned CDC through the frozen SLAE decoder $\Psi$ and visualise \textbf{molecular mobility} as per-residue displacement ellipsoids (\cref{fig:md}\textcolor{reflink}{\textbf{(c}, bottom\textbf{)}}).
The predicted mobility concentrates in the \textcolor{lidgreen}{\textbf{LID}} and \textcolor{nmpblue}{\textbf{NMP}} domains relative to the \textcolor{corescaf}{\textbf{CORE}}, consistent with the known open--closed motion of AdK \citep{beckstein2009adk}.
Thus, the learned latent process geometry can be transported back to reveal physically meaningful molecular motion.

\section{Conclusion}

We introduced \textbf{Intrinsic Geometry Matching (IGM)}: a scalable approach to learning the intrinsic geometry of data, thereby disentangling the underlying \textit{process geometry} from the distortion of the observation.
Our insight is that this geometry can be described with \textit{diffusion geometry} through its carré du champ (CDC) operator, and the CDC can be learned from intrinsic perturbations of the data, even when embedded non-isometrically.
This CDC is \textit{observation-equivariant}, transforming naturally between different measurements, and we identify the LogDet divergence as the unique observation-invariant loss for learning it.
We can exploit this equivariance to train a model in one space with data from another, potentially unlocking opportunities for scaling data in applications.
In each case, it outperformed the Gaussian (observation-dependent) and graph-based estimators.
We showed that IGM can induce geometry in latent spaces by learning the process geometry underlying molecular dynamics, recovering accurate trajectories and mobility fields.

\newpage

\subsection*{Acknowledgements}

AITHYRA is supported by the Austrian Academy of Sciences and the not-for-profit Boehringer Ingelheim Stiftung.
This work is partially supported by the EPSRC Turing AI World-Leading Research Fellowship
No. EP/X040062/1 and EPSRC AI Hub No. EP/Y028872/1.

\subsection*{AI use statement}

Generative AI was used for code development, debugging, and refining figures, and providing feedback on the final manuscript.
The authors take full responsibility for the final content of the paper.



\bibliography{refs}

@book{bakry2013analysis,
  title={Analysis and Geometry of Markov Diffusion Operators},
  author={Bakry, Dominique and Gentil, Ivan and Ledoux, Michel},
  volume={348},
  year={2013},
  publisher={Springer Science \& Business Media}
}

@misc{jones2024diffusiongeometry,
      title={Diffusion Geometry}, 
      author={Iolo Jones},
      year={2024},
      eprint={2405.10858},
      archivePrefix={arXiv},
      primaryClass={math.MG},
      url={https://arxiv.org/abs/2405.10858}, 
}

@misc{jones2026computingdiffusiongeometry,
      title={Computing Diffusion Geometry}, 
      author={Iolo Jones and David Lanners},
      year={2026},
      eprint={2602.06006},
      archivePrefix={arXiv},
      primaryClass={math.DG},
      url={https://arxiv.org/abs/2602.06006}, 
}

@article{jones2026manifold,
  title={Manifold diffusion geometry: Curvature, tangent spaces, and dimension},
  author={Jones, Iolo},
  journal={SIAM Journal on Mathematics of Data Science},
  volume={8},
  number={2},
  pages={257--277},
  year={2026},
  publisher={SIAM}
}

@article{coifman2006,
title = {Diffusion maps},
journal = {Applied and Computational Harmonic Analysis},
volume = {21},
number = {1},
pages = {5-30},
year = {2006},
note = {Special Issue: Diffusion Maps and Wavelets},
issn = {1063-5203},
doi = {https://doi.org/10.1016/j.acha.2006.04.006},
url = {https://www.sciencedirect.com/science/article/pii/S1063520306000546},
author = {Ronald R. Coifman and Stéphane Lafon}
}

@article{banerjee2005optimality,
  author  = {Banerjee, Arindam and Guo, Xin and Wang, Hui},
  title   = {On the Optimality of Conditional Expectation as a Bregman Predictor},
  journal = {IEEE Transactions on Information Theory},
  volume  = {51},
  number  = {7},
  pages   = {2664--2669},
  year    = {2005},
  doi     = {10.1109/TIT.2005.850145}
}

@article{kulis2009low,
  author  = {Kulis, Brian and Sustik, M{\'a}ty{\'a}s A. and Dhillon, Inderjit S.},
  title   = {Low-Rank Kernel Learning with Bregman Matrix Divergences},
  journal = {Journal of Machine Learning Research},
  volume  = {10},
  pages   = {341--376},
  year    = {2009}
}

@inproceedings{manor2024posterior,
  title     = {On the Posterior Distribution in Denoising: Application to Uncertainty Quantification},
  author    = {Manor, Hila and Michaeli, Tomer},
  booktitle = {The Twelfth International Conference on Learning Representations},
  year      = {2024}
}

@inproceedings{meng2021estimating,
  title     = {Estimating High Order Gradients of the Data Distribution by Denoising},
  author    = {Meng, Chenlin and Song, Yang and Li, Wenzhe and Ermon, Stefano},
  booktitle = {Advances in Neural Information Processing Systems},
  volume    = {34},
  pages     = {25359--25369},
  year      = {2021}
}

@incollection{james1961estimation,
  author    = {James, W. and Stein, Charles},
  title     = {Estimation with Quadratic Loss},
  booktitle = {Proceedings of the Fourth Berkeley Symposium on Mathematical Statistics and Probability, Volume 1: Contributions to the Theory of Statistics},
  publisher = {University of California Press},
  address   = {Berkeley, Calif.},
  year      = {1961},
  pages     = {361--379}
}

@article{seyler2015psa,
    doi = {10.1371/journal.pcbi.1004568},
    author = {Seyler, Sean L. AND Kumar, Avishek AND Thorpe, M. F. AND Beckstein, Oliver},
    journal = {PLOS Computational Biology},
    publisher = {Public Library of Science},
    title = {Path Similarity Analysis: A Method for Quantifying Macromolecular Pathways},
    year = {2015},
    month = {10},
    volume = {11},
    url = {https://doi.org/10.1371/journal.pcbi.1004568},
    pages = {1-37},
    number = {10},

}

@article{chen2025slae,
  title   = {SLAE: Strictly Local All-atom Environment for Protein Representation},
  author  = {Chen, Yilin and Lu, Tianyu and Zhao, Cizhang and Wayment-Steele, Hannah K. and Huang, Po-Ssu},
  journal = {bioRxiv},
  year    = {2025},
  month   = oct,
  doi     = {10.1101/2025.10.03.680398},
  url     = {https://www.biorxiv.org/content/10.1101/2025.10.03.680398v1},
  publisher = {Cold Spring Harbor Laboratory},
  note    = {Preprint}
}

@article{beckstein2009adk,
  author  = {Beckstein, Oliver and Denning, Elizabeth J. and
             Perilla, Juan R. and Woolf, Thomas B.},
  title   = {Zipping and Unzipping of Adenylate Kinase:
             Atomistic Insights into the Ensemble of Open--Closed Transitions},
  journal = {Journal of Molecular Biology},
  volume  = {394},
  number  = {1},
  pages   = {160--176},
  year    = {2009},
  doi     = {10.1016/j.jmb.2009.09.009}
}

@article{perilla2011dims,
  author  = {Perilla, Juan R. and Beckstein, Oliver and
             Denning, Elizabeth J. and Woolf, Thomas B.},
  title   = {Computing Ensembles of Transitions from Stable States:
             Dynamic Importance Sampling},
  journal = {Journal of Computational Chemistry},
  volume  = {32},
  number  = {2},
  pages   = {196--209},
  year    = {2011},
  doi     = {10.1002/jcc.21564}
}

@book{younes2010shapes,
  title     = {Shapes and Diffeomorphisms},
  author    = {Younes, Laurent},
  year      = {2010},
  publisher = {Springer Berlin Heidelberg},
  address   = {Berlin, Heidelberg},
  doi       = {10.1007/978-3-642-12055-8},
  isbn      = {978-3-642-12054-1}
}

@misc{franois2021metamorphicimageregistrationusing,
      title={Metamorphic image registration using a semi-Lagrangian scheme}, 
      author={Anton François and Pietro Gori and Joan Glaunès},
      year={2021},
      eprint={2106.08817},
      archivePrefix={arXiv},
      primaryClass={cs.CV},
      url={https://arxiv.org/abs/2106.08817}, 
}

@misc{holm2008eulerpoincaretheorymetamorphosis,
      title={The Euler-Poincare theory of Metamorphosis}, 
      author={Darryl D. Holm and Alain Trouve and Laurent Younes},
      year={2008},
      eprint={0806.0870},
      archivePrefix={arXiv},
      primaryClass={cs.CV},
      url={https://arxiv.org/abs/0806.0870}, 
}

@misc{kharitenko2026landingscoreriemannianoptimization,
      title={Landing with the Score: Riemannian Optimization through Denoising}, 
      author={Andrey Kharitenko and Zebang Shen and Riccardo de Santi and Niao He and Florian Doerfler},
      year={2026},
      eprint={2509.23357},
      archivePrefix={arXiv},
      primaryClass={cs.LG},
      url={https://arxiv.org/abs/2509.23357}, 
}

@misc{bamberger2026riemannianmetricmatchingscalable,
      title={Riemannian Metric Matching for Scalable Geometric Modeling of Distributions}, 
      author={Jacob Bamberger and Adam Gosztolai and Pierre Vandergheynst and Michael Bronstein and Iolo Jones},
      year={2026},
      eprint={2606.14334},
      archivePrefix={arXiv},
      primaryClass={cs.LG},
      url={https://arxiv.org/abs/2606.14334}, 
}

@article{belkin2003laplacian,
  title={Laplacian eigenmaps for dimensionality reduction and data representation},
  author={Belkin, Mikhail and Niyogi, Partha},
  journal={Neural computation},
  volume={15},
  number={6},
  pages={1373--1396},
  year={2003},
  publisher={MIT Press}
}

@inproceedings{bamberger2026carre,
  title={Carr{\'e} du champ flow matching: better quality-generalisation tradeoff in generative models},
  author={Bamberger, Jacob and Jones, Iolo and Duncan, Dennis and Bronstein, Michael and Vandergheynst, Pierre and Gosztolai, Adam},
  booktitle={International Conference on Learning Representations},
  volume={2026},
  pages={135754--135779},
  year={2026}
}

@incollection{bakry1985diffusions,
  title={Diffusions hypercontractives},
  author={Bakry, Dominique and {\'E}mery, Michel},
  booktitle={S{\'e}minaire de Probabilit{\'e}s XIX 1983/84: Proceedings},
  pages={177--206},
  year={1985},
  publisher={Springer}
}

@article{kunita1969absolute,
  title={Absolute continuity of Markov processes and generators},
  author={Kunita, Hiroshi},
  journal={Nagoya Mathematical Journal},
  volume={36},
  pages={1--26},
  year={1969},
  publisher={Cambridge University Press}
}

@article{tenenbaum2000global,
  title={A global geometric framework for nonlinear dimensionality reduction},
  author={Tenenbaum, Joshua B and Silva, Vin de and Langford, John C},
  journal={science},
  volume={290},
  number={5500},
  pages={2319--2323},
  year={2000},
  publisher={American Association for the Advancement of Science}
}

@article{roweis2000nonlinear,
  title={Nonlinear dimensionality reduction by locally linear embedding},
  author={Roweis, Sam T and Saul, Lawrence K},
  journal={science},
  volume={290},
  number={5500},
  pages={2323--2326},
  year={2000},
  publisher={American Association for the Advancement of Science}
}

@article{dawid1999coherent,
  title={Coherent dispersion criteria for optimal experimental design},
  author={Dawid, A Philip and Sebastiani, Paola},
  journal={Annals of Statistics},
  pages={65--81},
  year={1999},
  publisher={JSTOR}
}

@article{bengio2012representation,
  title={Representation learning: A review and new perspectives. arXiv 2012},
  author={Bengio, Yoshua and Courville, Aaron and Vincent, Pascal},
  journal={arXiv preprint arXiv:1206.5538},
  volume={10},
  year={2012}
}

@article{berry2020spectral,
  title={Spectral exterior calculus},
  author={Berry, Tyrus and Giannakis, Dimitrios},
  journal={Communications on Pure and Applied Mathematics},
  volume={73},
  number={4},
  pages={689--770},
  year={2020},
  publisher={Wiley Online Library}
}

@article{song2019generative,
  title={Generative modeling by estimating gradients of the data distribution},
  author={Song, Yang and Ermon, Stefano},
  journal={Advances in neural information processing systems},
  volume={32},
  year={2019}
}

@article{song2020score,
  title={Score-based generative modeling through stochastic differential equations},
  author={Song, Yang and Sohl-Dickstein, Jascha and Kingma, Diederik P and Kumar, Abhishek and Ermon, Stefano and Poole, Ben},
  journal={arXiv preprint arXiv:2011.13456},
  year={2020}
}

@article{kawasaki2026diffusion,
  title={Diffusion processes on implicit manifolds},
  author={Kawasaki-Borruat, Victor and Grotehans, Clara and Vandergheynst, Pierre and Gosztolai, Adam},
  journal={arXiv preprint arXiv:2604.07213},
  year={2026}
}

@inproceedings{zhu2022celebvhq,
  title={{CelebV-HQ}: A Large-Scale Video Facial Attributes Dataset},
  author={Zhu, Hao and Wu, Wayne and Zhu, Wentao and Jiang, Liming and Tang, Siwei and Zhang, Li and Liu, Ziwei and Loy, Chen Change},
  booktitle={ECCV},
  year={2022}
}

@article{singer2008non,
  title={Non-linear independent component analysis with diffusion maps},
  author={Singer, Amit and Coifman, Ronald R},
  journal={Applied and Computational Harmonic Analysis},
  volume={25},
  number={2},
  pages={226--239},
  year={2008},
  publisher={Elsevier}
}

@article{singer2009detecting,
  title={Detecting intrinsic slow variables in stochastic dynamical systems by anisotropic diffusion maps},
  author={Singer, Amit and Erban, Radek and Kevrekidis, Ioannis G and Coifman, Ronald R},
  journal={Proceedings of the National Academy of Sciences},
  volume={106},
  number={38},
  pages={16090--16095},
  year={2009},
  publisher={National Academy of Sciences}
}

@article{talmon2013empirical,
  title={Empirical intrinsic geometry for nonlinear modeling and time series filtering},
  author={Talmon, Ronen and Coifman, Ronald R},
  journal={Proceedings of the National Academy of Sciences},
  volume={110},
  number={31},
  pages={12535--12540},
  year={2013},
  publisher={National Academy of Sciences}
}

@inproceedings{davis2007information,
  title={Information-theoretic metric learning},
  author={Davis, Jason V and Kulis, Brian and Jain, Prateek and Sra, Suvrit and Dhillon, Inderjit S},
  booktitle={Proceedings of the 24th international conference on Machine learning},
  pages={209--216},
  year={2007}
}

@article{lebanon2012learning,
  title={Learning riemannian metrics},
  author={Lebanon, Guy},
  journal={arXiv preprint arXiv:1212.2474},
  year={2012}
}

@article{diepeveen2024riemannian,
  title={Riemannian geometry for efficient analysis of protein dynamics data},
  author={Diepeveen, Willem and Esteve-Yag{\"u}e, Carlos and Lellmann, Jan and {\"O}ktem, Ozan and Sch{\"o}nlieb, Carola-Bibiane},
  journal={Proceedings of the National Academy of Sciences},
  volume={121},
  number={33},
  pages={e2318951121},
  year={2024},
  publisher={National Academy of Sciences}
}

@article{hauberg2012geometric,
  title={A geometric take on metric learning},
  author={Hauberg, S{\o}ren and Freifeld, Oren and Black, Michael},
  journal={Advances in Neural Information Processing Systems},
  volume={25},
  year={2012}
}

@article{arvanitidis2017latent,
  title={Latent space oddity: on the curvature of deep generative models},
  author={Arvanitidis, Georgios and Hansen, Lars Kai and Hauberg, S{\o}ren},
  journal={arXiv preprint arXiv:1710.11379},
  year={2017}
}
\bibliographystyle{plainnat}

\newpage
\appendix

\section*{Appendix contents}

\begingroup
\small
\setlength{\parindent}{0pt}
\setlength{\parskip}{0.15em}

\hyperref[app:notation]{Notation}
\dotfill
\pageref{app:notation}

\hyperref[app:proofs]{A. Naturality of the LogDet divergence}
\dotfill
\pageref{app:proofs}

\hyperref[app:low-rank]{B. Low-rank loss derivation}
\dotfill
\pageref{app:low-rank}

\hyperref[app:identifiability]{C. Identifiability from partial observations}
\dotfill
\pageref{app:identifiability}

\vspace{0.15em}
\textbf{\hyperref[app:exp-details]{D. Experimental details}}
\dotfill
\pageref{app:exp-details}

\hspace{1.5em}\hyperref[app:synthetic]{D.1 Recovering process geometry}
\dotfill
\pageref{app:synthetic}

\hspace{1.5em}\hyperref[app:mnist]{D.2 Image geometry through augmentation}
\dotfill
\pageref{app:mnist}

\hspace{1.5em}\hyperref[app:videos]{D.3 Natural image geometry from videos}
\dotfill
\pageref{app:videos}

\hspace{1.5em}\hyperref[app:md]{D.4 Equipping latent representations with process geometry}
\dotfill
\pageref{app:md}

\endgroup

\section*{Notation}
\label{app:notation}

\begingroup
\small
\renewcommand{\arraystretch}{1.15}
\noindent
\begin{tabularx}{\linewidth}{@{}lX@{}}
\toprule
Symbol & Meaning \\
\midrule
$\mathcal X,\mu$
    & Latent state space and sampling measure. \\
$\alpha,\Phi,X$
    & Latent state, observation map, and observation
      $X=\Phi(\alpha)$. \\
$d,D,r$
    & Intrinsic dimension, observation dimension,
      and number of columns in the low-rank factor. \\
$\tau$
    & Transition lag or perturbation scale. \\
$\mathcal L,\Gamma$
    & Diffusion generator and carr\'e du champ operator. \\
$g,g^{-1}$
    & Riemannian metric and cometric. \\
$D_x\Psi$
    & Differential of $\Psi$ at $x$. \\
$\Phi^*,\Phi_\sharp$
    & Pullback of functions and pushforward of
      measures or the CDC, as appropriate. \\
$G_\Phi$
    & Coordinate matrix of the observed infinitesimal CDC. \\
$G_\tau$
    & Conditional second moment of the observed
      increment, divided by $2\tau$. \\
$G_{\tau,\mathrm{Cov}}$
    & Conditional covariance of the successor,
      divided by $2\tau$. \\
$m_\tau^\phi,G_\tau^\theta$
    & Learned conditional successor mean and
      learned finite-scale CDC matrix. \\
$M^\theta,\epsilon^\theta$
    & Low-rank factor and learned isotropic floor in
      $G^\theta=M^\theta(M^\theta)^\top+\epsilon^\theta I_D$. \\
$\lambda$
    & Fixed additive regularisation of target
      and model matrices. \\
$\mathcal D_\omega,\mathcal D_{\mathrm{LD}}$
    & Bregman divergence and LogDet divergence. \\
$\mathbb S^D,\mathbb S_+^D,\mathbb S_{++}^D$
    & Symmetric, positive-semidefinite, and
      positive-definite $D\times D$ matrices. \\
\bottomrule
\end{tabularx}
\endgroup

\section{Naturality of the LogDet divergence}
\label{app:proofs}

We prove that congruence invariance characterises the LogDet
divergence, up to a positive scale, within the class of Bregman
divergences considered in \Cref{prop:ld-naturality}.
The argument first identifies the local Hessian form of a Bregman
divergence and classifies congruence-invariant bilinear forms.
Requiring such a form to be a Hessian then determines the LogDet
generator up to affine terms. Naturality under nonlinear changes
of observation follows pointwise from observation equivariance.

\begin{lemma}[Local Bregman geometry]
\label{lem:bregman-local}
Let $\mathcal{D}_\omega$ be as in \Cref{def:bregman_div}, with
$\omega\in C^2(\mathcal A)$. For $B\in\operatorname{int}(\mathcal A)$,
\[
\mathcal{D}_\omega(B+\epsilon H,B)
=
\frac{\epsilon^2}{2}\,g^\omega_B(H,H)+o(\epsilon^2),
\qquad
g^\omega_B(H,K):=\nabla^2\omega(B)[H,K].
\]
Thus $\mathcal{D}_\omega$ locally induces the Hessian metric $g^\omega$.
\end{lemma}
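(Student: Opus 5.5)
The plan is a second-order Taylor expansion of the generator $\omega$ about $B$, with remainder control. Since $B\in\operatorname{int}(\mathcal A)$, there is $\epsilon_0>0$ such that the whole segment $\{B+t H: |t|\le\epsilon_0\}$ lies in $\mathcal A$. On this segment define the scalar function $\phi(t):=\omega(B+tH)$. Because $\omega\in C^2(\mathcal A)$, $\phi$ is $C^2$ on $[-\epsilon_0,\epsilon_0]$, with
\[
\phi'(t)=\langle\nabla\omega(B+tH),H\rangle,
\qquad
\phi''(t)=\nabla^2\omega(B+tH)[H,H].
\]

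Substituting $A=B+\epsilon H$ into \Cref{def:bregman_div} gives
\[
\mathcal{D}_\omega(B+\epsilon H,B)=\phi(\epsilon)-\phi(0)-\epsilon\,\phi'(0).
\]
This is exactly the first-order Taylor remainder of $\phi$ at $0$. I will write it in integral form:
\[
\phi(\epsilon)-\phi(0)-\epsilon\phi'(0)=\int_0^\epsilon(\epsilon-t)\,\phi''(t)\,\mathrm{d}t .
\]
Now split $\phi''(t)=\phi''(0)+(\phi''(t)-\phi''(0))$. The constant part integrates to $\tfrac{\epsilon^2}{2}\phi''(0)=\tfrac{\epsilon^2}{2}\nabla^2\omega(B)[H,H]$, which is the claimed leading term $\tfrac{\epsilon^2}{2}g^\omega_B(H,H)$.

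The only step needing care is showing the remaining term is $o(\epsilon^2)$. I will bound it by
\[
\frac{\epsilon^2}{2}\,\sup_{|t|\le|\epsilon|}\bigl|\nabla^2\omega(B+tH)[H,H]-\nabla^2\omega(B)[H,H]\bigr|.
\]
Continuity of $\nabla^2\omega$ at $B$ makes the supremum tend to $0$ as $\epsilon\to0$. If uniformity over bounded $H$ is wanted, the same bound holds with $\|H\|^2$ times the operator-norm oscillation of $\nabla^2\omega$ on a small ball.

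Finally, $g^\omega_B$ is symmetric because $\omega$ is $C^2$, and it is positive semidefinite because $\omega$ is convex. It is therefore a (possibly degenerate) Hessian metric, which justifies the closing remark of the lemma.
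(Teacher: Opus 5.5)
Your proposal is correct and follows essentially the same route as the paper: a second-order Taylor expansion of $\omega$ at $B$ in direction $H$, in which the zeroth- and first-order terms cancel against the definition of $\mathcal{D}_\omega$. The only difference is that you justify the $o(\epsilon^2)$ remainder explicitly, via the integral form and continuity of $\nabla^2\omega$ at $B$, whereas the paper quotes the Peano-form expansion directly.
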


\begin{proof}
Taylor expansion gives
\[
\omega(B+\epsilon H)
=
\omega(B)
+\epsilon\langle\nabla\omega(B),H\rangle_{\mathrm F}
+\frac{\epsilon^2}{2}\nabla^2\omega(B)[H,H]
+o(\epsilon^2).
\]
Substitution into \Cref{def:bregman_div} cancels the zeroth- and
first-order terms, yielding the claim.
\end{proof}

\begin{lemma}[Congruence-invariant bilinear forms]
\label{lem:congruence-metric}
Let $g_B$ be a symmetric bilinear form on $\mathbb S^D$, depending on
$B\in\mathbb S_{++}^D$, and satisfying
\[
g_{JBJ^\top}(JHJ^\top,JKJ^\top)=g_B(H,K)
\]
for all $B\succ0$, $H,K\in\mathbb S^D$, and $J\in\mathrm{GL}(D)$.
Then, for $D\geq2$,
\[
g_B(H,K)
=
a\,\operatorname{tr}(B^{-1}HB^{-1}K)
+
b\,\operatorname{tr}(B^{-1}H)\operatorname{tr}(B^{-1}K)
\]
for some $a,b\in\mathbb R$.
\end{lemma}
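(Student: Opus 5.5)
The plan is to reduce to the identity: transitivity of $\mathrm{GL}(D)$ on $\mathbb S_{++}^D$ pins down $g_B$ at every $B$ from $g_I$. After that, I classify the $\mathrm{O}(D)$-invariant symmetric bilinear forms on $\mathbb S^D$.

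\textbf{Reduction to $B=I$.} Given $B\succ0$, take $J=B^{-1/2}$, so that $JBJ^\top=I$. Invariance gives
\[
g_B(H,K)=g_I(B^{-1/2}HB^{-1/2},\,B^{-1/2}KB^{-1/2}).
\]
It therefore suffices to show
\[
g_I(H,K)=a\operatorname{tr}(HK)+b\operatorname{tr}(H)\operatorname{tr}(K).
\]
Substituting back, and using cyclicity of the trace, $\operatorname{tr}(B^{-1/2}HB^{-1}KB^{-1/2})=\operatorname{tr}(B^{-1}HB^{-1}K)$ and $\operatorname{tr}(B^{-1/2}HB^{-1/2})=\operatorname{tr}(B^{-1}H)$, which gives the stated formula. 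The stabiliser of $I$ is $\mathrm{O}(D)$, so the hypothesis restricted to $B=I$ says exactly that $g_I(QHQ^\top,QKQ^\top)=g_I(H,K)$ for all orthogonal $Q$.

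\textbf{Classifying $\mathrm{O}(D)$-invariant forms.} I would argue directly rather than cite representation theory. Let $q(H)=g_I(H,H)$. Every $H\in\mathbb S^D$ is orthogonally diagonalisable, $H=Q\Lambda Q^\top$, so $q(H)=q(\Lambda)$. Hence $q$ is determined by its restriction to diagonal matrices, where it is a quadratic form in $\lambda=(\lambda_1,\dots,\lambda_D)$ invariant under permutations (permutation matrices are orthogonal). A permutation-invariant quadratic form on $\R^D$ has the form
\[
\alpha\sum_i\lambda_i^2+\beta\sum_{i\neq j}\lambda_i\lambda_j=(\alpha-\beta)\sum_i\lambda_i^2+\beta\Big(\sum_i\lambda_i\Big)^2 .
\]
This is the space spanned by $\sum\lambda_i^2$ and $(\sum\lambda_i)^2$, and it is two-dimensional for $D\ge2$. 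Since $\sum\lambda_i^2=\operatorname{tr}(H^2)$ and $\sum\lambda_i=\operatorname{tr}H$, we get
\[
q(H)=a\operatorname{tr}(H^2)+b(\operatorname{tr}H)^2 .
\]
Polarisation then recovers the symmetric bilinear form $g_I(H,K)=\tfrac12\big(q(H+K)-q(H)-q(K)\big)$ in the claimed form.

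\textbf{Main obstacle.} The subtle point is the passage from the quadratic form on all of $\mathbb S^D$ to its values on diagonal matrices. The key is that $q(H)=q(Q\Lambda Q^\top)=q(\Lambda)$ uses only invariance of the quadratic form, not of the bilinear form evaluated at two different matrices. This is why I work with $q$ and polarise at the end, instead of trying to diagonalise $H$ and $K$ simultaneously, which is impossible in general. It also explains the restriction to $D\ge2$, which is where the two invariants $\operatorname{tr}(H^2)$ and $(\operatorname{tr}H)^2$ are linearly independent.
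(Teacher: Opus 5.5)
Your proof is correct. Its first half matches the paper: the reduction to $B=I$ via $J=B^{-1/2}$, the stabiliser $\mathrm{O}(D)$, and the trace-cyclicity substitution at the end.

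The genuine difference is how you classify $\mathrm{O}(D)$-invariant symmetric bilinear forms on $\mathbb S^D$. The paper argues representation-theoretically. It splits $\mathbb S^D=\mathbb R I\oplus\mathbb S_0^D$, asserts that $\mathrm{O}(D)$ acts irreducibly on the traceless part, and reads off the two-parameter family. This implicitly uses Schur's lemma: there is a single invariant symmetric form on each irreducible summand, and there are no cross terms because $\mathbb S_0^D$ is a nontrivial irreducible with no nonzero invariant linear functional. You instead use the spectral theorem to reduce the quadratic form $q(H)=g_I(H,H)$ to diagonal matrices, recognise it there as a permutation-invariant quadratic form in the eigenvalues, and recover $g_I$ by polarisation.

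Your route is elementary and self-contained. It needs neither the irreducibility of $\mathbb S_0^D$ nor the real form of Schur's lemma, both of which the paper uses without proof. Working with $q$ rather than $g_I(H,K)$ also correctly sidesteps the fact that $H$ and $K$ generally cannot be diagonalised simultaneously. The paper's route is shorter and more structural: the number of free parameters equals the number of summands in a multiplicity-free irreducible decomposition, an argument that transfers directly to other group actions once the decomposition is known.

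One small correction to your closing remark. The lemma's conclusion also holds for $D=1$: there $\operatorname{tr}(B^{-1}HB^{-1}K)=\operatorname{tr}(B^{-1}H)\operatorname{tr}(B^{-1}K)$, and only the uniqueness of $(a,b)$ fails. The hypothesis $D\geq2$ is really needed in the paper's next step, the proof of \Cref{prop:ld-naturality}, where a nonzero traceless $L$ is used to force $b=0$.
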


\begin{proof}
The congruence action of $\mathrm{GL}(D)$ on $\mathbb S_{++}^D$ is
transitive, since $B^{-1/2}BB^{-1/2}=I$ for every $B\succ0$.
Hence $g$ is determined by its value at $I$. Since the stabilizer of
$I$ is $O(D)$,
\[
g_I(QHQ^\top,QKQ^\top)=g_I(H,K),
\qquad Q\in O(D).
\]

Under orthogonal conjugation,
\[
\mathbb S^D
=
\mathbb RI\oplus\mathbb S_0^D,
\qquad
\mathbb S_0^D
:=
\{H\in\mathbb S^D:\operatorname{tr}(H)=0\},
\]
and the action on $\mathbb S_0^D$ is irreducible. Therefore every
$O(D)$-invariant symmetric bilinear form has the form
\[
g_I(H,K)
=
a\,\operatorname{tr}(HK)
+
b\,\operatorname{tr}(H)\operatorname{tr}(K).
\]

Applying congruence invariance with $J=B^{-1/2}$ gives
\[
g_B(H,K)
=
g_I\!\left(
B^{-1/2}HB^{-1/2},
B^{-1/2}KB^{-1/2}
\right).
\]
Substituting the expression for $g_I$ and using cyclicity of the trace
yields
\[
g_B(H,K)
=
a\,\operatorname{tr}(B^{-1}HB^{-1}K)
+
b\,\operatorname{tr}(B^{-1}H)\operatorname{tr}(B^{-1}K),
\]
as claimed.
\end{proof}

\logdetnaturality*
\begin{proof}
Assume first that $\mathcal{D}_\omega$ is congruence invariant. By
\Cref{lem:bregman-local}, its Hessian form satisfies
\[
g^\omega_{JBJ^\top}(JHJ^\top,JHJ^\top)
=
g^\omega_B(H,H).
\]
By the polarization identity
$g_B^\omega(H,K)
=\tfrac12\bigl[g_B^\omega(H+K,H+K)-g_B^\omega(H,H)-g_B^\omega(K,K)\bigr]$,
invariance of the quadratic form implies invariance of the full
bilinear form. Thus $g^\omega$ is congruence invariant, and
\Cref{lem:congruence-metric} gives, for $D\geq2$,
\begin{equation}
g^\omega_B(H,K)
=
a\,\operatorname{tr}(B^{-1}HB^{-1}K)
+
b\,\operatorname{tr}(B^{-1}H)\operatorname{tr}(B^{-1}K).
\label{eq:invariant-hessian}
\end{equation}

It remains to impose that $g^\omega$ is a Hessian metric. Since
$g^\omega=\nabla^2\omega$, we have
$Dg^\omega_B[L](H,K)=\nabla^3\omega(B)[L,H,K]$, which is symmetric in
$L,H,K$. Using $D(B^{-1})[L]=-B^{-1}LB^{-1}$, differentiation of
\eqref{eq:invariant-hessian} at $B=I$ in direction $L$ gives
\[
\begin{aligned}
Dg^\omega_I[L](H,K)
={}&
-a\bigl[
\operatorname{tr}(LHK)+\operatorname{tr}(HLK)
\bigr]
\\
&-b\bigl[
\operatorname{tr}(LH)\operatorname{tr}(K)
+\operatorname{tr}(H)\operatorname{tr}(LK)
\bigr].
\end{aligned}
\]
Exchanging $L$ and $H$ and using the symmetry of the third derivative
therefore yields
\[
b\Bigl[
\operatorname{tr}(H)\operatorname{tr}(LK)
-
\operatorname{tr}(L)\operatorname{tr}(HK)
\Bigr]
=0.
\]
Taking $H=I$ and $K=L$, with $L\neq0$ traceless, gives
$b\,D\,\operatorname{tr}(L^2)=0$, hence $b=0$. Therefore
\[
\nabla^2\omega(B)[H,K]
=
a\,\operatorname{tr}(B^{-1}HB^{-1}K)
=
a\,\nabla^2(-\log\det B)[H,K].
\]
Thus $\omega(B)+a\log\det B$ has vanishing Hessian and is affine:
\[
\omega(B)
=
-a\log\det B+\langle C,B\rangle+c_0.
\]
Affine terms do not affect Bregman divergences, while strict convexity
implies $a>0$. Consequently,
\[
\mathcal{D}_\omega(A,B)
=
a\Bigl[
\operatorname{tr}(B^{-1}A)
-\log\det(B^{-1}A)-D
\Bigr].
\]

For $D=1$, congruence invariance reduces to scale invariance. Applied
to the local Hessian form, this gives $\omega''(x)=a/x^2$, and the same
conclusion follows.

Conversely,
$(JBJ^\top)^{-1}(JAJ^\top)
=J^{-\top}(B^{-1}A)J^\top$ is similar to $B^{-1}A$.
Since trace and determinant are invariant under similarity,
\[
\mathcal{D}_{\mathrm{LD}}(JAJ^\top,JBJ^\top)
=
\mathcal{D}_{\mathrm{LD}}(A,B),
\]
which completes the proof.
\end{proof}

\nonlinldnat*
\begin{proof}
By observation equivariance (\eqref{eq:conjugation_pushfoward_cdc}),
\[
G_{\Psi\circ\Phi}(\Psi(x)) = (D_x\Psi)G_\Phi(x)(D_x\Psi)^\top.
\]
Since $\Psi$ is a local diffeomorphism at $x$, $D_x\Psi\in\mathrm{GL}(D)$.
The result then follows directly from the congruence invariance of $\mathcal{D}_{\mathrm{LD}}$ in \Cref{prop:ld-naturality}.
\end{proof}   

\section{Low-rank loss derivation}
\label{app:low-rank}

We derive the low-rank form of the regularised LogDet loss in \Cref{subsec:igm-cond-training}.
For $X,Y\in\mathbb R^D$ and fixed $\tau>0$, define
\[
e=Y-m_\tau^\phi(X), \qquad A_\phi=\frac{ee^\top}{2\tau}.
\]
The original CBMM target is recovered by setting $e=Y-X$.
When the conditional mean is pretrained and frozen, as in the molecular-dynamics experiments, $A_\phi$ is fixed during covariance optimisation.

We parametrise $G_\tau^\theta=M^\theta(M^\theta)^\top+\epsilon^\theta I_D$, where $M^\theta\in\mathbb R^{D\times r}$ and $\epsilon^\theta\geq 0$, suppressing dependence on $X$ and $\tau$.
For fixed regularisation $\lambda>0$, we use
\[
\mathcal{D}_{\mathrm{LD}}^\lambda(A_\phi,G_\tau^\theta)
=
\mathcal{D}_{\mathrm{LD}}
\bigl(A_\phi+\lambda I_D,G_\tau^\theta+\lambda I_D\bigr).
\]
Write
\[
\begin{gathered}
\eta=\epsilon^\theta+\lambda,
\qquad
B=G_\tau^\theta+\lambda I_D
=M^\theta(M^\theta)^\top+\eta I_D,\\
K_\eta=I_r+\eta^{-1}(M^\theta)^\top M^\theta,
\qquad
z=(M^\theta)^\top e.
\end{gathered}
\]
The regularised covariance score
\begin{equation}
\label{eq:low-rank-logdet}
\mathcal L_{\mathrm{score}}
=
\frac{e^\top B^{-1}e}{2\tau}
+\lambda\operatorname{tr}(B^{-1})
+\log\det B
\end{equation}
differs from the regularised LogDet divergence by the
target-only terms:
\[
\mathcal{D}_{\mathrm{LD}}^\lambda(A_\phi,G_\tau^\theta)
=
\mathcal L_{\mathrm{score}}
-\log\det(A_\phi+\lambda I_D)-D.
\]
These terms can be discarded when the target is fixed,
but not generally when the mean is learned.

The matrix determinant lemma and Woodbury identity give
\[
\log\det B=D\log\eta+\log\det K_\eta,
\qquad
B^{-1}
=
\eta^{-1}I_D
-\eta^{-2}M^\theta K_\eta^{-1}(M^\theta)^\top.
\]
In particular,
\[
\operatorname{tr}(B^{-1})
=
\eta^{-1}\left(D-r+\operatorname{tr}(K_\eta^{-1})\right).
\]
Substitution into \eqref{eq:low-rank-logdet} yields
\begin{equation}
\label{eq:low-rank-full-score}
\begin{aligned}
\mathcal L_{\mathrm{score}}
={}&
D\log\eta+\log\det K_\eta
+\frac{\|e\|^2}{2\tau\eta}
-\frac{z^\top K_\eta^{-1}z}{2\tau\eta^2}\\
&+
\frac{\lambda}{\eta}
\left(D-r+\operatorname{tr}(K_\eta^{-1})\right).
\end{aligned}
\end{equation}
Applying the determinant lemma to the target gives the
full regularised LogDet divergence:
\begin{equation}
\label{eq:low-rank-full-objective}
\begin{aligned}
\mathcal{D}_{\mathrm{LD}}^\lambda(A_\phi,G_\tau^\theta)
={}&
\mathcal L_{\mathrm{score}}-D\log\lambda\\
&-\log\!\left(1+\frac{\|e\|^2}{2\tau\lambda}\right)-D.
\end{aligned}
\end{equation}
Both expressions retain all parameter-dependent terms.

For a fixed target and $\epsilon^\theta=0$, we have
$\eta=\lambda$.
Writing $K=I_r+\lambda^{-1}(M^\theta)^\top M^\theta$,
both objectives reduce, up to parameter-independent terms,
to
\begin{equation}
\label{eq:low-rank-objective}
\log\det K+\operatorname{tr}(K^{-1})
-\frac{z^\top K^{-1}z}{2\tau\lambda^2}.
\end{equation}
This applies to $e=Y-X$ and to
$e=Y-m_\tau^\phi(X)$ when the mean is held fixed.
For a learned mean with fixed $\eta=\lambda$, the
regularised covariance score requires adding
$\|e\|^2/(2\tau\lambda)$ to
\eqref{eq:low-rank-objective}.
The full regularised LogDet divergence requires this
residual-norm term and the additional term
$-\log(1+\|e\|^2/(2\tau\lambda))$,
up to parameter-independent constants.
If the floor is learned, all $\eta$-dependent terms in
\eqref{eq:low-rank-full-score} must also be retained.
Thus, the regularised covariance score and divergence
are equivalent up to parameter-independent terms for a
fixed target, but not generally for joint
mean--covariance training.

All matrix inverses and determinants needed to evaluate
these objectives involve only the $r\times r$ matrix
$K_\eta$, while the target determinant reduces to a
scalar expression.
The remaining computations use $\|e\|^2$ and
$z=(M^\theta)^\top e$.
Using a Cholesky factorisation and triangular solves gives
total cost $O(Dr^2+r^3)$ and memory $O(Dr+r^2)$ per sample,
excluding network evaluation, without materialising a
$D\times D$ cometric.

\section{Identifiability from partial observations}
\label{app:identifiability}

We characterise when multiple partial observations, as described in \Cref{subsec:diffspaces}, uniquely determine a cometric from its differential pushforwards.
For differentiable observation maps
$\Psi_o:\mathbb R^D\to\mathbb R^{D_o}$, $o=1,\ldots,O$, and a fixed $x\in\mathbb R^D$, define
\[
\Lambda_x(A)
:=
\bigl((D_x\Psi_o)A(D_x\Psi_o)^\top\bigr)_{o=1}^O,
\qquad A\in\mathbb S^D.
\]
For a subspace $V\subseteq\mathbb R^D$, let
\[
\operatorname{Sym}^2(V)
:=
\operatorname{span}\{uv^\top+vu^\top:u,v\in V\}
\subseteq\mathbb S^D.
\]

\begin{lemma}[Identifiability from partial observations]
\label{lem:partial-identifiability}
The differential observations $\Lambda_x(G_\tau(x))$
uniquely determine every $G_\tau(x)\in\mathbb S_+^D$
if and only if
\[
\sum_{o=1}^O
\operatorname{Sym}^2\!\left(
\operatorname{range}\bigl((D_x\Psi_o)^\top\bigr)
\right)
=
\mathbb S^D,
\]
equivalently, $\ker\Lambda_x=\{0\}$.
\end{lemma}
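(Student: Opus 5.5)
We will treat $\Lambda_x$ as a linear map $\mathbb S^D\to\prod_{o}\mathbb S^{D_o}$ and prove the two equivalences separately: first that unique determination on the cone $\mathbb S_+^D$ is the same as $\ker\Lambda_x=\{0\}$, and then that $\ker\Lambda_x=\{0\}$ is the same as the spanning condition. Throughout, we write $J_o:=D_x\Psi_o$ and use the Frobenius inner product $\langle A,B\rangle=\operatorname{tr}(AB)$ on each space of symmetric matrices.

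\emph{Unique determination $\Leftrightarrow$ trivial kernel.} If $\ker\Lambda_x=\{0\}$, then $\Lambda_x$ is injective on all of $\mathbb S^D$, and in particular on $\mathbb S_+^D$. Conversely, suppose some nonzero $A\in\mathbb S^D$ satisfies $\Lambda_x(A)=0$. Take $G=I_D$, which lies in the interior of $\mathbb S_+^D$, and choose $s>0$ small enough (for instance $s<1/\|A\|_{\mathrm{op}}$) that $I_D+sA\succ0$. Then $I_D$ and $I_D+sA$ are distinct elements of $\mathbb S_+^D$ with the same image under $\Lambda_x$, so unique determination fails. The only point to note is that the open-interior argument is what allows us to pass from the cone to the whole linear space.

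\emph{Trivial kernel $\Leftrightarrow$ spanning condition.} We compute the adjoint of $\Lambda_x$. Since $\langle J_oAJ_o^\top,C\rangle=\operatorname{tr}(J_oAJ_o^\top C)=\langle A,J_o^\top CJ_o\rangle$ for $C\in\mathbb S^{D_o}$, we have
\[
\Lambda_x^*\bigl((C_o)_{o=1}^O\bigr)=\sum_{o=1}^O J_o^\top C_oJ_o .
\]
The image of the single map $C\mapsto J_o^\top CJ_o$ on $\mathbb S^{D_o}$ is $\operatorname{Sym}^2(\operatorname{range}(J_o^\top))$. This holds because $\mathbb S^{D_o}$ is spanned by the matrices $ab^\top+ba^\top$, and
\[
J_o^\top(ab^\top+ba^\top)J_o=(J_o^\top a)(J_o^\top b)^\top+(J_o^\top b)(J_o^\top a)^\top ,
\]
where $J_o^\top a$ and $J_o^\top b$ range over all of $\operatorname{range}(J_o^\top)$. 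Hence $\operatorname{range}\Lambda_x^*$ equals the sum of subspaces in the statement.

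By finite-dimensional linear algebra, $\ker\Lambda_x=(\operatorname{range}\Lambda_x^*)^\perp$ inside $\mathbb S^D$. So the kernel is trivial exactly when that sum is all of $\mathbb S^D$, which completes the chain of equivalences. The main step to get right is the identification of the adjoint's image with $\operatorname{Sym}^2$ of the row spaces; the rest is routine.
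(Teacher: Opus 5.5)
Your proposal is correct and follows essentially the same route as the paper: you compute the Frobenius adjoint $\Lambda_x^*\bigl((C_o)_{o=1}^O\bigr)=\sum_o J_o^\top C_oJ_o$, identify its image with $\sum_o\operatorname{Sym}^2(\operatorname{range}(J_o^\top))$, use $\ker\Lambda_x=(\operatorname{im}\Lambda_x^*)^\perp$, and pass from $\mathbb S_+^D$ to $\mathbb S^D$ by perturbing $I_D$ along a kernel element. The only differences are cosmetic: you compare $I_D$ with $I_D+sA$ where the paper uses $I_D\pm\epsilon A$, and you spell out the spanning argument for the adjoint's image, which the paper simply asserts.
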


\begin{proof}
Write $J_o=D_x\Psi_o$.
Under the Frobenius inner product, the adjoint of
$A\mapsto J_oAJ_o^\top$ is $H_o\mapsto J_o^\top H_oJ_o$,
whose image is
$\operatorname{Sym}^2(\operatorname{range}(J_o^\top))$.
Hence
\[
(\ker\Lambda_x)^\perp
=
\operatorname{im}\Lambda_x^*
=
\sum_{o=1}^O
\operatorname{Sym}^2(\operatorname{range}(J_o^\top)),
\]
which proves the criterion for injectivity on
$\mathbb S^D$.
If $0\neq A\in\ker\Lambda_x$, then, for sufficiently
small $\epsilon>0$, the distinct matrices
$I_D+\epsilon A$ and $I_D-\epsilon A$ are positive
definite and have identical observations.
Thus injectivity on $\mathbb S^D$ is also necessary
for identifiability on $\mathbb S_+^D$; sufficiency
is immediate.
\end{proof}

The condition requires the views to resolve every
quadratic interaction, not merely every direction:
$\Lambda_x$ must have rank $D(D+1)/2$.
For unrestricted ambient cometrics and coordinate
projections $\Psi_o(x)=(x_i)_{i\in S_o}$, this means
that every coordinate pair $\{i,j\}$, including $i=j$,
must occur together in at least one view.
We apply this criterion to the partial observations
in the synthetic experiments in \Cref{subsec:synthetic}.

\section{Experimental details}
\label{app:exp-details}

\subsection{Recovering process geometry}
\label{app:synthetic}

\paragraph{Latent process and observation map.}
Let $d=2m$ and
$\alpha=(u_1,v_1,\ldots,u_m,v_m)\in\mathbb T^d$,
with angular coordinates represented in $[0,2\pi)$.
We sample independent latent anchors
$\alpha_0\sim\operatorname{Unif}(\mathbb T^d)$
and Brownian transitions
\[
\alpha_\tau
=
\bigl(\alpha_0+\sqrt{2\tau}\,\xi\bigr)\pmod{2\pi},
\qquad \xi\sim\mathcal N(0,I_d),
\]
where $\xi$ is independent of $\alpha_0$.
This process has generator
$\Delta=\sum_{i=1}^d\partial_{\alpha_i}^2$
and cometric $I_d$.

We observe each two-dimensional factor through the
torus-of-revolution embedding
\begin{equation}
\label{eq:torus-block-embedding}
\Phi_0(u,v)
=
\begin{pmatrix}
(R+r\cos v)\cos u\\
(R+r\cos v)\sin u\\
r\sin v
\end{pmatrix},
\qquad R=2,\quad r=1.
\end{equation}
Concatenating the $m$ blocks gives
$\Phi=\Phi_0^{\times m}:\mathbb T^d\to\mathbb R^H$,
where $H=3d/2$.
For each block,
\[
(D_{(u,v)}\Phi_0)^\top D_{(u,v)}\Phi_0
=
\operatorname{diag}\!\left((R+r\cos v)^2,r^2\right),
\]
so the observation-induced metric differs from the
flat process metric.

To vary the observation dimension, we use the isometric
lift $\Psi_D(\alpha)=Q_D\Phi(\alpha)$ for $D\geq H$,
where $Q_D\in\mathbb R^{D\times H}$ satisfies
$Q_D^\top Q_D=I_H$.
For $D=H$, we set $Q_D=I_H$; for $D>H$, we obtain
$Q_D$ by QR factorisation of a fixed Gaussian matrix.
The observation-induced metric is preserved:
\[
(D_\alpha\Psi_D)^\top D_\alpha\Psi_D
=
(D_\alpha\Phi)^\top D_\alpha\Phi.
\]
\paragraph{Neural estimators.}
All neural CDC estimators use the low-rank parametrisation
of App.~\ref{app:low-rank} with rank $r=64$.
The backbone is a FiLM-conditioned residual MLP with
width $512$, four residual blocks, and a $128$-dimensional
sinusoidal embedding of the scale $\tau$.
All models are trained with Adam using a learning rate
of $2\times10^{-4}$ and batch size $1024$.

\paragraph{Evaluation and uncertainty.}
Within each setting, all methods are evaluated on the
same fixed set of $8192$ held-out latent anchors
$\alpha\sim\operatorname{Unif}(\mathbb T^d)$,
excluded from training and model selection.
The true tangent projector in observation space is
\[
P(\alpha)
=
(D_\alpha\Psi_D)
\bigl((D_\alpha\Psi_D)^\top D_\alpha\Psi_D\bigr)^{-1}
(D_\alpha\Psi_D)^\top.
\]
For each method, let $\widehat P(\alpha)$ be the
orthogonal projector onto the span of the leading $d$
eigenvectors of its estimated ambient CDC at
$\Psi_D(\alpha)$.
We measure tangent recovery by
\begin{equation}
\label{eq:tangent-projector-error}
\mathcal E_P
=
\mathbb E_\alpha\left[
\frac{
\|\widehat P(\alpha)-P(\alpha)\|_{\mathrm F}^2
}{2d}
\right],
\end{equation}
which is the average over anchors of the mean squared
sine of the principal angles between the estimated
and true tangent spaces.

For IGM and FGM, we additionally recover the learned
cometric in latent coordinates:
\[
G^{\theta,\mathrm{lat}}(\alpha)
=
(D_\alpha\Psi_D)^\dagger
G^\theta(\Psi_D(\alpha))
\bigl((D_\alpha\Psi_D)^\dagger\bigr)^\top,
\]
where $\dagger$ denotes the Moore--Penrose pseudoinverse
and the evaluation scale is suppressed in the notation.
Since the true latent cometric is $I_d$, we report
the relative latent-cometric error
\begin{equation}
\label{eq:latent-cometric-error}
\mathcal E_\Gamma
=
\mathbb E_\alpha\left[
\frac{
\|G^{\theta,\mathrm{lat}}(\alpha)-I_d\|_{\mathrm F}
}{\sqrt d}
\right].
\end{equation}
We estimate both expectations by averaging over the
held-out anchors and report averages across training
seeds.
For the sample-size experiments, curves show three-seed
means and bands span the seed range.
 
\paragraph{Geodesic optimisation.}
We compute geodesics in latent coordinates using the
metric $g^\theta$ obtained from the learned cometric.
For fixed endpoints $\alpha_0,\alpha_K\in\mathbb T^2$,
we optimise $K=64$ segments by minimising the midpoint
Dirichlet energy
\begin{equation}
\label{eq:geodesic-energy}
E(\alpha_0,\ldots,\alpha_K)
=
K\sum_{k=0}^{K-1}
\Delta\alpha_k^\top
g^\theta\!\left(\alpha_k+\tfrac12\Delta\alpha_k\right)
\Delta\alpha_k,
\qquad
\Delta\alpha_k
=
\operatorname{wrap}(\alpha_{k+1}-\alpha_k)
\in[-\pi,\pi)^2.
\end{equation}
Interior vertices are initialised along the shortest
wrapped linear path and optimised for $1200$ Adam
iterations with learning rate $0.03$, cosine-decayed
to zero.
Both estimators use the same endpoint pairs:
$((0.9,0.3),(3.3,1.6))$ and
$((3.7,1.4),(6.1,2.1))$.

\paragraph{Rank-deficient views.}
For \textcolor{reflink}{\textbf{(iv)}}, we set $d=2$
and replace the isotropic Brownian motion above with
an anisotropic diffusion on the flat torus:
\[
\alpha_\tau
=
\bigl(\alpha_0+\sqrt{2\tau}\,A^{1/2}\xi\bigr)
\pmod{2\pi},
\qquad
\xi\sim\mathcal N(0,I_2),
\qquad
A=
\begin{pmatrix}
1&0.6\\
0.6&1
\end{pmatrix}.
\]
This process has cometric $A$, whose off-diagonal
entries describe correlated fluctuations of the two
angular coordinates.
We observe it through winding characters
\[
\psi_{\mathbf m}(\alpha)
=
\bigl(\cos\langle\mathbf m,\alpha\rangle,
\sin\langle\mathbf m,\alpha\rangle\bigr),
\qquad
\mathbf m\in\mathbb Z^2\setminus\{0\}.
\]
Each map records the angle $\langle\mathbf m,\alpha\rangle$
modulo $2\pi$, so it is globally non-injective and its
differential has rank one.
Writing
$t_{\mathbf m}(\alpha)
=(-\sin\langle\mathbf m,\alpha\rangle,
\cos\langle\mathbf m,\alpha\rangle)^\top$,
we have
$D_\alpha\psi_{\mathbf m}
=t_{\mathbf m}(\alpha)\mathbf m^\top$ and
\[
(D_\alpha\psi_{\mathbf m})A(D_\alpha\psi_{\mathbf m})^\top
=
(\mathbf m^\top A\mathbf m)\,
t_{\mathbf m}(\alpha)t_{\mathbf m}(\alpha)^\top.
\]
Each differential observation therefore reveals only
the quadratic measurement $\mathbf m^\top A\mathbf m$.
We use the nested sets
\[
\begin{aligned}
\mathcal M_1&=\{(1,0)\},&
\mathcal M_2&=\mathcal M_1\cup\{(0,1)\},\\
\mathcal M_3&=\mathcal M_2\cup\{(1,1)\},&
\mathcal M_4&=\mathcal M_3\cup\{(1,-1)\}.
\end{aligned}
\]
By \Cref{lem:partial-identifiability}, these differential
observations identify an unrestricted latent cometric
precisely when
$\{\mathbf m\mathbf m^\top:\mathbf m\in\mathcal M_O\}$
spans $\mathbb S^2$.
The first two views reveal the diagonal entries but
leave the cross term unresolved; the third completes
the span, giving the threshold $O^\star=3$.

For each $O\in\{1,2,3,4\}$, we train IGM on $N=10^5$
anchors for $200$ epochs with four seeds.
We use rank $r=8$, width $512$, four residual blocks,
batch size $1024$, and
$\tau\sim\operatorname{Unif}[0.005,0.1]$,
with the debiased LogDet loss and $\lambda=0.1$.
We evaluate at $\tau=0.005$ on $4000$ held-out anchors
and report the latent-cometric error against $A$.

\subsection{Image geometry through augmentation}
\label{app:mnist}

\setlength{\columnsep}{18pt}
\begin{wrapfigure}{r}{0.33\linewidth}
    \vspace{-1em}
    \centering
    \includegraphics[width=\linewidth]{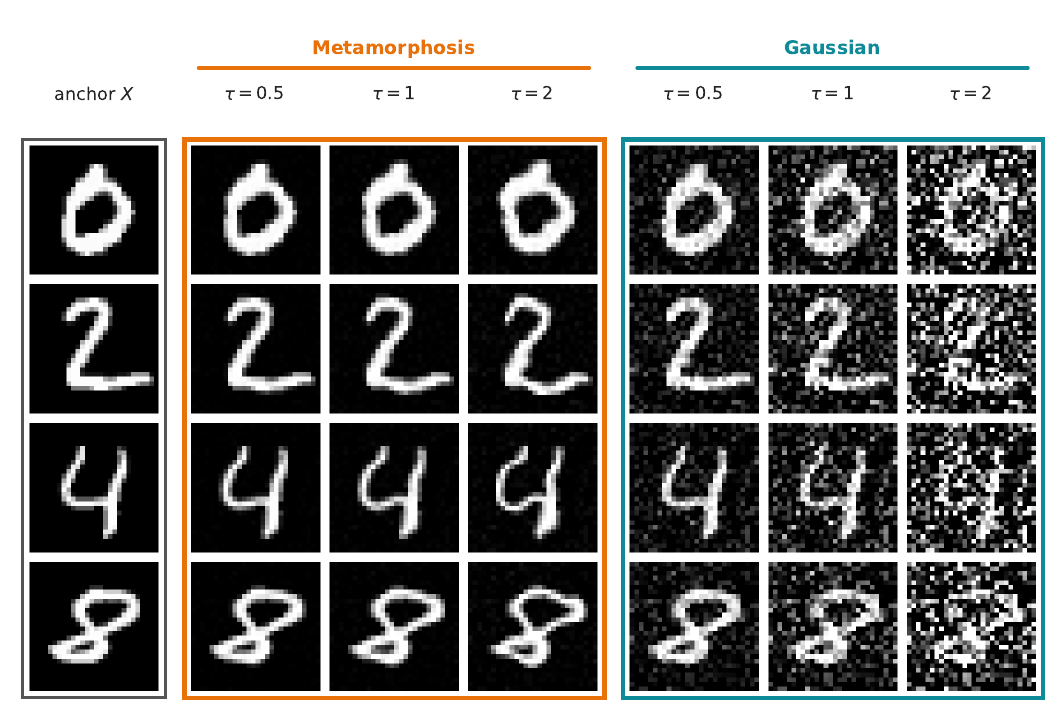}
    \caption{\textbf{Sampled image transitions.} Samples at increasing transition scales $\tau\in\{0.5,1,2\}$. \textcolor{transitioncolor}{\textbf{Orange}}: metamorphosis; \textcolor{gaussiancolor}{\textbf{teal}}: isotropic Gaussian perturbations.}
\label{fig:kernel_samples}
\end{wrapfigure}
\paragraph{Data and transition process.}
We train on the official MNIST training split using
$28\times28$ greyscale images.
For IGM, given an anchor $X$, we sample image momentum
$z=0.3\varepsilon$, where $\varepsilon\sim\mathcal N(0,I)$,
and generate a metamorphic transition by 
$v=-\eta K_\sigma\star(z\nabla X)$, with
$\phi_\tau=\exp(-\tau v)$, and  $Y=X\circ\phi_\tau +\frac{\tau\kappa}{2}(z+z\circ\phi_\tau)$.
Here, $K_\sigma$ is a Gaussian smoothing kernel,
$\star$ denotes spatial convolution, and $\phi_\tau$
is the time-$\tau$ flow of the stationary velocity
field $-v$.
The transition combines spatial deformation with
an additive intensity change.
We use $\eta=1.2$, $\kappa=0.05$, and $\sigma=2$ pixels
with $13\times13$ kernel support, sampling
$\tau\sim\operatorname{LogUnif}[0.2,0.4]$
independently per example.
Spatial derivatives use centred differences, and
generated transitions are not clipped.
Examples are shown in \Cref{fig:kernel_samples}.

\paragraph{Field estimation and training.}
IGM learns the uncentred forward field from metamorphic
transitions, while LD-RMM learns a mean-centred backward
field from isotropic Gaussian perturbations.
Both CDC estimators use a scale-conditioned U-Net
with base width $64$, channel multipliers $(1,2,2)$,
two residual blocks per resolution, and no dropout.
Single-head self-attention is applied at downsampling
factor $4$ and in the middle block.
A sinusoidal embedding of $\tau$, followed by a
two-layer MLP, conditions each residual block.
The CDC head outputs $100$ spatial channels, flattened
into $U_\theta\in\mathbb R^{100\times784}$,
and the cometric includes a learned isotropic floor. We use Adam with learning rate $2\times10^{-4}$,
batch size $128$, no weight decay, and gradient
clipping at norm $1$.
After the fixed validation objective plateaus for
$15$ epochs, we trigger a $100$-epoch cosine decay
of the learning rate to zero, with training capped
at $200$ epochs.

\paragraph{Target-directed trajectories.}
As described in the main text, we compute interpolating
paths by Riemannian optimisation, evaluating the learned
field at $\tau=0.3$.
For each source--target pair $(x_0,x_1)$, we take
normalised Riemannian-gradient steps on the target
potential $\frac12\|x-x_1\|_2^2$, using the learned
low-rank cometric:
\begin{equation}
\label{eq:mnist-normalized-natural-gradient}
g_k=x_k-x_1,
\qquad
d_k=
\frac{U_k^\top U_k g_k}{\|U_k g_k\|_2},
\qquad
x_{k+1}=x_k-\alpha d_k.
\end{equation}
Here, $U_k$ is evaluated at $x_k$, and the denominator
normalises the Riemannian-gradient direction.
We use $8192$ updates and calibrate the step size
$\alpha$ once per field using the same source--target
pairs.
The field is re-evaluated at every iterate, with no
denoising, projection, or post-step clipping.
We apply the MNIST-trained models to EMNIST letters
without fine-tuning.
\Cref{fig:emnist-scale} shows eigenvectors at additional
scales, and \Cref{fig:additional-image-interpolations}
shows additional paths.

\begin{figure}[h]
\centering

\begin{subfigure}[c]{0.32\linewidth}
    \centering
    \includegraphics[width=\linewidth]{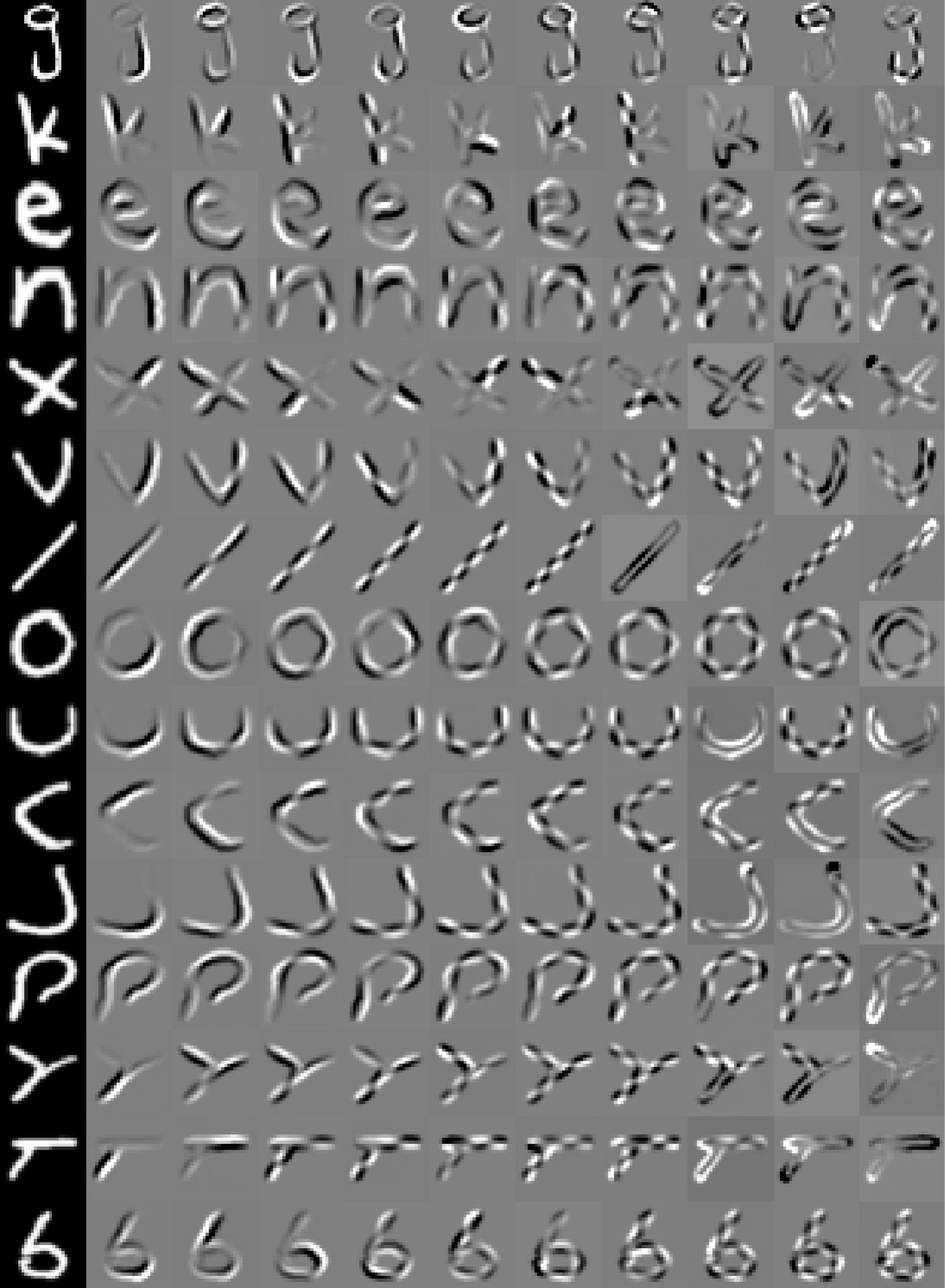}
    \caption*{\textcolor{reflink}{\textbf{(a)}}}
\end{subfigure}
\hfill
\begin{subfigure}[c]{0.32\linewidth}
    \centering
    \includegraphics[width=\linewidth]{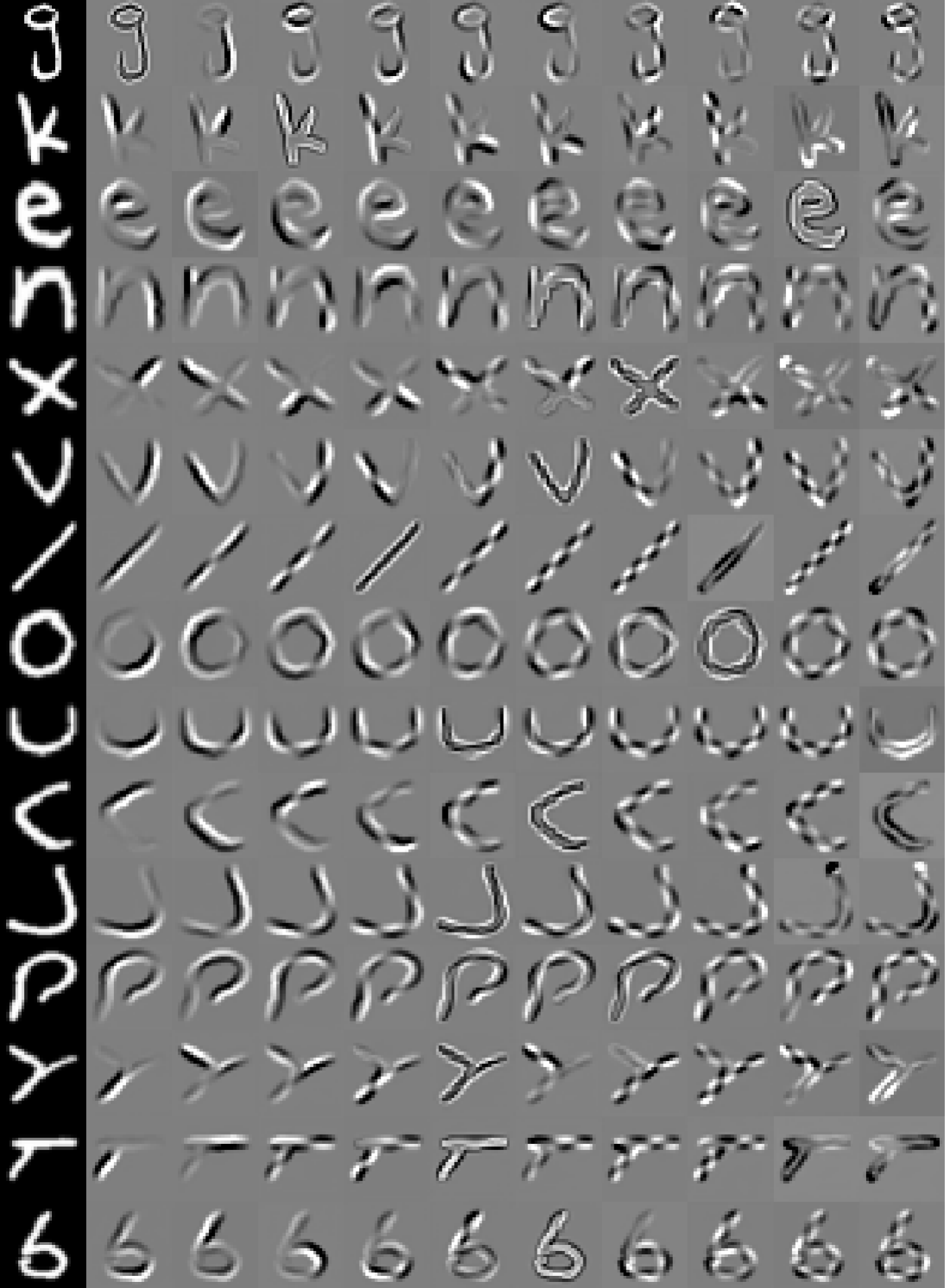}
    \caption*{\textcolor{reflink}{\textbf{(b)}}}
\end{subfigure}
\hfill
\begin{subfigure}[c]{0.32\linewidth}
    \centering
    \includegraphics[width=\linewidth]{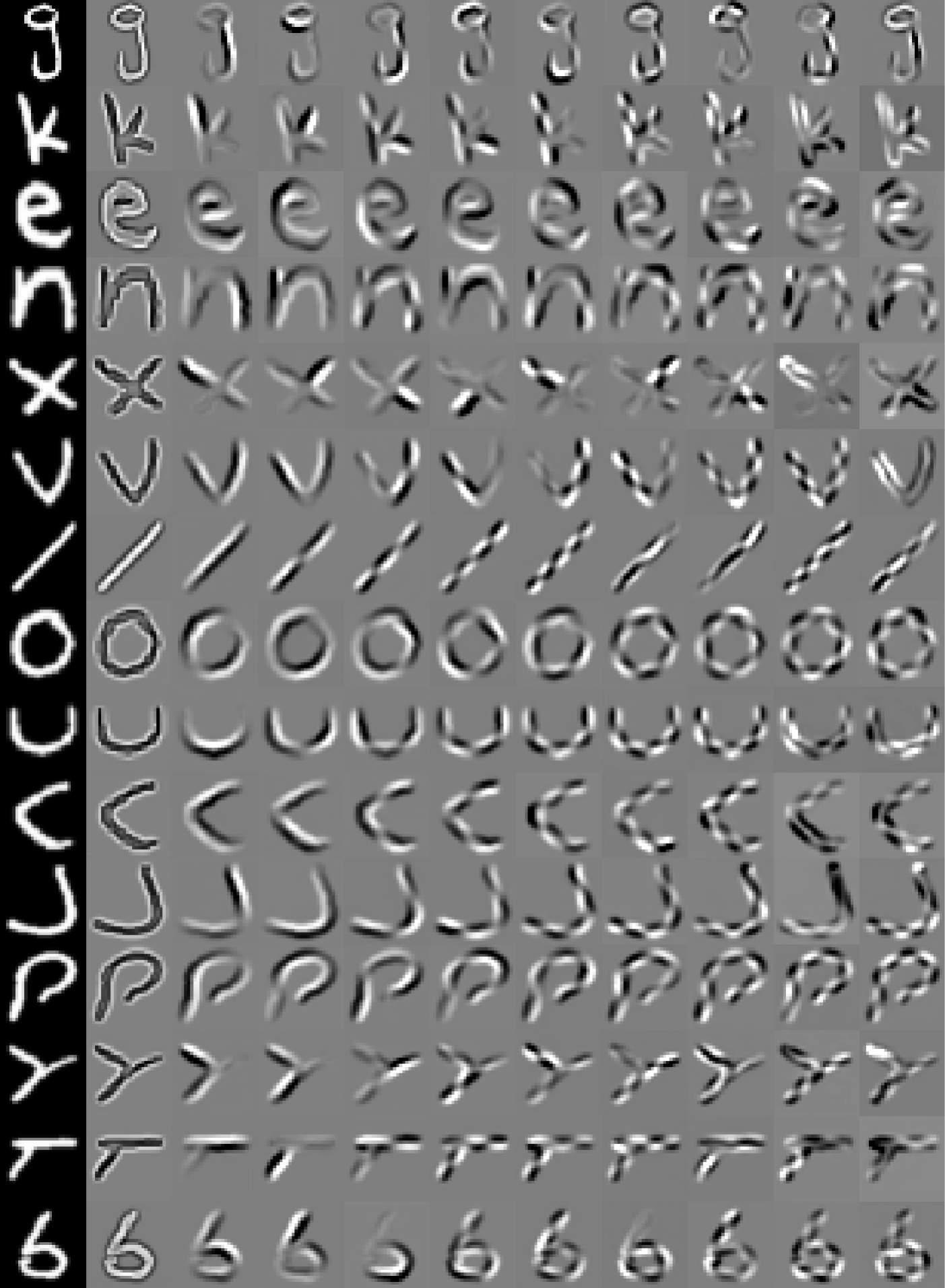}
    \caption*{\textcolor{reflink}{\textbf{(c)}}}
\end{subfigure}

\vspace{-5pt}
\caption{\textbf{IGM eigenvectors across transition scales.}
Leading CDC eigenvectors of the MNIST-trained model,
evaluated on unseen EMNIST letters at
\textcolor{reflink}{\textbf{(a)}} $\tau=0.05$,
\textcolor{reflink}{\textbf{(b)}} $\tau=0.1$, and
\textcolor{reflink}{\textbf{(c)}} $\tau=0.2$.
Larger scales capture progressively larger,
morphology-aligned variations.}
\label{fig:emnist-scale}
\end{figure}

\begin{figure}[h]
\centering

\begin{subfigure}[c]{0.47\linewidth}
    \centering
    \includegraphics[width=\linewidth]{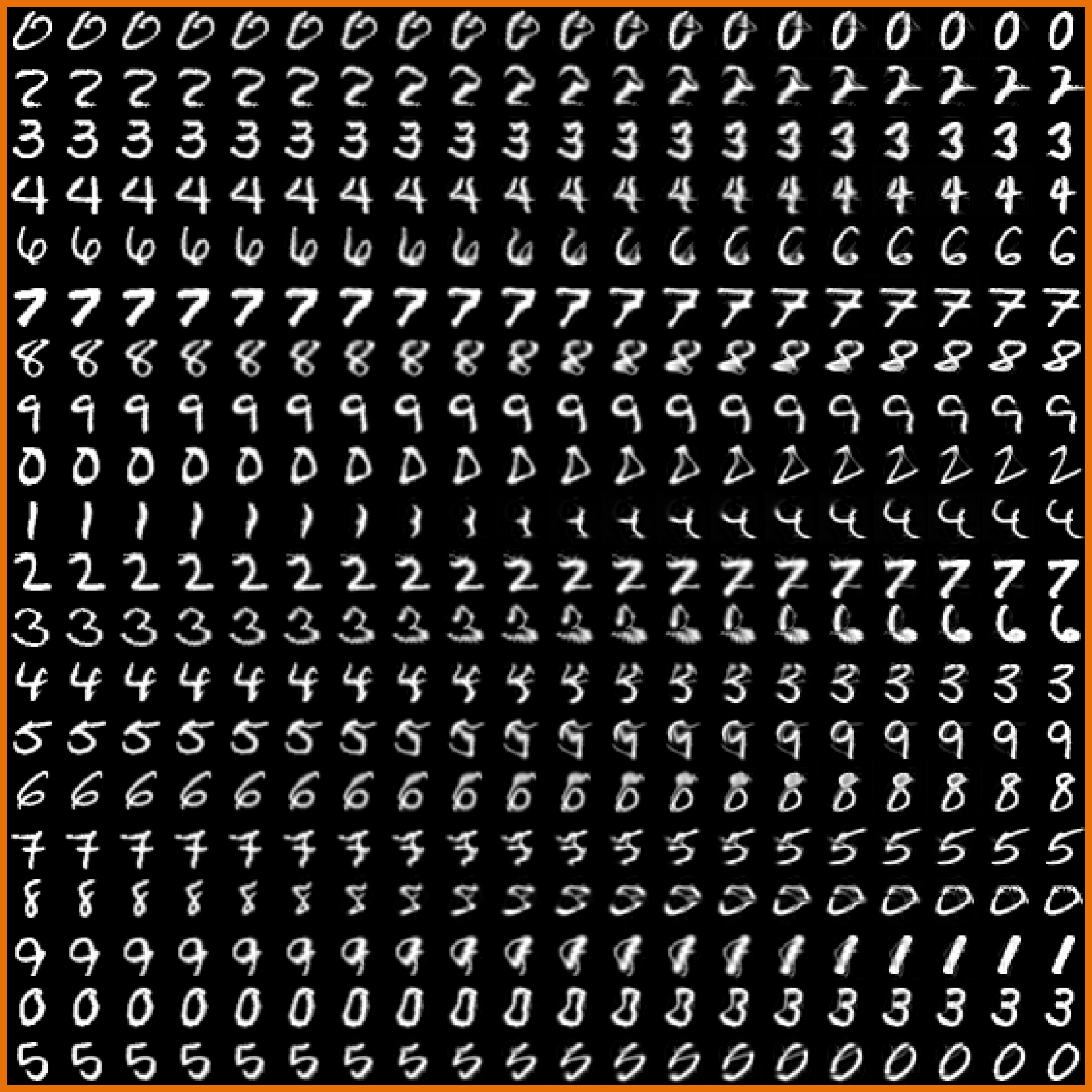}
    \caption*{\textcolor{reflink}{\textbf{(a)}}}
\end{subfigure}
\hfill
\begin{subfigure}[c]{0.47\linewidth}
    \centering
    \includegraphics[width=\linewidth]{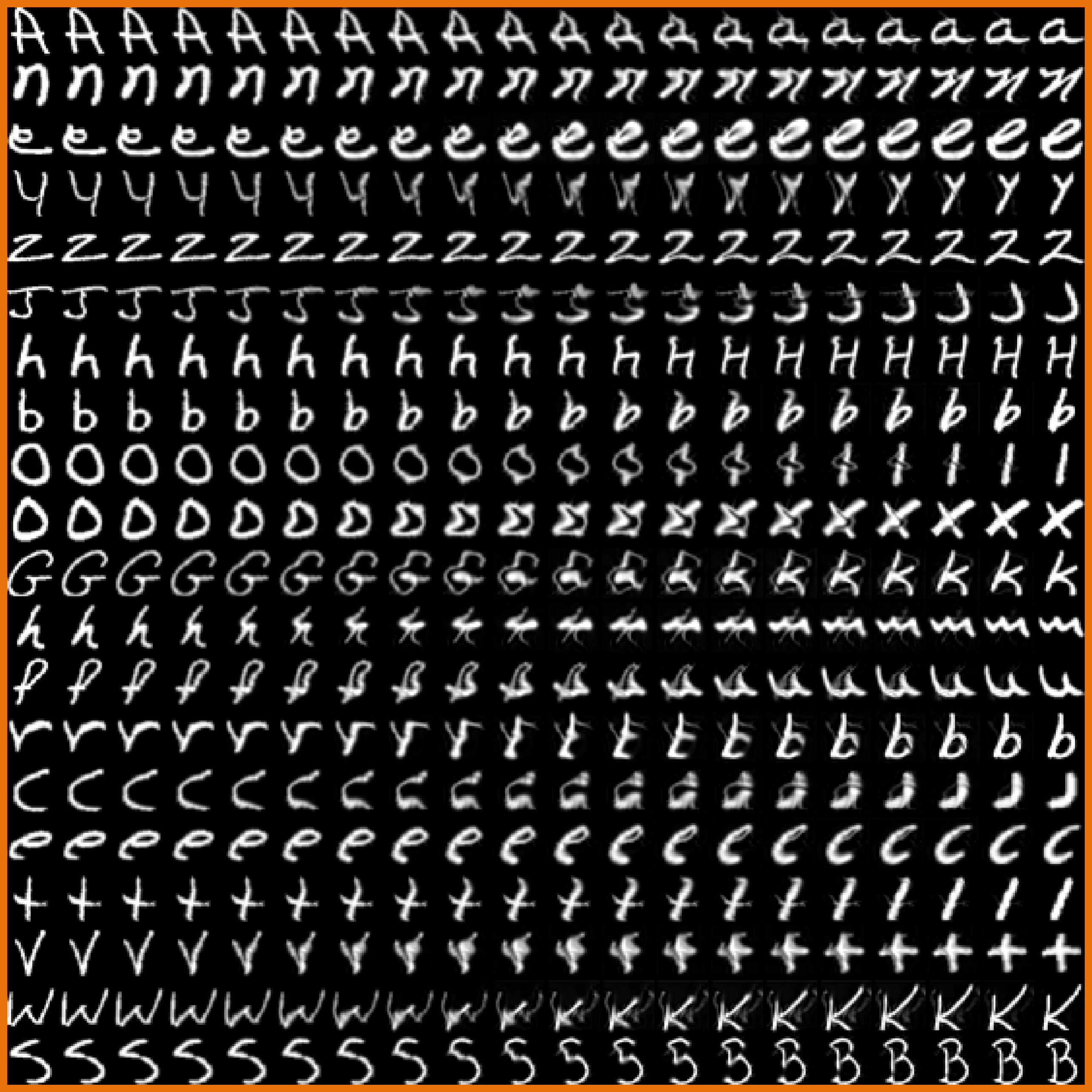}
    \caption*{\textcolor{reflink}{\textbf{(b)}}}
\end{subfigure}

\vspace{-5pt}
\caption{\textbf{Additional IGM interpolation paths.}
Paths between randomly sampled source--target pairs
from held-out images in \textcolor{reflink}{\textbf{(a)}} MNIST and \textcolor{reflink}{\textbf{(b)}} EMNIST. The MNIST-trained model produces smooth, structured transformations on both datasets, without fine-tuning on EMNIST.}
\label{fig:additional-image-interpolations}
\end{figure}

\subsection{Natural image geometry from videos}
\label{app:videos}

\paragraph{Data and transition process.}
We use the processed CelebV-HQ clips \citep{zhu2022celebvhq}, decoding RGB frames with their original presentation timestamps and resizing them to $64\times64$, giving ambient dimension $D=3\cdot64^2=12{,}288$. 
We use a $95\%$/$5\%$ split for training and validation.
Excluding clips longer than $20$ seconds leaves $33{,}594$ training clips and $1{,}825$ validation clips.

IGM is trained on forward pairs $(X,Y)=(x_t,x_{t+\tau})$ from a single clip,
with $\tau\in[0.04,0.5]$ seconds. 
We pick a source frame uniformly among those with an eligible successor, sample a lag $\tau$ log-uniformly in the range, and choose the successor closest
in log-lag.
We condition the network and normalise the loss with the actual timestamp difference in seconds, rather than the requested lag or a frame-index offset.
In each training epoch, we sample just one pair per clip, so use a negligible fraction of the available training data even when training for over 1000 epochs.

\paragraph{Gaussian perturbation baseline.}
LD-RMM uses still frames from the same training clips, with one uniformly sampled frame per selected clip. For a clean frame $X$, we generate
\begin{equation}
  \widetilde X=X+\sqrt{2b}\,\xi,
  \qquad \xi\sim\mathcal N(0,I_D),
  \qquad b\sim\operatorname{Unif}[4.5\times10^{-6},1.125],
  \label{eq:celebv-gaussian-kernel}
\end{equation}
so the corresponding noise standard deviation ranges from $0.003$ to $1.5$ in $[0,1]$ pixel units. 
LD-RMM learns the mean-centred posterior field
$\operatorname{Cov}[X\mid\widetilde X=\widetilde x,b]/(2b)$.
The forward Gaussian field would be the constant $I_D$ and would contain no image-dependent geometry.

\paragraph{Field estimation and training.}
Both estimators use the same scale-conditioned U-Net, with base width $64$,
channel multipliers $(1,2,3,4)$, two residual blocks per encoder resolution,
and three per decoder resolution. Single-head self-attention is applied at
downsampling factor $8$ and in the middle block.
To condition the residual blocks on time, we form a $64$-dimensional sinusoidal embedding of the scale (divided by $0.04$, which represents one frame at 25 frames per second) and apply a two-layer MLP of width $256$ to condition the residual blocks.
The output has $3(r+1)$ spatial channels: one RGB displacement mean
$a_\tau^\theta(q)$ and $r$ RGB factor rows, flattened to
$F_\tau^\theta(q)\in\mathbb R^{r\times D}$. We represent
\begin{equation}
  G_\tau^\theta(q)
  =F_\tau^\theta(q)^\top F_\tau^\theta(q)+\epsilon^\theta I_D.
  \label{eq:celebv-field}
\end{equation}
Here $\epsilon^\theta=\operatorname{softplus}(\rho)+10^{-8}$ is a single learned
scalar shared across images and scales, initialized at $0.01$. 
So $r$ is the rank of the structured component, while the complete covariance is full rank.
For IGM, $q=X$ and $a_\tau^\theta(X)$ predicts $Y-X$, so the endpoint mean is $m_\tau^\theta(X)=X+a_\tau^\theta(X)$.
The mean and factor rows share the U-Net and are learned jointly.

For the objective, we use the $\lambda$-regularised LogDet, which corresponds to Gaussian cross-entropy.
To specify the objective, let $d$ be the target displacement, $u$ its training
scale ($\tau$ for IGM and $b$ for LD-RMM), and
$z=(d-a_u^\theta(q))/\sqrt{2u}$.
Writing $C_\lambda=G_u^\theta(q)+\lambda I_D$, we define sample loss as cross-entropy in bits/dimension:
\begin{equation}
 \begin{aligned}
 \ell_\lambda(q,d,u)=\frac{1}{2D\log 2}\Big[
 &D\log(2\pi)+D\log(2u)+\log\det C_\lambda\\
 &+z^\top C_\lambda^{-1}z
 +\lambda\operatorname{tr}(C_\lambda^{-1})\Big].
 \end{aligned}
 \label{eq:celebv-ce}
\end{equation}
The smoothing $\lambda$ is added to both the model CDC and the rank-one
target $zz^\top$, producing the trace term in \Cref{eq:celebv-ce}.
When $\lambda=0$, this is just Gaussian NLL.

For the rank study in \cref{fig:video_results}, we train
$r\in\{16,32,64,128,256,512\}$ with $\lambda=0.01$ for $60{,}000$
optimizer steps. We use Adam with $(\beta_1,\beta_2)=(0.9,0.999)$,
$\epsilon=10^{-8}$, zero weight decay, and constant learning rate
$5\times10^{-5}$.
Each update uses $384$ sampled clips, contributing one real transition per clip for IGM or one corrupted still frame per clip for LD-RMM. 
Comparisons use the final model after $60{,}000$ steps, rather than selecting a checkpoint by its best score. 

\paragraph{Gaussian NLL and evaluation smoothing.}
The score for transition $i$ is the exact negative
log-density of $d_i$ under the Gaussian surrogate
\begin{equation}
 d_i\mid q_i,\tau_i\ \sim
 \mathcal N\!\left(a_{\tau_i}^\theta(q_i),\,
                    2\tau_iG_{\tau_i}^\theta(q_i)\right),
 \qquad
 \mathrm{NLL}=\frac{1}{N}\sum_{i=1}^N
                  \ell_0(q_i,d_i,\tau_i).
 \label{eq:celebv-nll}
\end{equation}
This is reported in bits per ambient dimension.
It measures mean prediction, covariance calibration, and directional structure together.
GGM is read at a fixed bandwidth of $\tau_i = 1.6 \times 10^{-3}$, chosen so the noise level matches the observed per-pixel displacement scale (about $0.05$ per pixel).
This choice is not very significant, since NLL and EV barely change over the trained bandwidth range (CE by $0.053$ bits/dim, EV by $0.014$).

\paragraph{Explained transition variance.}
Let $v_{ij}$ be the $j$th unit eigenvector of
$F_{\tau_i}^\theta(q_i)^\top F_{\tau_i}^\theta(q_i)$, ordered by decreasing
eigenvalue, and $P_{ik}=\sum_{j=1}^k v_{ij}v_{ij}^\top$.
For $k\leq r$, we report the explained variance as a fraction
\begin{equation}
 \mathrm{EV}(k)
 =\frac{\sum_{i=1}^N
                 \|P_{ik}(y_i-x_i)\|_2^2 (2\tau_i)^{-1}}
        {\sum_{i=1}^N
                 \|y_i-x_i\|_2^2(2\tau_i)^{-1}}.
 \label{eq:celebv-explained-variance}
\end{equation}
We aggregate the normalised energies over all transitions before dividing, rather than averaging per-pair fractions.
The statistic measures how much observed motion lies in the leading learned directions.
The profile is invariant to positive rescaling of the CDC at a fixed
readout and is unaffected by adding an isotropic ridge, which changes
eigenvalues but not eigenvectors. 
We report $k\in\{8,16,32,64,128,256,512\}$ up to the model rank.
Small $k$ probes the ordering of useful directions, whereas $k=r$ probes the coverage of the learned span. 

\paragraph{Isotropic reference scores.}
We add two isotropic references to the NLL figures, showing the best possible scores attainable by isotropic models, both homogeneous and inhomogeneous.
For an isotropic raw-displacement covariance $\sigma_i^2I_D$, let
$w_i=\|y_i-x_i\|_2^2/D$. 
The NLL in bits per dimension is
$[\log(2\pi)+\log\sigma_i^2+w_i/\sigma_i^2]/(2\log2)$.
For the \textbf{homogeneous model}, the optimal constant variance is $\bar w=N_b^{-1}\sum_i w_i$.
The \textit{inhomogeneous model} uses a per-transition scale $\sigma_i^2=w_i$: it knows each held-out transition's magnitude and is an \textit{oracle reference}, not an independently fitted predictor, but records the best score possibly attainable by an isotropic model.
We therefore have
\begin{equation}
 \mathrm{NLL}_{\mathrm{hom}}
 =\tfrac12\log_2(2\pi\mathrm e\,\bar w),
 \qquad
 \mathrm{NLL}_{\mathrm{inhom}}
 =\frac{1}{2N_b}\sum_{i=1}^{N_b}\log_2(2\pi\mathrm e\,w_i).
 \label{eq:celebv-isotropic}
\end{equation}

\paragraph{Training-smoothing ablation.}
We fix $r=128$ and vary $\lambda_t\in\{0,0.01,0.05,0.1\}$.
As shown in \Cref{fig:celebv-lambda-t-ablation}, increasing IGM smoothing from $0$ to $0.1$ raises $\mathrm{EV}(k)$ for small $k$, concentrating more observed motion in the leading directions, while minimally affecting NLL.
We visualise the CDC eigenvectors in for $\lambda=0$ (\cref{fig:celebv_tangents_0.0}) and $\lambda=0.1$ (\cref{fig:celebv_tangents_0.1}), which show the effect of $\lambda$ in reordering the leading CDC directions.

\begin{figure}[h]
  \centering

    \begin{minipage}[t]{0.3\linewidth}
        \centering
        \includegraphics[width=\linewidth]
            {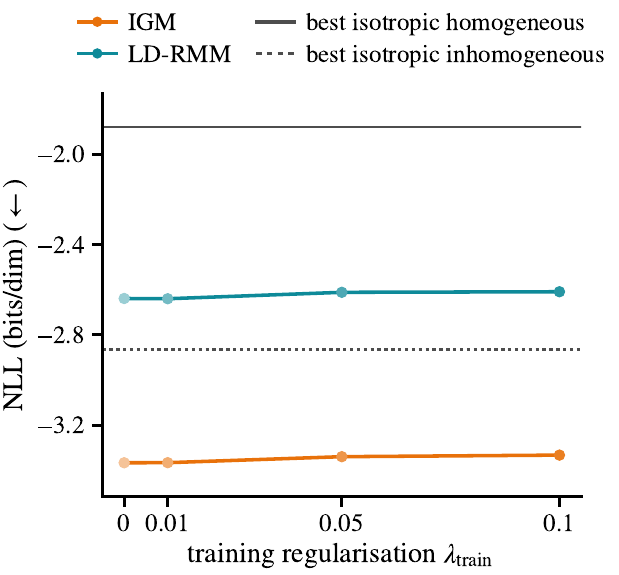}\par
        \vspace{0.15em}
        \makebox[\linewidth][l]{%
            \hspace*{0.58\linewidth}%
            \makebox[0pt][c]{\small\textcolor{reflink}{\textbf{(a)}}}%
        }
    \end{minipage}
    \begin{minipage}[t]{0.3\linewidth}
        \centering
        \includegraphics[width=\linewidth]
            {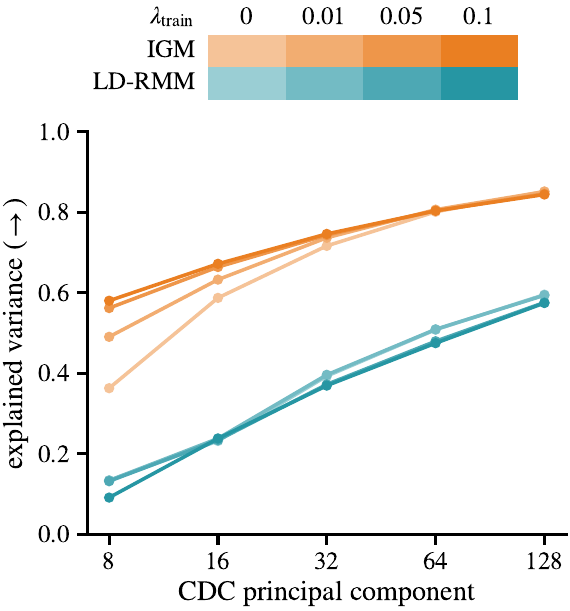}\par
        \vspace{0.15em}
        \makebox[\linewidth][l]{%
            \hspace*{0.58\linewidth}%
        \makebox[0pt][c]{\small\textcolor{reflink}{\textbf{(b)}}}%
        }
    \end{minipage}
    \vspace{-0.5em}
    
  \caption{\textbf{Training-smoothing ablation on CelebV-HQ.}
  Left: validation NLL, varying training smoothing $\lambda$ at rank $128$.
  Right: explained transition variance in the leading $k$ CDC directions.
  \textcolor{cdcorange}{\textbf{Orange}}: IGM;
\textcolor{lerpteal}{\textbf{teal}}: LD-RMM.
  Isotropic NLL references are defined in
  \cref{eq:celebv-isotropic}.}
  \label{fig:celebv-lambda-t-ablation}
\end{figure}

\clearpage
\begin{figure}[p]
    \centering
    \includegraphics[width=1\linewidth]{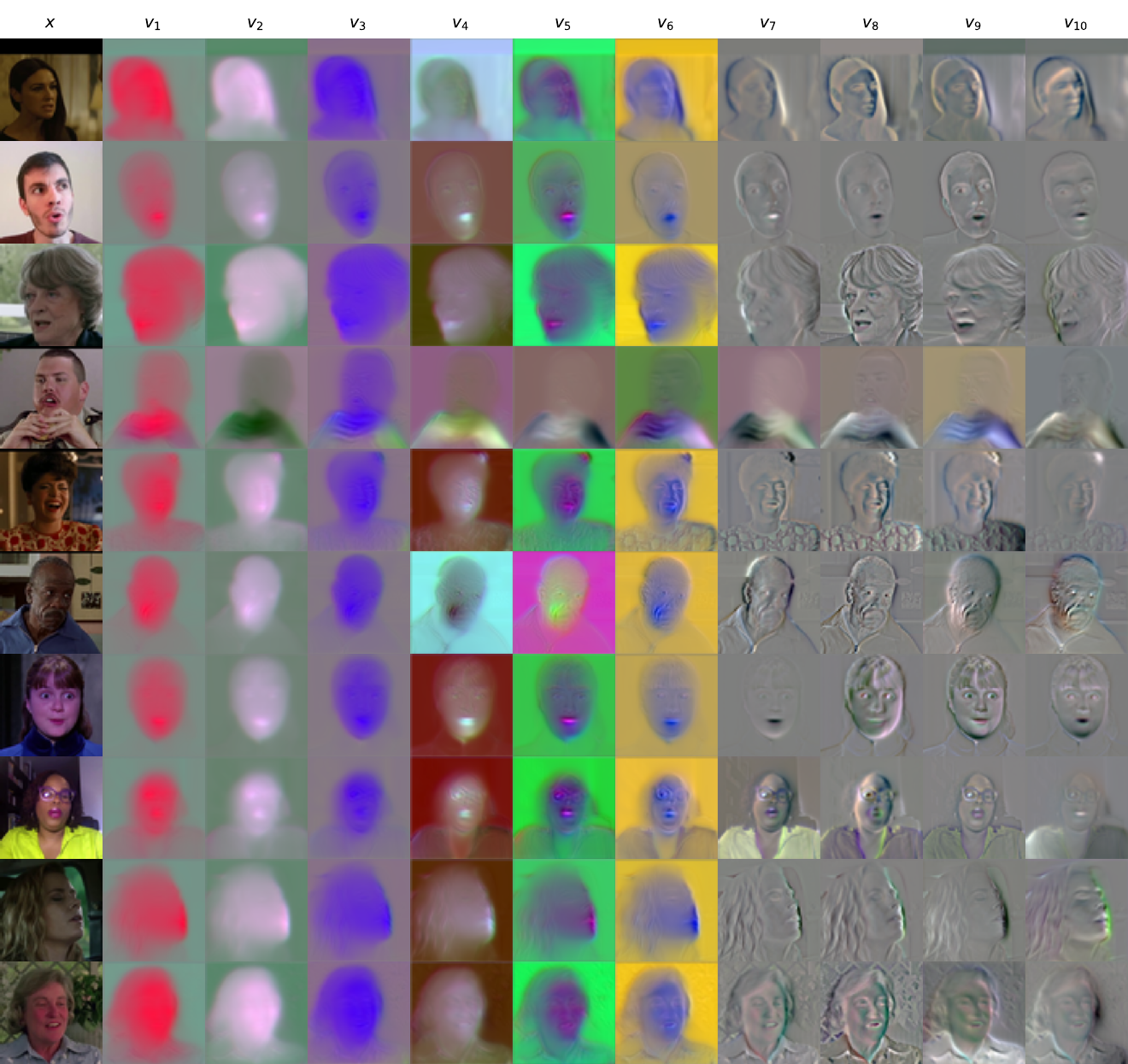}
    \caption{\textbf{Intrinsic IGM geometry of images, with unregularised LogDet.}
    Leading CDC eigenvectors on unseen validation images, with no training smoothing.
    }
    \label{fig:celebv_tangents_0.0}
\end{figure}
\clearpage
\clearpage
\begin{figure}[p]
    \centering
    \includegraphics[width=\linewidth]{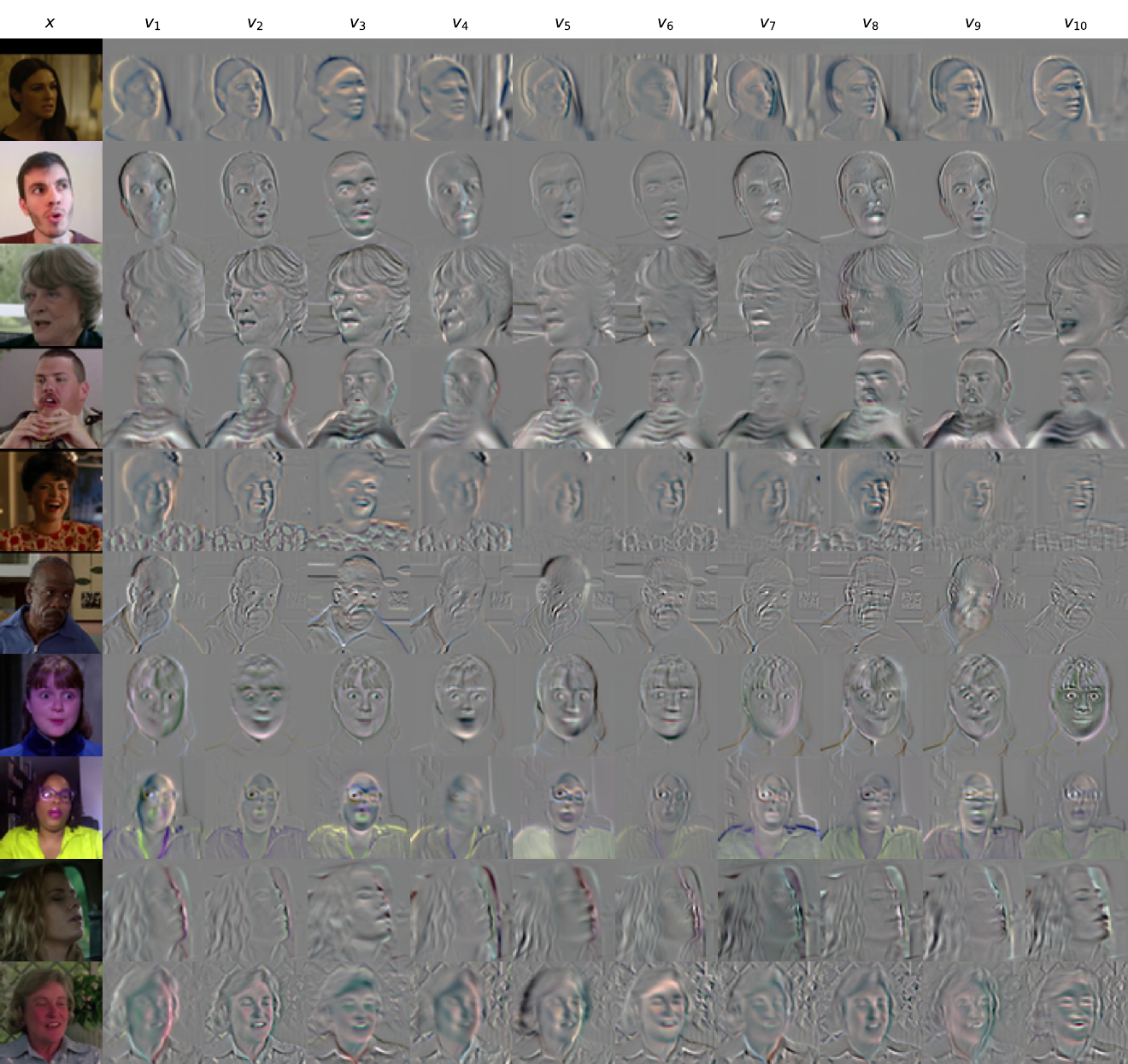}
    \caption{\textbf{Intrinsic IGM geometry of images, with LogDet regularisation $\lambda = 0.1$.}
    Leading CDC eigenvectors on unseen validation images, with training smoothing $\lambda = 0.1$.
    }
    \label{fig:celebv_tangents_0.1}
\end{figure}
\clearpage
\subsection{Equipping latent representations with process geometry}
\label{app:md}

\paragraph{Data and representation.}
We use $200$ dynamic importance sampling trajectories of adenylate kinase (AdK) \citep{perilla2011dims,beckstein2009adk}, comprising $19{,}691$ frames sampled at $1\,\mathrm{ps}$ intervals and split by trajectory into $160/20/20$ training, validation, and test trajectories.
AdK consists of a relatively stable CORE and the mobile \textcolor{lidgreen}{\textbf{LID}} and \textcolor{nmpblue}{\textbf{NMP}} domains, whose relative motion characterises the open--closed transition \citep{beckstein2009adk}.
Training uses all valid within-trajectory pairs $(X_t,X_{t+\tau})$ at integer lags $\tau\in\{1,\ldots,32\}\,\mathrm{ps}$, giving $419{,}136$ transitions.
We freeze the pretrained SLAE encoder $\Phi$ and decoder $\Psi$ \citep{chen2025slae}.
SLAE represents each residue by an embedding of its local all-atom environment and decodes the resulting embeddings back to molecular coordinates.
The encoder gives $Y_t=\Phi(X_t)\in\R^{214\times128}$, or $D=27{,}392$ latent coordinates, which we standardise using training-set statistics before fitting the transition models.

\paragraph{Architecture and training.}
The mean and CDC networks preserve the $214\times128$
residue structure through residue-shared projections,
learned positional embeddings, and residual blocks
combining LayerNorm, four-head self-attention, and
FiLM-conditioned MLPs with a sinusoidal embedding of $\tau$.
The mean network uses width $128$ and three residual
blocks; the CDC network uses width $256$ and four.
We first fit the conditional mean $m_\tau^\phi(Y_t)\approx\E[Y_{t+\tau}\mid Y_t]$ with Adam at learning rate $2\times10^{-4}$, batch size $64$, $200$ batches per epoch, zero weight decay, and gradient clipping at norm $1$, for at most $400$ epochs with patience $20$ on validation displacement.
We then freeze $m^\phi$ and train CDCs of rank $r\in\{16,32,64,128,256,512\}$ on the centred residuals $Y_{t+\tau}-m_\tau^\phi(Y_t)$.
This two-stage procedure has the same population optimum as joint Gaussian regression, while allowing the mean to be fitted once and shared across the rank sweep.
The CDCs use IGM with  low-rank parametrisation $G_\tau^\theta=M_\tau^\theta(M_\tau^\theta)^\top+\epsilon^\theta I$.  Adam at learning rate $10^{-4}$, and otherwise the same batch construction, for at most $300$ epochs with patience $15$.
Checkpoints are selected by Gaussian NLL on $4096$ validation transitions, and $\epsilon^\theta$ is initialized at $10^{-2}$.

\paragraph{Held-out transition geometry.}
For \Cref{fig:md}\textcolor{reflink}{\textbf{(a)}},
we evaluate all ranks on the same $4096$ transitions
from the $20$ held-out test trajectories.
We report Gaussian NLL as in \eqref{eq:celebv-nll},
scoring the successor state under
$\mathcal N(m_\tau^\phi(Y_t),2\tau G_\tau^\theta(Y_t))$
in bits per standardised latent dimension.
We compute $95\%$ confidence intervals by resampling
the $20$ test trajectories $10{,}000$ times, keeping
transitions from each trajectory together.

Unlike the uncentred video score in
\eqref{eq:celebv-explained-variance}, we measure
residual variability after subtracting the predicted
conditional mean.
For transition $i$, let
\[
\delta_i
=
\frac{Y_{t_i+\tau_i}-m_{\tau_i}^\phi(Y_{t_i})}
{\sqrt{2\tau_i}},
\]
and let $P_{ik}$ project onto the leading $k$
eigenvectors of $G_{\tau_i}^\theta(Y_{t_i})$.
For $k\leq r$, we report
\begin{equation}
\label{eq:md-explained-residual-variance}
\mathrm{EV}_{\mathrm{res}}(k)
=
\frac{\sum_i\|P_{ik}\delta_i\|_2^2}
{\sum_i\|\delta_i\|_2^2}.
\end{equation}
This measures the fraction of residual variability
captured by the leading CDC directions.

\paragraph{Endpoint-conditioned bridge paths.}
For \Cref{fig:md}\textcolor{reflink}{\textbf{(b,c)}}, we condition the learned transition model on the observed endpoints of each held-out trajectory.
Starting from the initial latent state $f_0=Y_0$, we first roll out the learned mean on a $16\,\mathrm{ps}$ grid,
\[
f_{k+1}=m_{\tau_k}^\phi(f_k),
\]
where $\tau_k=16\,\mathrm{ps}$ except for a shorter final step when required.
Along this free-mean path we freeze the predictive covariances
$Q_k=2\tau_kG_{\tau_k}^\theta(f_k)$.
These define a time-inhomogeneous additive Gaussian surrogate, whose exact conditional mean given the observed terminal state $Y_T$ is
\begin{equation}
\bar Y_j
=
f_j+
\left(\sum_{k<j}Q_k\right)
\left(\sum_{k<K}Q_k\right)^{-1}
(Y_T-f_K).
\label{eq:md-bridge}
\end{equation}
We solve the corresponding linear system in double precision and decode the resulting bridge with the frozen SLAE decoder.
For evaluation, predicted and observed paths are resampled to $80$ points by latent arclength and compared by discrete Fr\'echet distance using C$\alpha$ Kabsch-RMSD as the ground distance \citep{seyler2015psa}.
\Cref{eq:md-bridge} is the exact bridge mean of the frozen Gaussian surrogate, rather than of the nonlinear learned transition process.
For the exploratory path analysis, we focus on CDC ranks 32–128 and evaluate eight lags on $20$ validation trajectories. Rank $32$ at $16$ ps has the lowest observed mean Fréchet distance in this subset.

\paragraph{Transport to molecular coordinates.}
For \Cref{fig:md}\textcolor{reflink}{\textbf{(c)}}, bottom, we transport the learned CDC back to molecular coordinates through the frozen SLAE decoder $\Psi$.
Let $\Psi$ denote the map from standardised latent coordinates to CORE-aligned C$\alpha$ coordinates, including inverse latent standardisation and rigid alignment.
Following the observation-equivariance rule of \eqref{eq:conjugation_pushfoward_cdc}, the linearised predictive covariance at latent state $Y$ is
\begin{equation}
\Sigma_X(Y)
=
\bigl(D_{m_\tau^\phi(Y)}\Psi\bigr)
\bigl[2\tau G_\tau^\theta(Y)\bigr]
\bigl(D_{m_\tau^\phi(Y)}\Psi\bigr)^\top
\in\mathbb S_+^{3\cdot214}.
\label{eq:md-decoder-pushforward}
\end{equation}
For the main-figure ellipsoids, we use the selected
rank-$32$ CDC at $\tau=16\,\mathrm{ps}$ and visualise
each residue's $3\times3$ diagonal block as a
one-standard-deviation mobility ellipsoid.

\begin{figure}[h]
    \centering

    \makebox[\linewidth][c]{%
        \includegraphics[width=0.19\linewidth]{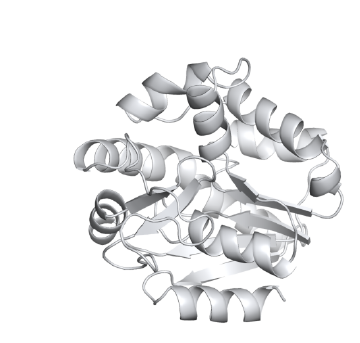}%
        \hfill
        \includegraphics[width=0.19\linewidth]{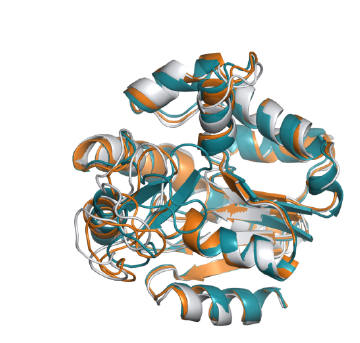}%
        \hfill
        \includegraphics[width=0.19\linewidth]{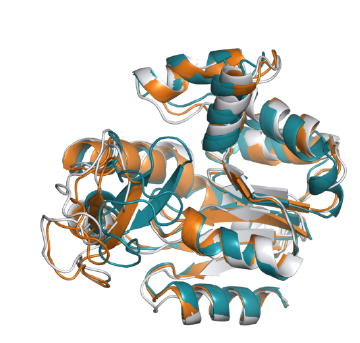}%
        \hfill
        \includegraphics[width=0.19\linewidth]{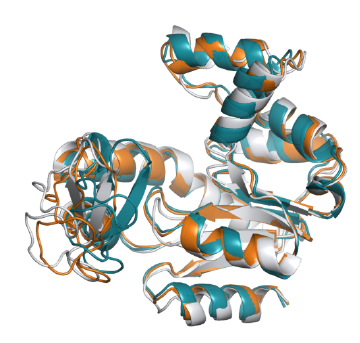}%
        \hfill
        \includegraphics[width=0.19\linewidth]{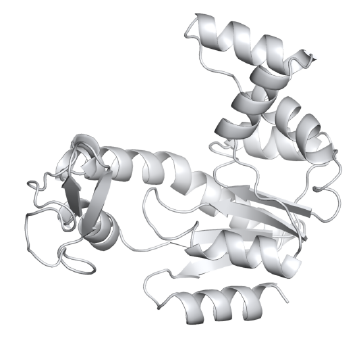}%
    }

    \vspace{0.35em}

    \makebox[\linewidth][c]{%
        \includegraphics[width=0.19\linewidth]{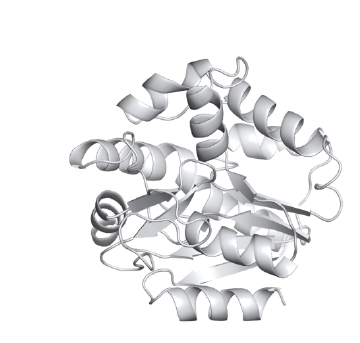}%
        \hfill
        \includegraphics[width=0.19\linewidth]{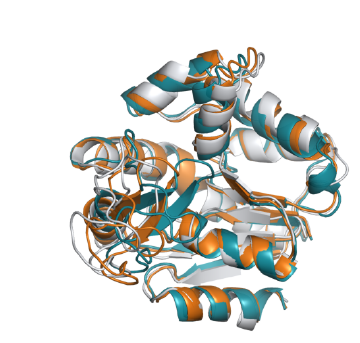}%
        \hfill
        \includegraphics[width=0.19\linewidth]{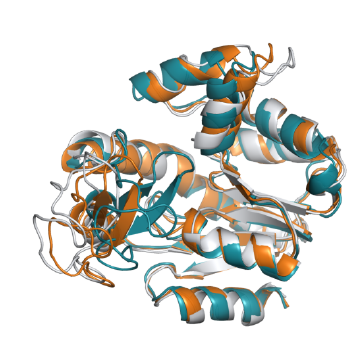}%
        \hfill
        \includegraphics[width=0.19\linewidth]{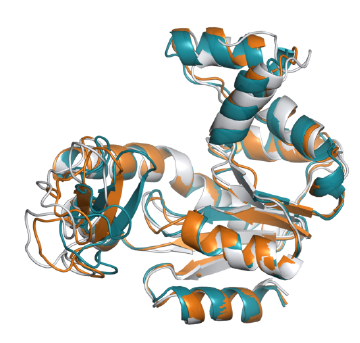}%
        \hfill
        \includegraphics[width=0.19\linewidth]{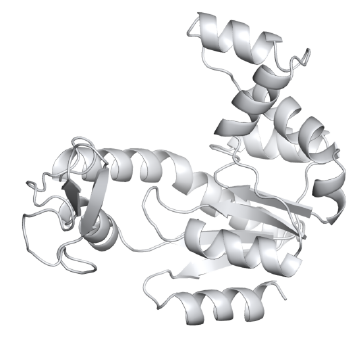}%
    }

    \vspace{-0.5em}
    \caption{\textbf{Decoded endpoint-conditioned paths.}
    Two held-out MD transitions (trajectories 111 and 145; top and bottom) compared with the decoded IGM bridge and latent linear interpolation (LERP) at matched path progress. Bridges use rank    $r=32$ and lag $\tau=16\,\mathrm{ps}$.
    \textcolor{cdcorange}{\textbf{Orange}}: IGM;
    \textcolor{lerpteal}{\textbf{teal}}: LERP;
    \textcolor{mdheld}{\textbf{grey}}: held-out MD.}
    \label{fig:md-app-decoded-paths}
\end{figure}

\begin{figure}[h]
    \centering
    \makebox[\linewidth][c]{%
        \includegraphics[width=0.19\linewidth]{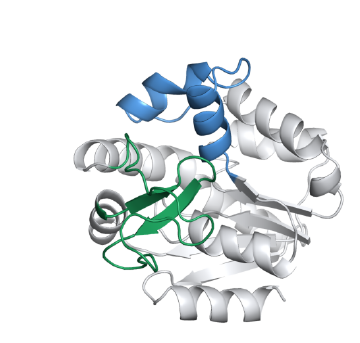}%
        \hfill
        \includegraphics[width=0.19\linewidth]{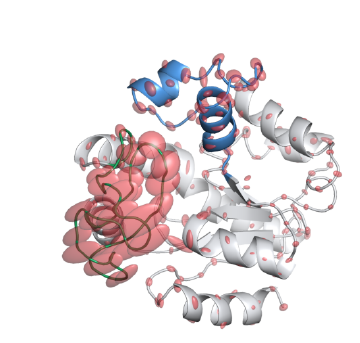}%
        \hfill
        \includegraphics[width=0.19\linewidth]{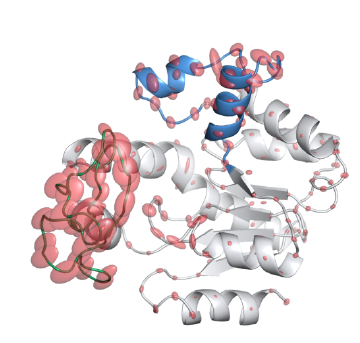}%
        \hfill
        \includegraphics[width=0.19\linewidth]{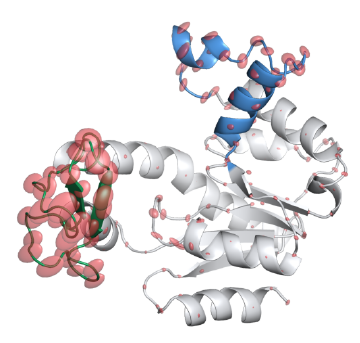}%
        \hfill
        \includegraphics[width=0.19\linewidth]{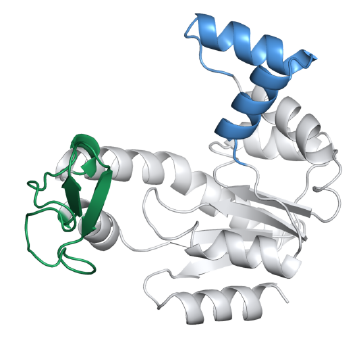}%
    }
    \vspace{0.35em}
    \makebox[\linewidth][c]{%
        \includegraphics[width=0.19\linewidth]{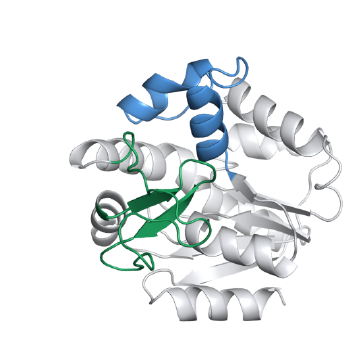}%
        \hfill
        \includegraphics[width=0.19\linewidth]{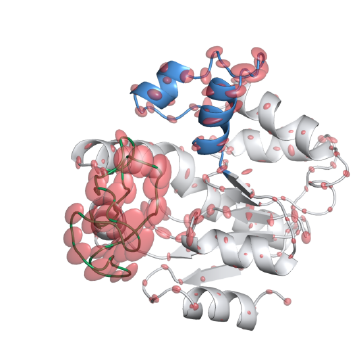}%
        \hfill
        \includegraphics[width=0.19\linewidth]{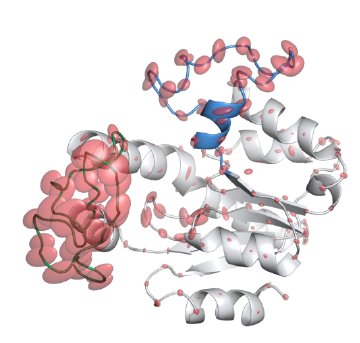}%
        \hfill
        \includegraphics[width=0.19\linewidth]{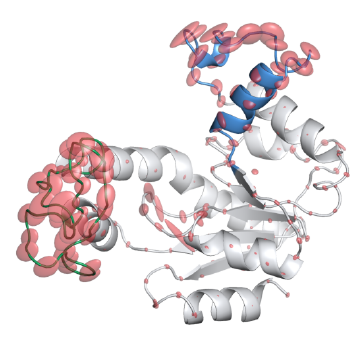}%
        \hfill
        \includegraphics[width=0.19\linewidth]{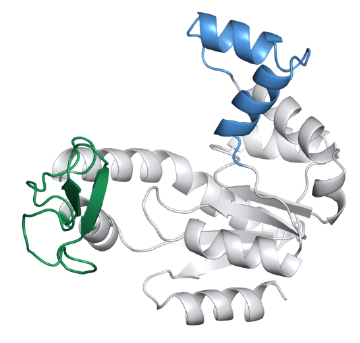}%
    }

    \vspace{-0.5em}
   \caption{\textbf{Learned process geometry in molecular coordinates.}
    Per-residue mobility along held-out trajectories
    $111$ (top) and $145$ (bottom), shown as
    one-standard-deviation ellipsoids.
    The \textcolor{lidgreen}{\textbf{LID}} and
    \textcolor{nmpblue}{\textbf{NMP}} domains exhibit greater
    predicted mobility than the
    \textcolor{corescaf}{\textbf{CORE}}.}
    \label{fig:md-app-ellipsoids}
\end{figure}

\clearpage
\begin{figure}
    \vspace{-1em}
    \centering
    \includegraphics[width=\linewidth]{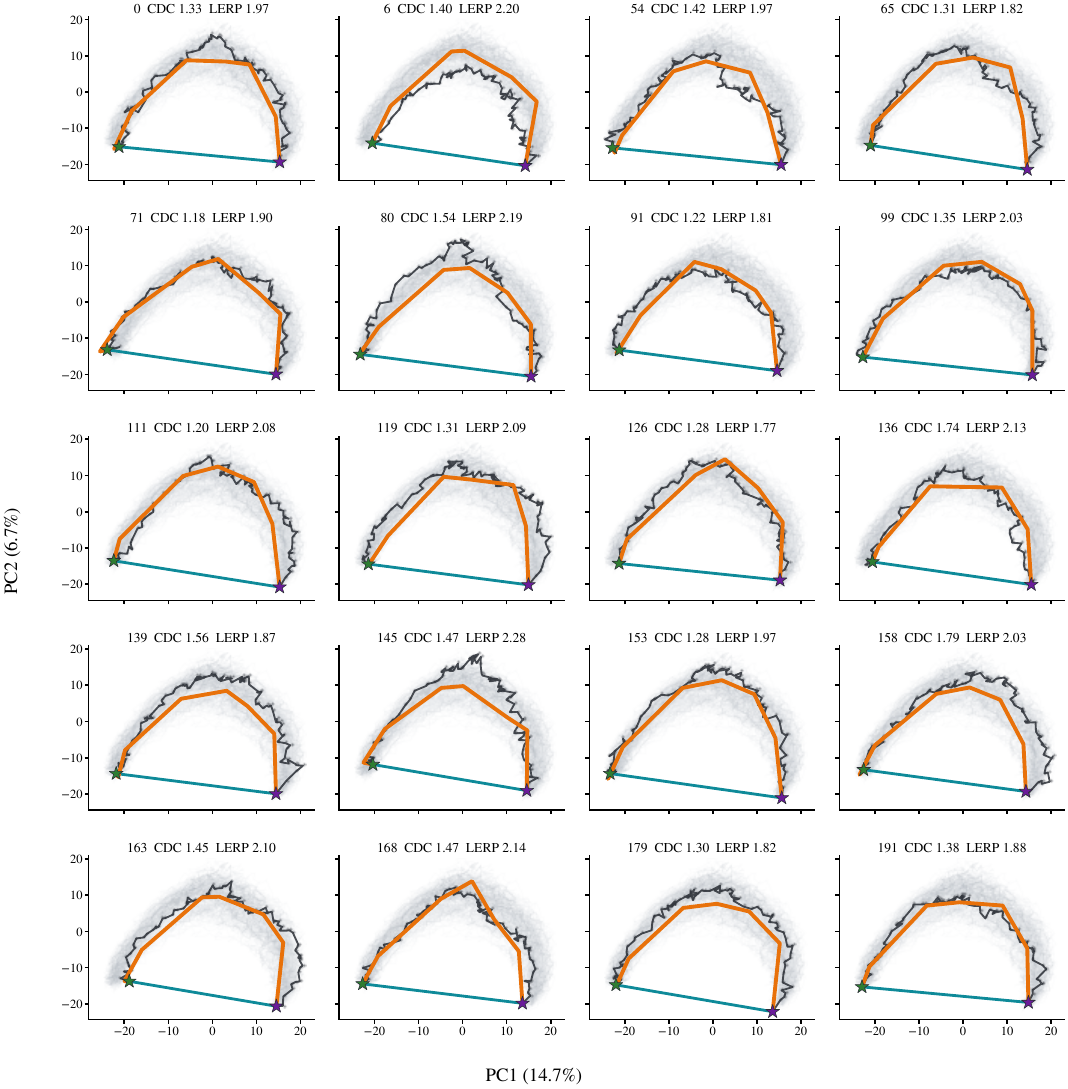}
    \caption{\textbf{Endpoint-conditioned paths on held-out MD trajectories.}
    Each panel compares a held-out trajectory
    (\textcolor{mdheld}{\textbf{grey}}), its IGM bridge
    (\textcolor{cdcorange}{\textbf{orange}}), and latent
    linear interpolation (LERP;
    \textcolor{lerpteal}{\textbf{teal}}) between the
    same closed (\textcolor{epclosed}{$\bigstar$}) and
    open (\textcolor{epopen}{$\bigstar$}) endpoints.
    Paths are projected onto the first two principal
    components of the training states.
    Headings report trajectory IDs and decoded discrete
    Fr\'echet distances in \AA, using C$\alpha$
    Kabsch-RMSD as the ground distance.
    IGM improves on LERP for all $20$ test trajectories,
    with mean distances of $1.418$ and $2.039$\,\AA,
    respectively.
    Rank $r=32$ and lag $\tau=16\,\mathrm{ps}$ were
    selected on the validation trajectories.}
    \label{fig:app-pca-panel}
    \vspace{-1em}
\end{figure}
\clearpage

\end{document}